%% file: icml2026_conference.tex
\documentclass{article}

\usepackage{microtype}
\usepackage{graphicx}
\usepackage{subcaption}
\usepackage{booktabs} %
\usepackage{multirow}

\usepackage{hyperref}

\usepackage[accepted]{icml2026}

\usepackage{amsmath}
\usepackage{amssymb}
\usepackage{mathtools}
\usepackage{amsthm}

\usepackage[capitalize,noabbrev]{cleveref}

\theoremstyle{plain}
\newtheorem{theorem}{Theorem}[section]
\newtheorem{proposition}[theorem]{Proposition}
\newtheorem{lemma}[theorem]{Lemma}

\theoremstyle{definition}
\newtheorem{definition}[theorem]{Definition}
\newtheorem{assumption}[theorem]{Assumption}
\theoremstyle{remark}

\usepackage[disable,textsize=tiny]{todonotes}

\icmltitlerunning{Gradient Regularization Mitigates Reward Hacking}

\usepackage{pgfplots}
\DeclareUnicodeCharacter{2212}{−}

\usepgfplotslibrary{groupplots,dateplot}
\usetikzlibrary{patterns,shapes.arrows,shapes.geometric,arrows.meta,calc,
positioning,decorations.pathreplacing,decorations.pathmorphing,plotmarks}
\pgfplotsset{compat=newest}

\pgfplotsset{
  every axis/.append style={
    tick label style={
        font=\footnotesize,
        /pgf/number format/1000 sep={},
    },
    label style={font=\footnotesize},
    },
}

\DeclareMathOperator{\expect}{\mathbb{E}}
\newcommand{\sharpset}{S_{K,\delta}}
\newcommand{\prew}{\widetilde{R}}
\usepackage{fvextra}

\usepackage{csquotes} 
\usepackage{dirtytalk}

\input{math_commands}

\usepackage[T1]{fontenc}

\usepackage{listings}
\usepackage{xcolor}
\renewcommand{\sectionautorefname}{Section}

\usepackage[cachedir=minted-cache]{minted}
\setminted{
  fontsize=\small,
  breaklines=true,
  linenos=false,
  frame=lines,
  framesep=6pt
}

\definecolor{noregcolor}{RGB}{31,119,180}
\definecolor{gradregcolor}{RGB}{255,127,14}
\definecolor{klcolor}{RGB}{44,160,44}
\definecolor{resetcolor}{RGB}{214,39,40}
\definecolor{sftcolor}{RGB}{148,103,189}

\begin{document}

\twocolumn[
    \icmltitle{Gradient Regularization Mitigates Reward Hacking \\ in
        Reinforcement Learning from Human Feedback and Verifiable Rewards}

    \icmlsetsymbol{equal}{*}

    \begin{icmlauthorlist}
        \icmlauthor{Johannes Ackermann}{utokyo,riken}
        \icmlauthor{Michael Noukhovitch}{mila,montreal}
        \icmlauthor{Takashi Ishida}{riken,utokyo}
        \icmlauthor{Masashi Sugiyama}{riken,utokyo}
    \end{icmlauthorlist}

    \icmlaffiliation{utokyo}{The University of Tokyo}
    \icmlaffiliation{riken}{RIKEN AIP}
    \icmlaffiliation{mila}{Mila}
    \icmlaffiliation{montreal}{Universit\'e de Montr\'eal}

    \icmlcorrespondingauthor{Johannes Ackermann}{ackermann@ms.k.u-tokyo.ac.jp}

    \icmlkeywords{Reinforcement Learning, Post-Training, Gradient Regularization, RLHF, Reasoning, RLVR, LLMs}

    \vskip 0.3in
]

\printAffiliationsAndNotice{}  %

\begin{abstract}
    Reinforcement Learning from Human Feedback (RLHF) or Verifiable Rewards (RLVR) are two key steps in the post-training of modern Language Models (LMs). A common problem is reward hacking, where the policy may exploit inaccuracies of the reward and learn an unintended behavior. Most previous works address this by limiting the policy update with a Kullback-Leibler (KL) penalty towards a reference model. We propose a different framing: Train the LM in a way that biases policy updates towards regions in which the reward is more accurate. First, we derive a theoretical connection between the accuracy of a reward model and the flatness of an optimum at convergence. Gradient regularization (GR) can then be used to bias training to flatter regions and thereby maintain reward model accuracy. We confirm these results by showing that the gradient norm and reward accuracy are empirically correlated in RLHF. 
    We then empirically show that Reference Resets of the KL penalty find flatter regions with a higher reward accuracy. We further improve on this by proposing to use explicit GR with an efficient finite-difference estimate. Empirically, GR performs better than a KL penalty across a diverse set of RL experiments with LMs. GR achieves a higher GPT-judged win-rate in RLHF, avoids overly focusing on the format in rule-based math rewards, and prevents hacking the judge in LLM-as-a-Judge math tasks. 
\end{abstract}

\section{Introduction}
\begin{figure*}[t]
    \centering
    \includegraphics[width=0.27\linewidth]{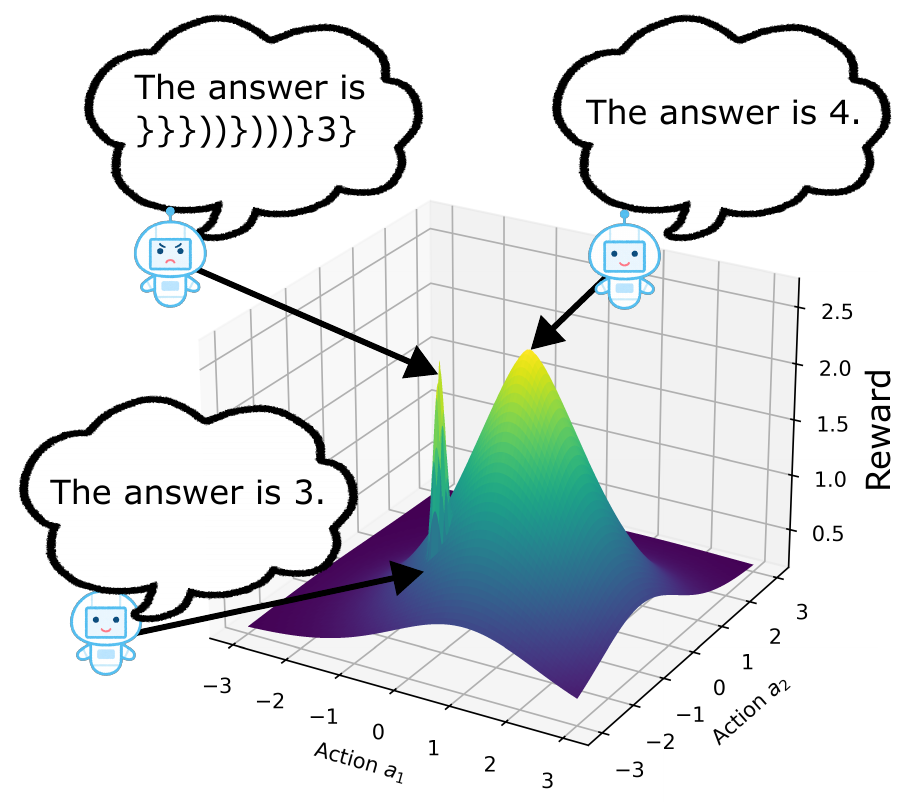}
    \hfill
    \newcommand{\figonewidth}{0.24\linewidth}
    \newcommand{\figoneheight}{4.5cm}
    \input{figures/llm_judge/llmjudge_threepanel_bothmethods.tex}
    \caption{We argue that reward hacking often corresponds to exploiting sharp maxima in action space, as illustrated by the conceptual figure (left).
        For example, an LLM judge may be confused and assign a high reward to a wrong answer with specific formatting.
        In the LLM-as-a-Judge training run shown on the right, the increase in gradient norm coincides with reward hacking, resulting in true reward collapsing.
        By using gradient norm regularization, we can prevent this issue and obtain a better model, as seen by the improved true reward.
        The examples show Qwen2.5-0.5B models trained on GSM8K with a Qwen2.5-1.5B-Instruct judge with access to the true answer.
    }
    \label{fig:figure2}
\end{figure*}
\input{intro_gradreg_rlhfvr}

\section{Background}

\subsection{Reinforcement Learning}
As is common in the RL post-training literature \citep{deepseek_grpo_2024,ahmadian_back_2024}, we consider an episode consisting of a single state $s$, representing the prompt, and a single action $a$, representing the reply.
We denote the state set by $\mathcal{S}$ and the action set by $\mathcal{A}$.
As policy $\pi_\phi$ with parameters $\phi$, we consider an autoregressive LM with conditional probability $\pi_\phi(a| s)=\prod_t\pi_\phi(\mathfrak{a}_t|s,\mathfrak{a}_{<t})$, where $\mathfrak{a}_t$ is the $t$-th response token, $\mathfrak{a}_{<t}$ refers to the response tokens before $\mathfrak{a}_t$ and $a$ refers to the entire response.
We assume that there is a true reward function $R^*: \mathcal{S} \times \mathcal{A} \to \mathbb{R}$ and our goal is to maximize the return of the policy $\pi_\phi$ on this reward $J^*(\phi)=\expect_{s\sim P(s), a\sim\pi_\phi}[R^*(s,a)]$.
However, we do not have access to $R^*$.
Instead, we use a proxy reward (PR) $\widetilde{R}$, and train the policy to maximize its return $J(\phi,\theta)=\expect_{s\sim P(s), a\sim\pi_\phi}[\prew_\theta(s,a)]$.
To optimize the return we use REINFORCE policy gradient updates \mbox{\citep{williams_simple_1992}}
\begin{equation}
    \begin{aligned}
        \nabla_\phi J(\phi,\theta)
         & =
        \\
        \expect_{s\sim P(s), a\sim \pi_\phi}
         & \left[(\prew_\theta(s,a)-b(s))\nabla_\phi\log\pi_\phi(a|s)\right] \,,
    \end{aligned}
    \label{eq:policygrad}
\end{equation}
where $b(s)$ is a baseline.
In this work, we use a variant of Group Relative Policy Optimization (GRPO) \citep{deepseek_grpo_2024} called GRPO Done Right (Dr.GRPO) \citep{liu_understanding_2025}, which removes normalization terms from GRPO.
As we do not do multiple updates per batch, Dr.GRPO simplifies to using REINFORCE with the baseline being the sample average over multiple actions drawn for the same state, i.e., $b(s)=\frac{1}{N}\sum_{i=1}^N \prew(s,a^i)$, with $a^i\sim\pi_\phi(a|s)$.
GRPO was originally presented with a KL penalty $D_\mathrm{KL}(\pi_\phi;\pi_{\phi^1})$, as commonly used in RLHF \citep{stiennon_learning_2020}.
This KL-penalty, weighted by a hyper-parameter $\beta$, is intended to keep the model close to the initial policy $\pi_{\phi^1}$, preventing reward hacking \citep{stiennon_learning_2020}, but is sometimes omitted in more recent works \citep{team_glm-45_2025, olmoteam_olmo_2025}.

\subsection{Proxy Rewards}
We consider three types of PRs:
Trained reward models $R^\theta$, rule based rewards $R^\mathrm{R}$, and LLM-as-a-Judge rewards $R^\mathrm{LM}$.

\paragraph{Reward Models}
(RMs) are commonly used in RLHF \citep{christiano_deep_2017,stiennon_learning_2020} to represent complex preferences which are hard to turn into rule-based rewards.
They generally assume the Bradley-Terry (BT) model of preference \citep{bradley_rank_1952} where the probability $P$ of preferring one option $a_1$ over another option $a_0$ is the logistic function ${\sigma(x)= 1 / (1+e^{-x})}$ of the difference of the true reward for each action, i.e.,
\begin{equation}
    P(a_1 > a_0\mid s) = \sigma\left(R^*(s,a_1) - R^*(s,a_0)\right) \,.
    \label{eq:bradleyterry}
\end{equation}
Pairs of responses $(a_0,a_1)$ for each prompt $s$ are collected from an initial model $\pi_{\phi^1}$.
Human annotators then choose their preferred (winning) reply $a_\mathrm{w}$ and not preferred (losing) reply $a_\mathrm{l}$, creating a dataset $\mathcal{D}_\mathrm{RM}=\{(s^j,a_\mathrm{w}^j,a_\mathrm{l}^j)\}_{j=1}^{{N_\mathrm{RM}}}$.
We can then use the BT assumption to train an RM $R_\theta : \mathcal{S} \times \mathcal{A} \to \mathbb{R}$ with parameters $\theta\in\Theta$ by minimizing the cross-entropy loss based on the BT model:
\begin{equation}
    \begin{aligned}
         & \mathcal{L}_\mathrm{BT}(\theta,\phi)
        =                                                             \\
         & -\expect_{(s,a_\mathrm{w},a_\mathrm{l}) \sim P_{\pi_\phi}}
        [\log\sigma(R_\theta(s,a_\mathrm{w})-R_\theta(s,a_\mathrm{l}))] \,,
    \end{aligned}
    \label{eq:bradleyterryCrossentropyLoss}
\end{equation}
where $P_{\pi_\phi}=P(s)\pi_\phi(a_0|s)\pi_\phi(a_1|s)P(a_1>a_0|s)$ is the probability of the $(s,a_\mathrm{w},a_\mathrm{l})$ triplet under the policy $\pi_\phi$.
To train the RM we minimize $\mathcal{L}_\mathrm{BT}(\theta,\phi^1)$ and the expectation is replaced by the sample average from $\mathcal{D}_\mathrm{RM}$.

\paragraph{Rule-Based Rewards}
\citep{havrilla_teaching_2024} are deterministic checks whether a final answer matches the ground truth.
To encourage reasoning, and to allow a discrimination of a final answer against occurrences of the answer during reasoning, rule-based rewards typically require the answer to follow a specific format, for example using the LaTeX tag \verb+\boxed{}+ \citep{hendrycks_measuring_2021} or the HTML tags \citep{deepseek-ai_deepseek-r1_2025} \verb+<think>...</think><answer>...</answer>+.
Thus, oftentimes one reward term $R^\mathrm{F}$ checks whether the format matches and another reward term $R^\mathrm{C}$ checks for correctness, to give the rule-based reward $R^\mathrm{R}=R^\mathrm{C}+R^\mathrm{F}$.

\paragraph{LLM-as-a-Judge}
\citep{zheng_judging_2023} prompts an LLM and uses its textual output to check whether an answer is correct. Frequently designing a rule-based reward can be challenging due to many possible accurate solutions, e.g., the correctness of a proof or many equivalent ways to write a math answer.
Instead, prompting an LLM-as-a-Judge with a description of scoring criteria, the question, and the correct answer (if available) can allow us to capture solutions more robustly.
LLM-as-a-Judge can also allow for more complicated rewards with a combination of objective (e.g., correct reply) and subjective (e.g., clear reasoning) criteria.

\subsection{Gradient Regularization (GR)}
Flat minima of a loss function $\mathcal{L}(\phi)$ are connected to better generalization in supervised learning \cite{hochreiter_flat_1997,foret_sharpness-aware_2021}, i.e., a smaller difference between the population/test loss $\mathcal{L}(\phi)=\E_{(x,y)\sim P(x,y)}[\ell(f_\phi(x),y)]$ and
its empirical approximation
on a finite training dataset $\mathcal{D}$, consisting of i.i.d. input-label pairs $(x,y)\sim P(x,y)$, for a loss function $\ell$.
A way to obtain flat minima is by regularizing the gradient norm of the objective $\mathcal{L}$, i.e., the squared Euclidean norm $\|\nabla_\phi \mathcal{L}(\phi)\|^2$  \citep{zhao_penalizing_2022}.
Adding this term to our loss function, we need to calculate its gradient, which can be approximated with a parameter perturbation \citep{karakida_understanding_2023}
\begin{equation}
    \Delta_\phi\|\nabla_\phi \mathcal{L}(\phi)\|^2 =
    \frac
    {\nabla_\phi \mathcal{L}(\phi+\varepsilon\nabla_\phi \mathcal{L}(\phi)) - \nabla_\phi \mathcal{L}(\phi)}
    {\varepsilon}\,.
    \label{eq:gradreg_fd_estimate}
\end{equation}
The model parameters are then updated as
\begin{equation}
    \phi \gets \phi - \eta \nabla_\phi \mathcal{L}(\phi) - \eta\frac{\gamma}{2} \Delta_\phi\|\nabla_\phi \mathcal{L}(\phi)\|^2\,,
\end{equation}
where $\eta\in\R^+$ is the learning rate and $\gamma\in\R$ is a hyper-parameter controlling the strength of the GR and $\varepsilon$ controls the strength of the parameter perturbation.

\section{Accurate Proxy Rewards via Gradient Regularization}
\label{sec:gradreg_theory}

\subsection{Problem Formulation and Overview}
As mentioned above, the goal of RL is to learn a policy $\pi_\phi$ that maximizes the expected true reward $R^*$.
As we do not have access to $R^*$, we instead have to use the PR $\prew$ to update our policy.
As the PR is generally an approximation of the true reward $R^*$, it may be prone to reward hacking and we need to ensure that the PR stays accurate during training.
For PRs that have been trained or designed based on samples from an initial policy $\pi^1$, such as RMs in RLHF, the most common solution is to use a KL penalty \cite{stiennon_learning_2020,deepseek_grpo_2024} ensuring that the policy stays close to $\pi^1$ and thus the PR stays accurate.
However, the KL penalty also limits how much the policy can learn.
\begin{figure*}[t]
    \centering
    \input{figures/theory_tikz_accurate}
    \caption{Conceptual illustration of our theoretical argument: (left) Regularizing the gradient norm biases optimization toward flat basins in parameter space, and (right) under action-smoothness, a flat maximum makes $\delta$-close pairs unlikely to have a reward gap larger than $K$, i.e., decreases the probability of overly sharp action pairs $a_1,a_2:\|a_1-a_2\|\le\delta,|\prew(s,a_1)-\prew(s,a_2)|>K$.
    Under the assumption of a Lipschitz-continuous true reward $R^*$, each such pair implies an incorrect proxy reward $\prew$.
    }
    \label{fig:theory_illust}
\end{figure*}
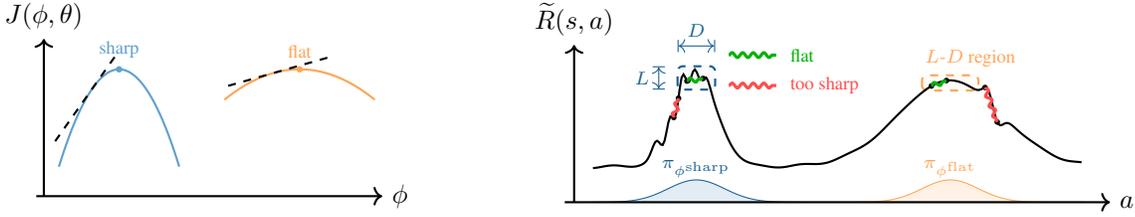
Instead of changing the PR $\prew$ or constraining $\pi_\phi$, we aim to update the policy $\pi_\phi$ such that it obtains a high reward provided by the PR $\prew$, while also biasing the policy update towards regions in which the PR $\prew$ is accurate.
Our goal is thus
\begin{equation}
    \max_\phi \E_{a\sim\pi_\phi} [\prew(s,a)]  - \gamma \mathcal{L}_\mathrm{BT}(\theta,\phi) \,,
\end{equation}
where $\gamma>0$ is a hyper-parameter and we use the BT loss $\mathcal{L}_\mathrm{BT}(\theta,\phi)$ of a PR parameterized with $\theta$ on actions sampled from $\pi_\phi$.
However, we cannot directly evaluate the BT loss $\mathcal{L}_\mathrm{BT}(\theta,\phi)$, as it requires pairwise comparisons for actions drawn from $\pi_\phi$.
As we will show below, under some assumptions, overly sharp optima of the objective $J(\phi,\theta)$ correspond to overly sharp maxima of the PR $\prew$, which imply an excess BT loss $\mathcal{L}_\mathrm{BT}$ of the PR.
Regularizing the policy gradient norm during training biases the optimization towards flat optima \cite{zhao_penalizing_2022}, avoiding this problem.
Instead of $\mathcal{L}_\mathrm{BT}(\theta,\phi)$, we thus use the policy gradient norm $\|\nabla_\phi J(\phi,\theta)\|^2$, yielding the practically optimizable
\begin{equation}
    \max_\phi \E_{a\sim\pi_\phi} [\prew(s,a)]  - \gamma \left\|\nabla_\phi J(\phi,\theta)\right\|^2 \,.
\end{equation}

\subsection{Gradient Regularization Improves Proxy Reward Accuracy}
Our argument consists of three steps:
GR biases optimization towards flat maxima.
Flat maxima imply pairwise robust policies.
Pairwise robust policies correspond to accurate PRs, under the assumption of a flat reward.
We provide an illustration in \autoref{fig:theory_illust}.
We note that our argument in this section focuses on continuous action spaces, while LMs use discrete action spaces. 
We discuss this discrepancy at the end of this section and provide experimental evidence in the LM setting in \autoref{sec:ref_resets} and \autoref{sec:explicit_gradreg}.

\paragraph{GR Favors Flat Maxima in Parameter Space}
It is well known that GR favors flat minima in the parameter space \citep{barrett_implicit_2021,karakida_understanding_2023}.
One way to see this is the argument of \citet{zhao_penalizing_2022}, connecting the gradient norm to Lipschitz continuity, which we reproduce for completeness in Appendix \ref{app:gradreg_lipschitzness}.
It is also well known in the supervised learning literature \citep{hochreiter_flat_1997} that flat minima lead to better generalization, i.e., a smaller difference between the population loss $\mathcal{L}(\theta)$ and the loss on the training set $\widehat{\mathcal{L}}(\theta)$.
Equivalently, in RL, we would directly expect GR to ensure a similar PR score obtained on the training prompts $\widehat{J}(\phi,\theta)=\E_{s\sim D_\mathrm{tr},a\sim\pi_\phi}[\prew_\theta(s,a)]$ and on the test prompts $J(\phi,\theta)=\E_{s\sim D_\mathrm{te},a\sim\pi_\phi}[\prew_\theta(s,a)]$.
Instead of considering generalization, we consider the PR accuracy, or more precisely the BT loss $\mathcal{L}_\mathrm{BT}$, necessitating the next two steps.

\paragraph{Flat Maxima in Parameter Space imply Robust Maxima in Action Space}
We need to connect the flatness of an optimum in parameter space with its flatness in action space, often referred to as robustness.
\citet{lee_flat_2025} previously investigated this connection under the assumption of a ball with constant reward, which we use as a starting point.
While they assume a constant return on a ball $B(\phi^*,\mathcal{E})\coloneq \{\epsilon:\|\epsilon\|\leq\mathcal{E}\}$ around the optimum $\phi^*$, we allow the reward to decrease by at most $\hat{L}$:
\begin{definition}[$\mathcal{E}-\widehat{L}$ flat reward maximum]
    For a reward function $R(s,a)$ and policy $\pi_\phi(a|s)$, parameterized by $\phi$, a maximum $\phi^*$ is $\mathcal{E}-\widehat{L}$-flat if the following holds:
    \begin{equation}
        \begin{aligned}
             & \text{For all }\epsilon\in\mathbb{R}^m\text{ s.t. }\|\epsilon\|\le \mathcal{E},
            \\
             & \E_{a \sim \pi_{\phi^*}(s)}
            \left[
                \prew(s,a)
                \right]
            -
            \E_{a \sim \pi_{\phi^*+\epsilon}(s)}
            \left[
                \prew(s,a)
                \right]
            \leq \widehat{L}
        \end{aligned}
    \end{equation}
    \label{def:e-l-flatrew}
\end{definition}

We also define the concept of a $(\delta,K,\rho)$-pairwise robust policy, which measures the probability of action pairs $(a_1,a_2)$ sampled from the policy violating $K/\delta$-Lipschitz-continuity, i.e., $|\prew(s,a_1)-\prew(s,a_2)|>K,\|a_1-a_2\|<\delta$:
\begin{definition}[$(\delta,K,\rho)$-pairwise robust policy]
    \label{def:pairwise-robust}
    For a policy $\pi_\phi(a|s)$ and a reward $R(s,a)$, define the sharpness set $S_{K,\delta}(R):=\{(s,a_1,a_2):\|a_1-a_2\|\leq\delta,|R(s,a_1)-R(s,a_2)|>K\}$, i.e., the set of $\delta$-close action pairs for which the reward changes by more than $K$.
    A policy $\pi_\phi$ is $(\delta,K,\rho)$-pairwise robust for a reward $R$ if
    \begin{equation}
        P(S_{K,\delta}(R)|\pi_\phi) \leq \rho\,,
    \end{equation}
    with $P(S_{K,\delta}(R)|\pi_\phi)$ being the probability under $s\sim P(s)$ and $(a_1,a_2)\stackrel{\mathrm{i.i.d.}}{\sim}\pi_\phi(a|s)$.
    \label{def:deltakrho-robustpol}
\end{definition}

We obtain the following proposition, linking flatness in parameter space to $(\delta,K,\rho)$ in action space, under the assumption of a $\beta$-smooth PR:

\begin{proposition}[ $\mathcal{E}-\widehat{L}$ flat reward implies $(\delta,K,\rho)$ robust policy]
    Assume a Gaussian policy with fixed covariance $\Sigma$, where we denote the policy noise $Z\sim\mathcal{N}(0,\Sigma)$, and that $\prew(s,a)$ is $\beta$-smooth in $a$.
    Further, $\mathfrak{J}(\phi^*) := \nabla_{\phi}\,\mu_{\phi}(s)\bigm|_{\phi=\phi^*}$ is the Jacobian matrix of the mean action $\mu_{\phi}(s)$.
    If $\phi^*$ is an $\mathcal{E}-\widehat{L}$ flat reward maximum, then the PR action gradient $\|\nabla_a\prew(s,\mu_\phi(s))\|$ is bounded by
    \[
        G := \frac{\widehat{L}}{D^*} + \frac{\beta}{2}D^* + \beta\,\E[\|Z\|]\,,
    \]
    with radius $D^* \le \|\mathfrak{J}(\phi^*)\| \,\mathcal{E} +\mathcal{O}(\mathcal{E}^2)$.
    For a given $\delta>0$ and $K>0$, with $K/\delta>G$, we then know that no pairwise violations can occur within the radius $r:=\frac{1}{\beta}\bigl(\frac{K}{\delta}-G\bigr)$, and thus the policy $\phi^*$ is $(\delta,K,\rho)$-robust with
    \[
        P(S_{K,\delta}(\prew)\mid \pi_{\phi^*}) \leq 2\,P(\|Z\|>r) \coloneq \rho\,.
    \]
    \label{prop:flat_return_implies_deltarobust}
\end{proposition}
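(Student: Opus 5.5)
The plan has three steps. First, transport flatness from parameter space to the mean action. Second, turn the resulting one-sided bound into a bound on the action gradient. Third, turn that gradient bound into a local Lipschitz bound around $\mu_{\phi^*}(s)$, and finish with a union bound over the Gaussian noise.

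\textbf{Step 1: from parameters to the mean action.} Write $F(\mu):=\E_{Z}[\prew(s,\mu+Z)]$, so that the objective at $\phi$ is $F(\mu_\phi(s))$. Choose the perturbation $\epsilon=\mathcal{E}\,v$, where $v$ is the top right singular vector of $\mathfrak{J}(\phi^*)$, signed so that the induced mean shift points along $-\nabla F(\mu^*)$ as far as possible. A first-order Taylor expansion of $\mu_\phi$ gives $\mu_{\phi^*+\epsilon}=\mu^*+\Delta$ with $\|\Delta\|=D^*\le\|\mathfrak{J}(\phi^*)\|\mathcal{E}+\mathcal{O}(\mathcal{E}^2)$. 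Definition~\ref{def:e-l-flatrew} then yields $F(\mu^*)-F(\mu^*+\Delta)\le\widehat{L}$.

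This step is the main obstacle. The directions reachable through $\mathfrak{J}$ need not contain $-\nabla F(\mu^*)$. I would handle this in the same spirit as the constant-ball argument of \citet{lee_flat_2025}: assume, or absorb into the definition of $D^*$, that the image ball $\mathfrak{J}B(0,\mathcal{E})$ contains a segment of length $D^*$ along the gradient direction. The $\mathcal{O}(\mathcal{E}^2)$ term is exactly the allowance for this linearisation.

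\textbf{Step 2: a gradient bound for $F$, then for $\prew$.} Since $\prew$ is $\beta$-smooth, so is $F$. The descent inequality gives
\[
F(\mu^*+\Delta)\ge F(\mu^*)+\langle\nabla F(\mu^*),\Delta\rangle-\tfrac{\beta}{2}\|\Delta\|^2 .
\]
With $\Delta=-D^*\nabla F/\|\nabla F\|$ this becomes $D^*\|\nabla F(\mu^*)\|\le\widehat{L}+\frac{\beta}{2}D^{*2}$, that is,
\[
\|\nabla F(\mu^*)\|\le \widehat{L}/D^*+\tfrac{\beta}{2}D^* .
\]
Next compare $\nabla_a\prew(s,\mu^*)$ with $\nabla F(\mu^*)=\E[\nabla_a\prew(s,\mu^*+Z)]$. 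By $\beta$-Lipschitzness of the gradient, Jensen's inequality gives $\|\nabla_a\prew(s,\mu^*)-\nabla F(\mu^*)\|\le\beta\E\|Z\|$. Combining the two bounds gives $\|\nabla_a\prew(s,\mu^*)\|\le G$.

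\textbf{Step 3: local Lipschitzness and the probability bound.} For any $a$ with $\|a-\mu^*\|\le r$, smoothness gives $\|\nabla_a\prew(s,a)\|\le G+\beta r=K/\delta$, using $r=\frac{1}{\beta}(K/\delta-G)>0$ because $K/\delta>G$. Hence on the ball $B(\mu^*,r)$, which is convex, $\prew(s,\cdot)$ is $K/\delta$-Lipschitz.

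Now take a pair $(a_1,a_2)$ with both points in this ball and $\|a_1-a_2\|\le\delta$. Then $|\prew(s,a_1)-\prew(s,a_2)|\le K$, so the pair is not in $S_{K,\delta}$. Consequently, a violating triple requires $\|a_1-\mu^*\|>r$ or $\|a_2-\mu^*\|>r$. Writing $a_i=\mu^*+Z_i$ with i.i.d.\ $Z_i\sim\mathcal{N}(0,\Sigma)$, a union bound gives
\[
P(S_{K,\delta}(\prew)\mid\pi_{\phi^*})\le 2P(\|Z\|>r)=\rho .
\]
Integrating over $s\sim P(s)$ preserves the bound, since it holds pointwise in $s$ provided the constants are uniform in $s$. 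This proves the proposition.
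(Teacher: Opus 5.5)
Your three steps follow the paper's proof almost one-for-one. Step 1 is Proposition~\ref{prop:flat_return_implies_robust_policy}. Step 2 is Lemma~\ref{lem:flat_smooth_imply_boundgrad} applied to the Gaussian-smoothed reward, followed by Lemma~\ref{lem:pointwise_vs_smoothed_grad}. Step 3 is Lemma~\ref{lem:local_smooth_means_pairwise_robust} plus the union bound.

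\textbf{Sign error in Step 2.} As written, Step 2 does not give the bound. You display $F(\mu^*+\Delta)\ge F(\mu^*)+\langle\nabla F(\mu^*),\Delta\rangle-\frac{\beta}{2}\|\Delta\|^2$, which is the lower half of the smoothness sandwich. With $\Delta=-D^*\nabla F(\mu^*)/\|\nabla F(\mu^*)\|$ it gives $F(\mu^*)-F(\mu^*+\Delta)\le D^*\|\nabla F(\mu^*)\|+\frac{\beta}{2}D^{*2}$. That is a second upper bound on the drop, so it cannot be combined with the flatness bound $F(\mu^*)-F(\mu^*+\Delta)\le\widehat{L}$ to control $\|\nabla F(\mu^*)\|$. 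You need the upper half, $F(\mu^*+\Delta)\le F(\mu^*)+\langle\nabla F(\mu^*),\Delta\rangle+\frac{\beta}{2}\|\Delta\|^2$. It gives $D^*\|\nabla F(\mu^*)\|-\frac{\beta}{2}D^{*2}\le F(\mu^*)-F(\mu^*+\Delta)\le\widehat{L}$, and hence the stated bound. This is the inequality the paper uses in Lemma~\ref{lem:flat_smooth_imply_boundgrad}, and $\beta$-smoothness gives both halves, so the fix is one line.

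\textbf{The Step 1 obstacle.} The problem you flag is genuine. The paper's proof of Proposition~\ref{prop:flat_return_implies_robust_policy} also passes over it: it only bounds the norm of the shift induced by a given $\epsilon$. It never shows that a shift of length $D^*$ along $-\nabla F(\mu^*)$ is attainable.

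You do not need an extra hypothesis here. The full-row-rank part of Assumption~\ref{ass:gaussian_policy_fixed_cov_full_rank}, which you did not use, is what makes the step work. Drop the top-singular-vector choice, since its image direction is fixed and generally not aligned with $-\nabla F$. Instead let $u=\nabla F(\mu^*)/\|\nabla F(\mu^*)\|$, let $\mathfrak{J}^{+}$ be the pseudo-inverse, and take $\epsilon=-\mathcal{E}\,\mathfrak{J}^{+}u/\|\mathfrak{J}^{+}u\|$.

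\begin{itemize}
\item Because $\mathfrak{J}\mathfrak{J}^{+}=I$, this gives a first-order shift exactly along $-u$, of length $\mathcal{E}/\|\mathfrak{J}^{+}u\|$.
\item That length lies between $\mathcal{E}$ times the smallest singular value of $\mathfrak{J}(\phi^*)$ and $\|\mathfrak{J}(\phi^*)\|\,\mathcal{E}$. This is consistent with $D^*\le\|\mathfrak{J}(\phi^*)\|\mathcal{E}+\mathcal{O}(\mathcal{E}^2)$.
\item Local surjectivity of $\phi\mapsto\mu_\phi(s)$, also from full row rank, removes the $\mathcal{O}(\mathcal{E}^2)$ misalignment.
\end{itemize}
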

The proposition follows first linking flatness in parameter space to flatness in action space \citep{lee_flat_2025}.
Then, under a $\beta$-smooth PR, by a gradient bound we can ensure the absence of non-Lipschitz action pairs within a radius $r$ around the mean.
Thus a violation requires at least one of the actions to fall outside this region $\Pr(\|Z\|>r)$ and the final result follows from a union bound.
A full derivation is shown in Appendix \ref{app:proofs}.
Intuitively, for a given maximum, as the sensitivity to disturbances $\hat{L}$ increases, the probability of overly sharp actions $P(S_{K,\delta}|\pi_\phi)$ increases as well.
As the radius of the robustness $\mathcal{E}$ increases, $P(S_{K,\delta}|\pi_\phi)$ decreases or stays constant, as we can freely pick a smaller $E'<E$.
Thus, flatter, wider minima decrease the risk of sharp action pairs.
Next we will show that a larger $P(S_{K,\delta}|\pi_\phi)$ incurs a larger excess BT loss $\mathcal{L}_\mathrm{BT}$.

\paragraph{Non-Robust Policies Imply Inaccurate Proxy Reward} %
To connect $(\delta,K,\rho)$-robustness and the BT loss, we make the assumption of an $L$-Lipschitz true reward, $|R^*(s,a_1) - R^*(s,a_2)| \leq L \|a_1-a_2\|$. We then obtain
\begin{proposition}
    For prompts $s\sim P(s)$, pairs of actions $(a_1,a_2)$, $L$-Lipschitz true reward function $R^{*}$, proxy reward $\prew$, policy $\pi_\phi$, and $K>L\delta$, the excess BT loss can be lower bounded as
    \begin{equation}
        \mathcal{L}_\mathrm{BT}(\prew) - \mathcal{L}_\mathrm{BT}(R^*) \geq 2 (\sigma(K) - \sigma(L\delta))^2 P(\sharpset|\pi_\phi) \,.
    \end{equation}
\end{proposition}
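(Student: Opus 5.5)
The plan is to compare the two Bradley--Terry losses pointwise, one unordered action pair at a time, and then integrate. Fix a prompt $s$ and a pair $(a_1,a_2)$ drawn i.i.d.\ from $\pi_\phi(\cdot|s)$. Under the BT model \eqref{eq:bradleyterry}, the true preference probability is $p^*=\sigma(R^*(s,a_1)-R^*(s,a_2))$, and the proxy assigns $\tilde p=\sigma(\prew(s,a_1)-\prew(s,a_2))$. Averaging over the random winner/loser label, the conditional BT loss of a reward model is the binary cross-entropy $H(p^*,\tilde p)$. The true reward attains $H(p^*,p^*)$, the Bernoulli entropy. The pointwise excess loss is therefore exactly the Bernoulli KL divergence:
\[
\mathcal{L}_\mathrm{BT}(\prew)-\mathcal{L}_\mathrm{BT}(R^*)=\expect_{s,a_1,a_2}\bigl[D_\mathrm{KL}(\mathrm{Ber}(p^*)\,\|\,\mathrm{Ber}(\tilde p))\bigr].
\]
Every integrand is nonnegative, so the expectation is at least its restriction to the sharpness set $\sharpset$.

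Next I will bound the integrand below by a squared gap in probabilities using Pinsker's inequality. For Bernoulli distributions it gives $D_\mathrm{KL}\ge 2(p^*-\tilde p)^2$, which is where the factor $2$ in the statement comes from.

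The key step is to show that $|p^*-\tilde p|\ge \sigma(K)-\sigma(L\delta)$ on $\sharpset$.
\begin{itemize}
\item On $\sharpset$ we have $\|a_1-a_2\|\le\delta$. The Lipschitz assumption then gives $|R^*(s,a_1)-R^*(s,a_2)|\le L\delta$.
\item Also on $\sharpset$, $|\prew(s,a_1)-\prew(s,a_2)|>K$.
\item Write $x=R^*(s,a_1)-R^*(s,a_2)\in[-L\delta,L\delta]$ and $y=\prew(s,a_1)-\prew(s,a_2)$ with $|y|>K>L\delta$. By the symmetry $\sigma(-u)=1-\sigma(u)$ we may assume $y>K$.
\item Since $\sigma$ is monotone, $\sigma(y)-\sigma(x)\ge\sigma(K)-\sigma(L\delta)>0$, using $K>L\delta$.
\end{itemize}
The ordering of the pair does not matter, because swapping $a_1,a_2$ flips both differences and preserves $|p^*-\tilde p|$.

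Combining the three steps gives
\[
\text{excess}\ \ge \expect\bigl[2(p^*-\tilde p)^2\mathbf{1}_{\sharpset}\bigr]\ge 2(\sigma(K)-\sigma(L\delta))^2P(\sharpset|\pi_\phi),
\]
which is the claimed bound.

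The main point needing care is the first step: making precise that $\mathcal{L}_\mathrm{BT}$ in \eqref{eq:bradleyterryCrossentropyLoss} equals the expected cross-entropy under the pair distribution $P_{\pi_\phi}$. This requires summing over both labelings of the unordered pair, with weights $p^*$ and $1-p^*$. Once that identification is made, the excess loss is exactly the expected Bernoulli KL and the remaining steps are routine.
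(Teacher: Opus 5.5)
Your proposal is correct and follows essentially the same route as the paper. Both identify the excess BT loss with the expected Bernoulli KL divergence, lower-bound it by $2(p^*-\tilde p)^2$ via Pinsker, use the Lipschitz assumption with monotonicity and symmetry of $\sigma$ to get $|p^*-\tilde p|\ge\sigma(K)-\sigma(L\delta)$ on $\sharpset$, and then integrate against the indicator; your reduction to the case $y>K$ is just a rephrasing of the paper's interval-separation step.
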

The proof is shown in Appendix \ref{sec:app:btsharpnessbound}.
A non-robust policy in action space at least incurs an excess BT loss proportionate to the probability of overly sharp pairs $P(S_{K,\delta})$ and the magnitude of the sharpness $(\sigma(K) - \sigma(L\delta))^2$.
By changing the policy $\phi$ to decrease the ratio of violating pairs $P(S_{K,\delta})$ or the magnitude of the violations $K$, we obtain a policy that induces a smaller excess BT loss.
As shown above, we can bias the policy updates towards such policies with GR.

\paragraph{Limitations}
While GR itself can be applied to LMs regardless of whether the action space $\mathcal{A}$ is discrete or continuous, our theoretical argument assumes a Gaussian policy and requires a distance function, which is difficult to define for LMs.
As we assume the true reward to be Lipschitz under this distance, a distance under a representation that captures semantic closeness $\phi(a): \mathcal{A} \to \R^d$, such as the hidden space of the LM we are training, would be an appealing option.
For example in the illustrative Figure \autoref{fig:figure2} (left), the distance in action corresponds to semantic similarity, not formatting.
Further, we only address excess BT loss incurred by overly sharp maxima, i.e., overly sharp PR maxima.
We do not show whether or not GR prevents convergence to flat but incorrect regions of the PR.

To empirically validate our theory, we next show that implicit GR, via Reference Resets, can improve RLHF. In \autoref{sec:explicit_gradreg} we leverage explicit GR for further improvements.

\section{Reference Reset as Gradient Regularization}
\label{sec:ref_resets}
\begin{figure*}[t]
    \begin{minipage}{0.67\linewidth}
        \centering
        \input{figures/reset_rlhf/combined_reward_sharpness_btloss_gradnorm}
        \captionof{figure}{\textbf{When gradient norm decreases in a reset iteration, so do sharpness, and BT Loss $\mathcal{L}_\mathrm{RM}$.} Evolution of reward, sharpness and BT loss during training on TL;DR with Pythia 1B using GRPO+reference resets, resets shown as grey dashed lines. After initially spiking in an iteration, gradient norm decreases along with the sharpness of the parameters and the BT-loss under the current policy. We show moving averages over 30 steps. 
        }
        \label{fig:refresetbehavior}
    \end{minipage}
    \hfill
    \begin{minipage}{0.3\linewidth}
        \centering
        \input{figures/reset_rlhf/klsweep_kl_div_gold.tex}
        \captionof{figure}{\textbf{Reference Resets outperform all possible weights $\beta$ of KL penalty.} Oracle evaluation (Gold Model Score) vs KL from initial model for Pythia 1B on the TL;DR test set.%
        }
        \label{fig:sched_comp}
    \end{minipage}
\end{figure*}

Instead of performing explicit GR, we can rely on the implicit GR inherent to stochastic gradient descent \citep{barrett_implicit_2021}. We propose to leverage Reference Resets \citep{liu_prorl_2025} where the KL penalty is changed by iteratively resetting its reference to the current policy $\pi_\phi$ during training i.e., the update is penalized with $D_\mathrm{KL}(\pi_\phi;\pi_{\phi'})$, where every $R$ steps we set $\phi'\gets\phi$.
Since implicit GR usually occurs during the later stages of training, we find Reference Resets with a sufficient number of gradient steps per iteration to be an effective way of obtaining flat maxima.
While \citet{liu_prorl_2025} reset the policy when the reward has stagnated, we instead choose to perform resets every $R$ gradient steps similar to prior work using resets in RL \citep{nikishin_primacy_2022,noukhovitch_language_2023}.
We find that it is important to train beyond reward stagnation, as the gradient norm decreases significantly only after stagnation.

\paragraph{Setup} We experiment on the well-known TL;DR summarization task \citep{stiennon_learning_2020} of Reddit posts with human summaries. %
Following \citep{gao_scaling_2023,tang_understanding_2024} we make this a controlled synthetic setup, where the preference and evaluation data is relabeled by a ``gold'' reward model, Skywork-RewardLlama-3.1-8B-v0.2 \citep{liu_skywork-reward_2024}.
Compared to noisy human preferences, this setup gives us an oracle that enables consistent evaluations. We run experiments with models from both Pythia \citep{biderman_pythia_2023} and Qwen 2.5 \citep{qwen_qwen25_2025} families for different scales. See full details in Appendix~\ref{app:rlhf-details}.

\paragraph{Gradient Norm Tracks Sharpness and RM Accuracy}
To test our theory of GR, we plot the gradient norm against three important empirical values: 1) The training reward from the RM. 2) The sharpness of the current policy parameters $\phi$, which we predict is tied to reward-hacking. The sharpness is estimated by sampling 32 perturbations $\{\epsilon_i\}_{i=1}^{32}$ and evaluating $S(\phi,\theta) = \max_i J(\phi, \theta) -J(\phi+\epsilon_i,\theta)$, for each checkpoint. And 3) the BT loss $\mathcal{L}_\mathrm{BT}(\theta,\phi)$, which represents how accurate our reward model is for our current policy. We sample completions from our model, label with the gold RM, and get the BT loss under our training RM. In this way, we can empirically check whether we are training in a regime where our PR is accurate.
We train a Pythia 1B model with GRPO+Reference Resets and show results in \autoref{fig:refresetbehavior} with dashed vertical lines representing resets. 
In each iteration the gradient norm initially spikes and the reward increases quickly.
After the reward stabilizes, the gradient norm decreases. With it the sharpness of the parameters and the BT loss also decrease. This demonstrates that the gradient norm is tied to both the sharpness and accuracy of the PR. 
Continuing training after the reset, we now start out with a more accurate RM. This enables training in a regime with a good PR, leading to a better final policy.

\begin{table}
    \begin{center}
        \caption{Win rate vs reference response on the TL;DR summarization task, as judged by the gold RM.}
        \label{tab:baselineComparisonTLDR}
        \resizebox{\columnwidth}{!}{
            \begin{tabular}{l c c c c }
                \toprule
                Model Family      & \multicolumn{2}{c}{Pythia} & \multicolumn{2}{c}{Qwen 2.5}                                      \\
                \cmidrule(lr){2-3} \cmidrule(lr){4-5}
                Size              & 1B                         & 2.8B                         & 0.5B             & 1.5B            \\
                \midrule
                SFT               & 17.8\%                     & 25.6\%                       & 16.4\%           & 28.1\%          \\
                DPO               & 45.9\%                     & 68.0\%                       & 48.5\%           & 71.6\%          \\
                TR-DPO            & 45.5\%                     & 66.6\%                       & 49.7\%           & 73.4\%          \\
                \midrule
                GRPO              & 62.2\%                     & 68.5\%                       & 54.1\%           & \textbf{82.5\%} \\
                GRPO + Ref. Reset & \textbf{78.1\%}            & \textbf{76.4\%}              & \textbf{ 70.2\%} & 82.0\%          \\
                \bottomrule
            \end{tabular}
        }
    \end{center}
\end{table}

\paragraph{Reference Resets Outperform Standard KL}
We run an extensive comparison across both families and two sizes per model on standard baselines. 
The initial models are SFT-trained on TLDR data. 
From there we compare our method to DPO \citep{rafailov_direct_2023}, DPO with reference reset, also known as Trust Region DPO (TR-DPO) \citep{gorbatovski_learn_2025}, and standard GRPO with a fixed KL penalty. 
Our results are shown in \autoref{tab:baselineComparisonTLDR}.
GRPO + Reference Resets strongly outperforms all other methods for all but one setup.
Notably, a Pythia 1B model trained with Reference Resets performs better than a Pythia 2.8B model trained with standard GRPO, even with a finely-tuned $\beta$, as shown in Appendix \ref{app:expdetails}. 
In our experiments, TR-DPO's Reference Resets do not seem to afford the same performance improvements.
We speculate that this is because DPO does not use an RM, thus there can be no sharp action-space PR maxima which GR would help to avoid.

\paragraph{Is a Scheduled or Weaker $\beta$ a Sufficient Alternative?}
Originally, Reference Resets were proposed \citep{liu_prorl_2025} not to improve RM accuracy but to prevent the KL term from dominating the reward sum.
If this was their main mechanism, decreasing the strength of the KL penalty $\beta$ should have a similar effect.
Further, an analysis of the optimal policy under iterated KL constrained optimization (see Appendix \ref{sec:asymptotic_analysis}) shows that the optimal policy for $i$ reset iterations is equivalent to the optimal policy with a KL constraint to $\pi^1$ with a lower penalty strength $\beta' = \beta/i$.
However, \autoref{fig:sched_comp} shows that decreasing $\beta$ is not able to match Reference Resets.
This demonstrates the necessity for our novel insight that Reference Resets change the optimization dynamics via implicit gradient regularization during training and its effects on the PR accuracy.

\section{Explicit Gradient Regularization}
\label{sec:explicit_gradreg}
Reference Resets are an indirect way of regularizing the gradient norm, require many more gradient steps, and do not provide a direct, controllable way to trade-off reward maximization, adherence to the initial policy, and gradient regularization. 
We therefore propose a novel application of explicit GR methods to RLHF and RLVR, specifically finite-difference GR \citep{karakida_understanding_2023}.
To improve training stability, we implement parameter perturbations only on the transformer blocks, leaving the embedding layer and output head untouched, and clip the intermediate gradients.
To make GR training efficient, we reuse the actions $a\sim \pi_\phi(a|s)$ to calculate both the gradients $\nabla J(\phi,\theta)$ and $\nabla J(\phi + \varepsilon \nabla J(\phi,\theta),\theta)$.
In principle, this would require new actions $a\sim \pi_{\phi + \varepsilon \nabla J(\phi,\theta)}(a|s)$ or correction by importance sampling. However, we empirically found this to be unnecessary and reusing actions reduces computation overhead.\footnote{In our MATH reasoning experiments on 8 GH200 GPUs, generating the actions took on average 7.4s, while the policy update took 60ms without GR and 150ms with GR.}
PyTorch-style pseudocode, implementation details and experimental details are shown in Appendix \ref{app:expdetails}.
Our implementation is available at \url{https://github.com/JohannesAck/gradientregularization_trl}.

\subsection{GR Mitigates Hacking Reward Models in RLHF}
\begin{table}
    \begin{center}
        \caption{Win rate vs reference response on AlpacaFarm dataset, Qwen 2.5, judged by GPT4.1-Nano with early stopping. The 1.5B experiment is run for three different SFT policies and RMs.}
        \label{tab:gradreg_alpaca_qwen}
        \begin{tabular}{l c c c c c}
            \toprule
            Model Size       & 0.5B            & 1.5B            & 3B              \\
            \midrule
            SFT Model        & 12.8\%          & 20.6\%          & 26.3\%          \\
            \midrule
            No Reg           & 12.8\%          & 21.7\%          & 44.2\%          \\
            KL Reg           & 16.9\%          & 27.6\%          & 52.8\%          \\
            Reference Resets & 17.4\%          & 27.1\%          & 49.2\%          \\
            GR     & \textbf{18.5\%} & \textbf{29.2\%} & \textbf{59.2\%} \\
            \bottomrule
        \end{tabular}
    \end{center}
\end{table}

We first investigate whether GR can fully replace the standard KL penalty in RLHF.
Scaling up from TL;DR, we run experiments on the AlpacaFarm dataset \citep{dubois_alpacafarm_2023}, with preference feedback from GPT 4.1-Nano.
We again evaluate with winrate: generating completions on the test set and judging them against reference completions with GPT4.1-Nano.
We train models from the Qwen 2.5 family, running for 1000 gradient steps in the 1.5B experiments and 500 gradient steps in the 0.5B and 3B experiments and early-stop based on the training winrate.
We evaluate and ablate GR in four settings: 1) no KL penalty, no GR 2) KL penalty, 3) Reference Resets + KL, 4) GR without KL.
We do a grid-search to find the optimal strength for each regularization method for each model size and show results in \autoref{tab:gradreg_alpaca_qwen}.
In most cases using RL without regularization only yields a modest improvement over the initial SFT policy. RL with a KL penalty works decently well though 
Reference Resets is sometimes better. But explicit GR consistently performs best, demonstrating that it can replace the KL penalty and improve overall performance.

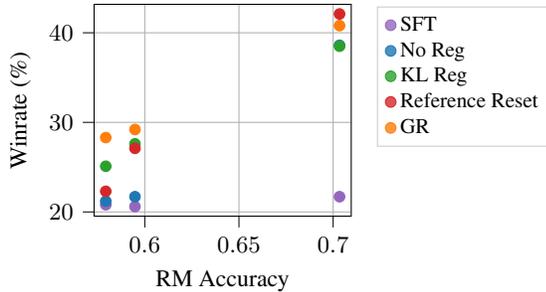
\begin{figure}
    \centering
    \input{figures/expl_gradreg/rmAccByModelSizeNoRegVSsftProperAx2}
    \caption{\textbf{Explicit GR performs well even with inaccurate RMs.} RM accuracy on SFT data vs GPT 4.1 Accuracy for different SFT and RM models, corresponding to different random seeds for full RLHF pipeline. The x-axis scale is nonlinear.}
    \label{fig:varied_rm_acc_vs_gptwinrate}
\end{figure}

\paragraph{Explicit GR is robust to RM accuracy} RLHF performance can vary heavily depending on the accuracy of the RM and performance of the initial policy \citep{huang_n_2024}.
To demonstrate robustness, we rerun our whole pipeline two more times for the 1.5B model: initial SFT, dataset sampling, RM training, GRPO with hyper-parameter tuning.
In \autoref{fig:varied_rm_acc_vs_gptwinrate} we show the winrate of each trained model against the accuracy of the RM with which it trained.
We observe that GR performs significantly better than Reference Resets or a KL penalty when the RM is weaker, demonstrating better robustness. Reference Resets do perform slightly better than GR with the strongest RM, where reward-hacking is less prevalent.

We also find that GR is robust to choices of hyper-parameters $\gamma$ and $\varepsilon$, and show a learning rate sweep in Appendix \ref{app:hparam_robustness}.
Finally, we discover that strong GR can even compensate in robustness for a weak RM by training in a regime where the RM is more accurate than on its training distribution, see Appendix \ref{app:gradient_reg_bt_loss}.

\subsection{GR Prevents Focus on Easy Rule-Based Rewards}
Recent work in RLVR has generally removed the KL penalty to allow for a stronger deviation from the base model \citep{team_glm-45_2025,olmoteam_olmo_2025}, though  others have kept it for training stability \cite{kimi_team_kimi_2025}.
As GR does not constrain the divergence from the base model, but may provide the desired training stability, we investigate whether it can be used in RLVR to enable more flexibility while preventing reward hacking.
\begin{table}
    \begin{center}
        \caption{Test accuracies on GSM8K after training with GRPO and different regularization methods, with LLM judge or rule-based reward. 0.5B experiments are trained with three random seeds each, shown are the mean and standard deviation.}
        \label{tab:reasoning_qwen_results}
        \resizebox{\columnwidth}{!}{
            \begin{tabular}{l c c  c c }
                \toprule
                Feedback-Type & \multicolumn{2}{c}{Rule-Based} & \multicolumn{2}{c}{LLM Judge}                                     \\
                \cmidrule(lr){2-3}                    \cmidrule(lr){4-5}
                Qwen 2.5 Size & 0.5B                           & 1.5B                          & 0.5B            & 1.5B            \\
                \midrule
                Base model    & 3.0\%                          & 37.5\%                        & 3.0\%           & 37.5\%          \\
                \midrule
                No Reg        & $52.1\pm0.3$\%                         & 72.9\%                        & $20.7\pm3.8$\%           & 1.9\%           \\
                KL Reg        & $44.2\pm1.2$\%                         & 72.4\%                        & $24.2\pm1.7$\%          & 55.5\%          \\
                GR  & $\mathbf{56.2\pm1.0}$\%                & \textbf{75.7}\%               & $\mathbf{41.0\pm2.8}$\% & \textbf{67.8}\% \\
                \bottomrule
            \end{tabular}
        }
    \end{center}
\end{table}
\begin{figure}
    \centering
    \newcommand{\reasonwidth}{0.45\columnwidth}
    \newcommand{\reasonheight}{4.5cm}
    \input{figures/reasoning/gradreg_reason_accuracy.tex}
    \hfill
    \input{figures/reasoning/gradreg_reason_formatrew.tex}
    \caption{\textbf{GR prevents overly focusing on formatting reward.} Qwen2.5-0.5B on GSM8K, test set accuracy (left) and formatting reward (right), the dashed line shows the optimal formatting reward. Without regularization, the policy focuses overly on the formatting reward, resulting in worse accuracy.}
    \label{fig:qwen05Bgsm8k}
\end{figure}
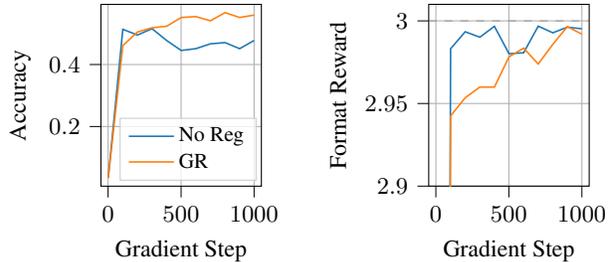

We perform experiments with Qwen 2.5 0.5B-Instruct and 1.5B-Instruct on GSM8K \citep{cobbe_training_2021} with the standard combination of formatting and correctness reward, see full details in Appendix \ref{app:expdetails}.
We indeed observe more stable training and improved final accuracy, as shown in \autoref{fig:qwen05Bgsm8k} (left) and \autoref{tab:reasoning_qwen_results}.
Notably, improved performance comes at the expense of a slightly worse adherence to the formatting reward.
In the presence of both rewards, we can see the excessive focus on the easier formatting reward as a kind of reward hacking, even when neither reward is hackable on its own.
This demonstrates how GR can be effective in situations with a combination of rule-based rewards.

\begin{table}
    \centering
    \definecolor{softred}{RGB}{214,39,40}  %
    \definecolor{softgreen}{RGB}{44,160,44}  %
    \caption{\textbf{GR prevents focus on easy questions.} Accuracy on MATH with rule-based reward, by difficulty of the question category, grouped by base-model accuracy.  Colored depending on \textcolor{softgreen}{increase} or \textcolor{softred}{decrease} after Step 250. Without regularization, after initial improvement, the policy improves on the easy questions but worsens on the hard questions.}
    \label{tab:math_ray_intereference}
    \resizebox{\columnwidth}{!}{
        \begin{tabular}{lllllll}
            \toprule
            Step                       &            & 0    & 250  & 500                         & 750                         & 1000                        \\
                                       & Init. Acc. &      &      &                             &                             &                             \\
            \midrule
            \multirow[t]{5}{*}{GR}     & 00-20\%      & 11.7 & 15.4 & \textcolor{softred}{15.2}   & \textcolor{softgreen}{16.2} & \textcolor{softgreen}{16.2} \\
                                       & 20-40\%      & 31.9 & 39.8 & \textcolor{softgreen}{40.2} & \textcolor{softgreen}{41.1} & \textcolor{softgreen}{41.8} \\
                                       & 40-60\%      & 49.1 & 57.9 & \textcolor{softgreen}{58.4} & \textcolor{softgreen}{59.4} & \textcolor{softgreen}{58.7} \\
                                       & 60-80\%      & 68.0 & 75.7 & \textcolor{softgreen}{76.4} & \textcolor{softgreen}{76.7} & \textcolor{softgreen}{76.5} \\
                                       & >80\%      & 83.0 & 87.9 & \textcolor{softgreen}{89.0} & \textcolor{softgreen}{89.6} & \textcolor{softgreen}{89.2} \\
            \midrule
            \multirow[t]{5}{*}{No Reg} & 00-20\%      & 11.7 & 15.2 & \textcolor{softred}{14.9}   & \textcolor{softred}{13.9}   & \textcolor{softred}{14.3}   \\
                                       & 20-40\%      & 31.9 & 39.0 & \textcolor{softred}{38.5}   & \textcolor{softred}{36.7}   & \textcolor{softred}{37.6}   \\
                                       & 40-60\%      & 49.1 & 57.7 & \textcolor{softgreen}{58.3} & \textcolor{softred}{55.8}   & \textcolor{softred}{56.2}   \\
                                       & 60-80\%      & 68.0 & 76.2 & \textcolor{softred}{76.0}   & \textcolor{softred}{74.3}   & \textcolor{softred}{73.9}   \\
                                       & >80\%      & 83.0 & 87.0 & \textcolor{softgreen}{88.0} & \textcolor{softgreen}{87.2} & \textcolor{softgreen}{87.9} \\
            \bottomrule
        \end{tabular}
    }
    \vspace{-0.2cm}
\end{table}

\begin{figure}
    \centering
    \input{figures/llm_judge/gradreg_llmjudge_acc_sidebyside3methods.tex}
    \vspace{-0.3cm}
    \caption{\textbf{GR prevents reward hacking with LLM-as-a-Judge} Results when training Qwen2.5-0.5B-Inst. on GSM8K with Qwen2.5 1.5B-Inst. as judge. Left: LLM-Judge and rule-based accuracy over time, showing reward hacking without regularization.}
    \vspace{-0.3cm}
    \label{fig:llmjudge}
\end{figure}
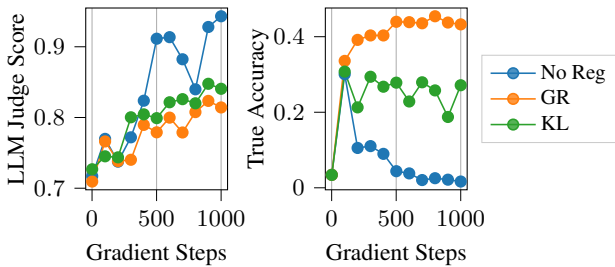

Next, we show that a form of reward hacking is also possible even within a single reward and how GR can mitigate it. We train a Qwen 2.5-1.5B-Instruct model with a rule-based reward on MATH \cite{hendrycks_measuring_2021}.
The base model achieves 46.3\% pass@1 accuracy and GRPO+GR (57.6\%) clearly outperforms standard GRPO (54.8\%).
In order to discern reward hacking, we investigate the accuracy more closely. We divide performance into quintiles of difficulty so that test-set questions are split based on the initial accuracy of their category and level-labels, as provided by the dataset.
As shown in \autoref{tab:math_ray_intereference}, 
without regularization the performance on the easiest quintile of questions continues to improve past 250 steps. But after the first 250 steps, performance on harder questions actually degrades. This demonstrates how RL can focus on learning only the easiest questions in order to hack even a rule-based reward.
In contrast, training with GR more evenly improves the accuracies across difficulties, avoiding the focus on the easier questions.

\subsection{GR Mitigates Hacking LLM-as-a-Judge in RLVR}

\begin{figure}
    \centering
    \input{figures/llm_judge/llmjudge_gradnormcomp_withrulebasedrew_sidebyside}
    \vspace{-0.3cm}
    \caption{\textbf{Gradient norm increases as reward hacking occurs.} True accuracy and gradient norm when training Qwen2.5 0.5B-Instruct on GSM8K with Qwen2.5 1.5B-Instruct LLM judge.}
    \vspace{-0.3cm}
    \label{fig:llmjudge_gradient}
\end{figure}
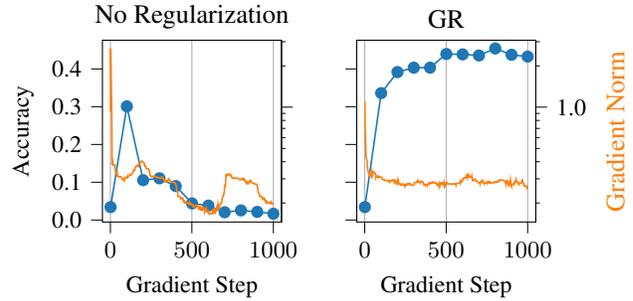

\begin{figure}
    \centering
    \input{figures/llm_judge/judge_ablation}
    \captionof{figure}{\textbf{GR allows usage of cheaper/worse judges to reach same performance.} Test accuracy when training a Qwen2.5-0.5B model on GSM8K with different judges.}
    \label{fig:llmjudge_ablation}
\end{figure}

Finally, we investigate RL with LLM-as-a-Judge.
As LLM-as-a-Judge can be susceptible to adversarial attacks \citep{zhao_one_2025}, we presume it can also be reward-hacked as a PR.
To investigate, we re-run the previous GSM8K experiments but replace the rule-based correctness reward with an LLM judge based on Qwen2.5 1.5B-Instruct \citep{qwen_qwen25_2025}.
The judge receives the problem description, true answer, model response, model reasoning, and is instructed to output a score for correctness (1 to 5) which acts as the reward.
As shown in \autoref{fig:llmjudge}, without regularization the model quickly starts to ``hack'' the judge LLM. LLM judge score goes sharply up while pass@1 accuracy on the test set peaks quite early. Empirically, we observed the model outputting excessive brackets and new HTML tags to fool the judge. In contrast, both GR and KL show much more reasonable train rewards and prevent excessive reward hacking, with GR resulting in a better final performance.
Confirming results in \autoref{sec:ref_resets}, we also see that, without regularization, an increase in gradient norm occurs when reward hacking begins, shown in \autoref{fig:llmjudge_gradient} (left). 
In \cref{fig:llmjudge_ablation} we use different LLM judges, showing that with GR we can use a weaker judge to obtain the same result as a stronger judge without regularization.
We provide additional ablations, experiments, and hyper-parameter sweeps in Appendix \ref{app:additional_experiments}.

\paragraph{Limitations}
We would like to mention two limitations of our experiments and approach.
While GR is empirically effective in the reasoning tasks, the assumption of a Lipschitz-continuous true reward highly depends on the chosen distance between actions.
While we believe it is reasonable to consider distances in a ``semantic space'' as illustrated in \cref{fig:figure2} (left), it could also be argued that the true reward in reasoning should be nonzero only for a unique true answer with specific formatting, violating continuity.

Another issue is that GR may inadvertently favor flat but incorrect maxima of the PR.
If the PR has such maxima, using GR could even cause reward hacking and monitoring the gradient norm during training could be misleading.
We did not observe this issue in our experiments and are not aware of PRs with this characteristic, but this possibility should be considered when using GR in a new setting.

\section{Conclusion}
We have investigated the problem of RL with proxy rewards (PR) and proposed a novel perspective: learning a policy in a regime where the PR is accurate.
For this purpose, we have derived a theoretical connection between the flatness of an optimum and the Bradley-Terry loss of the PR at this optimum.
By regularizing the gradient norm during training, we can bias the RL updates towards such a flat optimum.
We first validated our theoretical analysis by using implicit gradient regularization (GR) via Reference Resets, showing they improve upon a KL penalty.
We then proposed to use explicit GR based on an efficient implementation of a finite-difference estimate.
Explicit GR allows us to mitigate reward model hacking of RMs in RLHF, reduce focus on easy rule-based rewards in RLVR, and alleviate format hacking with LLM-as-a-judge.
We believe that GR is a promising candidate to completely replace KL penalties and improve training runs that currently eschew regularization.

\section*{Acknowledgements}
We would like to thank Soichiro Nishimori and Thanawat Lodkaew for helpful discussions.
MS was supported by JST ASPIRE Grant Number JPMJAP25B1.
TI was supported by KAKENHI Grant Number 22K17946.

\section*{Impact Statement}
We propose a method to prevent reward hacking in RL post-training of LLMs.
We believe that preventing reward hacking is likely to have a beneficial impact in general.
However, our theory only considers certain specific kinds of reward hacking, thus there is a risk that by overly relying on our method, users may miss other kinds of reward hacking, which should be monitored independently.

\bibliography{mybib}
\bibliographystyle{icml2026}

\newpage
\appendix
\renewcommand{\sectionautorefname}{Appendix}
\renewcommand{\subsectionautorefname}{Appendix}
\onecolumn

\section{Extended related work}
\label{app:extended_related_work}
\paragraph{Reference Resets}
We here provide an overview of alternative explanations for reference-resets and related methods.
\citet{noukhovitch_language_2023} proposed Elastic Resets, a method which maintains an exponential moving average of the weights $\bar{\phi}$ during the updates. After each $R$ steps, the current policy parameters are replaced with the EMA weights $\bar{\phi}$ and the EMA weights are reinitialized to the initial policy $\phi^1$.
Next, \citet{gorbatovski_learn_2025} proposed Trust-Region Direct Preference Optimization (TR-DPO), which proposes a method very similar to reference resets, applied to DPO \citep{rafailov_direct_2023}.
Similar to Reference Resets, the reference policy is set to the current policy after each $R$ updates.
While DPO is meant to increase the probability of preferred responses $\pi(a_\mathrm{w}|s)$ while decreasing the probability of un-preferred responses $\pi(a_\mathrm{l}|s)$, in practice it often decreases both $\pi(a_\mathrm{w}|s)$ and $\pi(a_\mathrm{l}|s)$, degrading performance.
This issue is known as likelihood displacement \citep{razin_unintentional_2025}.
\citet{gorbatovski_learn_2025} motivate the mechanism of TR-DPO by preventing likelihood displacement, however likelihood displacement does not occur in on-policy alignment methods such as GRPO, which we investigate in this work.
This mechanism can thus not explain why Reference Resets improve over standard GRPO.

\citet{liu_prorl_2025} motivate the need for Reference Resets in reasoning training due to the magnitude of the KL penalty overwhelming the magnitude of the reward after a certain number of training steps.
However, if this was the reason, we could simply use a smaller KL penalty $\beta$ from the beginning, which we show in \autoref{fig:app:refresetklstrength} does not work.
Another reasonable approach following the reasoning of \citep{liu_prorl_2025} would be to simply decrease $\beta$ when the reward stagnates, but we also show in the experiments (\autoref{fig:app:reset_ablation_resetrate}) that this does not work as well as Reference Resets.
Instead, we provide a theoretical analysis explaining its success and provide experiments validating it specifically in the RLHF setting.
\citet{ackermann_off-policy_2025} contemporaneously with \citet{liu_prorl_2025} proposed reference resets as an ablation called \say{PPO + New KL}.

\paragraph{Replacing the KL penalty in Post-training}
The most common way to prevent reward hacking with proxy rewards is a KL penalty, introduced in RLHF by \citet{stiennon_learning_2020} and RLVR by \citet{havrilla_teaching_2024}.
Some recent reports about large model training, for example GLM4.5 \citep{team_glm-45_2025} and Olmo3 \citep{olmoteam_olmo_2025} remove the KL penalty, while others such as Kimi K2 \citep{kimi_team_kimi_2025} still use it.
Our experiments show that both in RLHF and math experiments using gradient regularization performs better than either the KL penalty or removing the KL penalty.
Direct alignment methods such as Direct Preference Optimization (DPO) \citep{rafailov_direct_2023}, Kahneman-Tversky Optimization (KTO) \citep{ethayarajh_kto_2024}, do not use an explicit KL penalty during training, however they indirectly optimize the KL-constrained RL objective.
Instead of just a KL penalty \citep{ouyang_training_2022} uses \textit{PPO-ptx} which adds an additional behavior cloning term to the loss, on data sampled from a pretraining dataset, to prevent regressions on standard NLP benchmarks during RLHF training.

\paragraph{Gradient Regularization}
While \citep{karakida_understanding_2023} performs experiments with convolutional neural networks, \cite{zhao_penalizing_2022,zhao_when_2024} further apply GR to vision transformer models.
Sharpness-aware minimization (SAM) \citep{foret_sharpness-aware_2021} has been shown to correspond to gradient regularization with a specific choice of hyper-parameters \citep{karakida_understanding_2023}.
\citep{bahri_sharpness-aware_2022,zhang_ga-sam_2022} apply SAM to transformer pre-training in order to improve generalization.
SAM has also been used in RL, particularly by \citet{lee_plastic_2023} to improve sample efficiency when training a policy for Atari games, and by \citep{lee_flat_2025} to obtain robust policies in continuous control tasks.
Our theoretical analysis in part uses the argument provided \citep{lee_flat_2025}, relating parameter flatness to action robustness.
Our work is, to our knowledge, the first one to investigate the relation of GR to the accuracy of proxy rewards, as well as the first work to use GR in RLHF/RLVR post-training of LMs.

\section{Proofs}
\label{app:proofs}
Our argument can be summarized as follows:
\citet{lee_flat_2025} showed that flatness in parameter space is related to flatness in action space, we slightly extend the argument to a maximum error of $\widehat{L}$ in Proposition \ref{prop:flat_return_implies_robust_policy}.
In our case this controls the expected proxy reward inside, when assuming a Gaussian policy with fixed covariance and full row-rank Jacobian (Assumption \ref{ass:gaussian_policy_fixed_cov_full_rank}).
Under the assumption of $\beta$ smoothness of the expected proxy reward (Assumption \ref{ass:action_smooth_proxy_reward}), this provides a bound on the gradient norm of the expected proxy reward (Lemma \ref{lem:flat_smooth_imply_boundgrad}).
We then relate the gradient norm of the expected proxy reward to the point-wise norm of the action-gradient in Lemma \ref{lem:pointwise_vs_smoothed_grad}.
Then, as in an area with a bounded gradient norm Lipschitz continuity is guaranteed, in Lemma \ref{lem:local_smooth_means_pairwise_robust} we show that overly sharp action pairs can only occur if at least one of the actions is outside of a certain radius from the maximum.
Putting it all together yields Proposition \ref{prop:flat_return_implies_deltarobust}.
Finally, we show that, under a Lipschitz continuous true reward (Assumption \ref{ass:lipschitz_true_reward}), sharp minima incur an excess BT error.

\subsection{Flatness in Parameter Space implies $(\delta,K,\rho)$-pairwise robustness}
We will first show that a $\mathcal{E}-\widehat{L}$ flat reward implies a $D-\widehat{L}$ robust policy, closely based on the proof proposed by \citet{lee_flat_2025}.
We also need to define a $D-\widehat{L}$ action robust policy:
\begin{definition}[$D-\widehat{L}$ action robust policy, slightly modified from \citet{lee_flat_2025}]
    \label{def:action-robust}
    For a reward function $R(s,a)$ and policy $\pi_\phi(a|s)$, parameterized by $\phi$, a maximum $\phi^*$ is $D$-$\widehat{L}$ action robust if the following holds:
    \begin{equation}
        \begin{aligned}
             \text{For all }\delta\in\mathbb{R}^{|A|}\text{ s.t. }\|\delta\|\le D: \quad \quad \quad
             \E_{a\sim\pi_{\phi^*}}\left[\prew(s,a)\right] - \E_{a\sim\pi_{\phi^*}}\left[\prew(s,a + \delta) \right] \leq \widehat{L}
        \end{aligned}
    \end{equation}
    \label{def:d-l-robustpol}
\end{definition}
We will also need the following assumptions during this section:
\begin{assumption}[Gaussian policy with fixed covariance and full row-rank Jacobian]
    \label{ass:gaussian_policy_fixed_cov_full_rank}
    For each $s$, $\pi_{\phi^*}(\cdot\mid s)=\mathcal{N}(\mu_{\phi^*}(s),\Sigma)$ with a fixed positive definite covariance $\Sigma$. The Jacobian matrix $\mathfrak{J}(\phi^*)$ of the mean action $\mu_\phi(s)$ w.r.t. $\phi$, evaluated at $\phi^*$, has full row rank. %
\end{assumption}

\begin{assumption}[Action-smooth proxy reward]
    \label{ass:action_smooth_proxy_reward}
    For all $s\in\mathcal{S}$, the PR $\prew(s,a)$ is $\beta$-smooth in $a$, i.e.,
    $\|\nabla_a \prew(s,a_1)-\nabla_a \prew(s,a_2)\|\le \beta\|a_1-a_2\|$ for all $a_1,a_2\in\mathcal{A}$.
\end{assumption}

\begin{assumption}[Lipschitz-continuous true reward]
    \label{ass:lipschitz_true_reward}
    For all $s\in\mathcal{S}$, the true reward $R^*(s,a)$ is Lipschitz in $a$, i.e., $|R^{*}(s,a_1)-R^{*}(s,a_2)| \leq L \|a_1-a_2\|$ for all $a_1,a_2$.
\end{assumption}

We now can show that an $\mathcal{E}-\widehat{L}$ flat return implies $D-\widehat{L}$ robust policy:

\begin{proposition}[ $\mathcal{E}-\widehat{L}$ flat return implies $D-\widehat{L}$ robust policy, slightly modified from \citet{lee_flat_2025}]
    \label{prop:flat_return_implies_robust_policy}
    If $\phi^*$ is an $\mathcal{E}-\widehat{L}$ flat return maximum of a policy under assumption \ref{ass:gaussian_policy_fixed_cov_full_rank}, then the policy  $\phi^*$ is $D^*-\widehat{L}$ robust, where:
    \begin{equation}
        D^* \le \bigl\lVert \mathfrak{J}(\phi^*)\bigr\rVert \,\mathcal{E} +\mathcal{O}(\mathcal{E}^2),
    \end{equation}
    and
    \[
        \mathfrak{J}(\phi^*) := \nabla_{\phi}\,\mu_{\phi}(s)\bigm|_{\phi=\phi^*}
    \]
    is the Jacobian matrix of the mean action $\mu_{\phi}(s)$ with respect to~$\phi$, evaluated at~$\phi^*$.
\end{proposition}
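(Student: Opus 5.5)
The plan is to turn an action-space shift $\delta$ into a parameter-space perturbation $\epsilon$ with $\|\epsilon\|\le\mathcal{E}$. Then the $\mathcal{E}$-$\widehat{L}$ flatness of Definition \ref{def:e-l-flatrew} transfers directly to the action-robustness condition of Definition \ref{def:d-l-robustpol}. The key observation uses Assumption \ref{ass:gaussian_policy_fixed_cov_full_rank}: the policy is $\mathcal{N}(\mu_\phi(s),\Sigma)$ with $\Sigma$ fixed. Hence $a\sim\pi_{\phi^*+\epsilon}$ has the same law as $a'+(\mu_{\phi^*+\epsilon}(s)-\mu_{\phi^*}(s))$ with $a'\sim\pi_{\phi^*}$. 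This gives
\[
\E_{a\sim\pi_{\phi^*+\epsilon}}[\prew(s,a)] = \E_{a\sim\pi_{\phi^*}}\bigl[\prew(s,a+\Delta\mu(\epsilon))\bigr],\qquad \Delta\mu(\epsilon):=\mu_{\phi^*+\epsilon}(s)-\mu_{\phi^*}(s).
\]
So any action shift of the form $\delta=\Delta\mu(\epsilon)$ with $\|\epsilon\|\le\mathcal{E}$ automatically satisfies the robustness inequality with the same $\widehat{L}$. It therefore suffices to define $D^*$ as the radius of the largest ball around $0$ contained in the image set $\{\Delta\mu(\epsilon):\|\epsilon\|\le\mathcal{E}\}$, and then to estimate that radius.

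To estimate it, I would Taylor expand: $\Delta\mu(\epsilon)=\mathfrak{J}(\phi^*)\epsilon+\mathcal{O}(\|\epsilon\|^2)$. Full row rank of $\mathfrak{J}=\mathfrak{J}(\phi^*)$ means the linear map $\epsilon\mapsto\mathfrak{J}\epsilon$ is surjective onto action space. A target $\delta$ is then reached to first order by the minimum-norm preimage $\epsilon=\mathfrak{J}^+\delta$, where $\mathfrak{J}^+=\mathfrak{J}^\top(\mathfrak{J}\mathfrak{J}^\top)^{-1}$. The linear image of the $\mathcal{E}$-ball contains the ball of radius $\sigma_{\min}(\mathfrak{J})\mathcal{E}$ and is contained in the ball of radius $\|\mathfrak{J}\|\mathcal{E}$. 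The inscribed radius therefore satisfies $D^*\le\|\mathfrak{J}\|\mathcal{E}$ up to the nonlinear correction. This upper bound is what the statement asserts, with the correction absorbed into $\mathcal{O}(\mathcal{E}^2)$.

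The main obstacle is the nonlinear remainder: I must show that the true image of the $\mathcal{E}$-ball still contains a ball of radius comparable to the linear one, not merely an approximate one. I would apply the inverse function theorem / surjective mapping theorem (Lyusternik–Graves) at $\phi^*$ to the map $\epsilon\mapsto\Delta\mu(\epsilon)$. This requires $\mu_\phi$ to be $C^1$, which is implicit in the existence of $\mathfrak{J}$, and its derivative at $0$ is surjective by the full-row-rank assumption. The theorem yields, for small $\mathcal{E}$, that every $\delta$ with $\|\delta\|\le\sigma_{\min}(\mathfrak{J})\mathcal{E}-\mathcal{O}(\mathcal{E}^2)$ has a preimage of norm at most $\mathcal{E}$. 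So $D^*$ is positive and given by the linear radius up to $\mathcal{O}(\mathcal{E}^2)$, and the reverse inequality $\|\Delta\mu(\epsilon)\|\le\|\mathfrak{J}\|\mathcal{E}+\mathcal{O}(\mathcal{E}^2)$ gives the stated bound on $D^*$. Since this follows \citet{lee_flat_2025}, I would keep the argument at this first-order level rather than tracking explicit constants.
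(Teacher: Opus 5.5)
Your proposal is correct, and its core step is the paper's. With fixed covariance, moving from $\phi^*$ to $\phi^*+\epsilon$ is the same as shifting every action by $\Delta\mu(\epsilon)$, and the Taylor bound $\|\Delta\mu(\epsilon)\|\le\|\mathfrak{J}(\phi^*)\|\mathcal{E}+\mathcal{O}(\mathcal{E}^2)$ gives the stated radius.

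The difference is in how $D^*$ is pinned down. The paper only shows that each \emph{realized} shift $\delta(s)=\Delta\mu(\epsilon)$ satisfies the robustness inequality and has small norm. It never checks that \emph{every} $\delta$ with $\|\delta\|\le D^*$ is realized, which is what Definition~\ref{def:d-l-robustpol} quantifies over, and it never uses the full-row-rank hypothesis. You define $D^*$ as the inscribed radius of $\{\Delta\mu(\epsilon):\|\epsilon\|\le\mathcal{E}\}$, so the quantifier holds by construction. You then recover the same upper bound, and use full row rank with a surjective-mapping argument to get $D^*\ge\sigma_{\min}(\mathfrak{J})\mathcal{E}-\mathcal{O}(\mathcal{E}^2)>0$ for small $\mathcal{E}$.

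That lower bound is what gives the proposition content. An upper bound alone is met by $D^*=0$, since flatness at $\epsilon=0$ already forces $\widehat{L}\ge 0$. By contrast, Proposition~\ref{app:flat_to_pairwise} divides by $D$ and applies Lemma~\ref{lem:flat_smooth_imply_boundgrad} on the whole ball of radius $D$, so it needs exactly your inscribed-ball property.

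The paper's version buys brevity. Yours buys a correct quantifier, an actual use of the rank assumption, and the observation that the natural radius is $\sigma_{\min}(\mathfrak{J})\mathcal{E}$ rather than $\|\mathfrak{J}\|\mathcal{E}$. Your per-state treatment also matches the definitions better than the paper's averaging over $s$ with a state-dependent $\delta(s)$.

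One precision on regularity. For $\mu_\phi$ merely $C^1$, Lyusternik--Graves gives openness at any rate $c<\sigma_{\min}(\mathfrak{J})$, i.e.\ only an $o(\mathcal{E})$ correction. The $\mathcal{O}(\mathcal{E}^2)$ form needs a locally Lipschitz Jacobian, which is the regularity the paper tacitly assumes in its remainder. It follows most directly from the inverse function theorem applied to $\eta\mapsto\Delta\mu(\mathfrak{J}^{+}\eta)$, whose derivative at $0$ is the identity, together with $\|\mathfrak{J}^{+}\|=1/\sigma_{\min}(\mathfrak{J})$.
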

\begin{proof}
    Assume a Gaussian policy with fixed covariance $\Sigma$, such that $\pi_\phi(a|s) = \mathcal{N}(a;\mu_\phi(s),\Sigma)$, where $\mu_\phi(s)$ is the mean of the Gaussian we are training.
    A Taylor expansion yields
    $\mu_{\phi+\epsilon}(s)=\mu_\phi(s)+J(\phi)\epsilon+O(||\epsilon||^2)$.

    Define the change in policy for a given state $s$ due to the perturbation $\epsilon$ as
    \[\delta(s) = \mu_{\phi+\epsilon}(s) - \mu_{\phi}(s) = J(\phi)\epsilon+O(||\epsilon||^2)\,.
    \]
    Then by the triangle inequality, the Cauchy-Schwarz inequality and definition $\|\epsilon\|<\mathcal{E}$, we have
    \[
        ||\delta(s)|| \leq ||J(\phi)|| ||\epsilon|| + O(||\epsilon||^2) \leq ||J(\phi)||\mathcal{E} + O(\mathcal{E}^2)\,.
    \]

    The sub-optimality of this action perturbation $\delta$ is
    \begin{align}
         & \E_{s \sim P(s), a \sim \pi_{\phi^*}(s)}\left[\prew(s,a) \right]
        -
        \E_{s \sim P(s), a \sim \pi_{\phi^*}(s)}\left[\prew(s,a + \delta(s)) \right]
        \\
        \stackrel{\text{Def. $\delta$}}{=}
         & \E_{s \sim P(s), a \sim \pi_{\phi^*}(s)}\left[\prew(s,a) \right]
        -
        \E_{s \sim P(s), a \sim \pi_{\phi^* + \epsilon}(s)}\left[\prew(s,a)\right]
        \stackrel{\text{Def \ref{def:e-l-flatrew}}}{\leq}
        \widehat{L}
    \end{align}

    Thus a $\mathcal{E}-\widehat{L}$ flat reward maximum implies a $D^*-\widehat{L}$ robust policy with $D^* \leq ||J(\phi^*)||\mathcal{E} + \mathcal{O}(\mathcal{E}^2)$
\end{proof}

As we want to show an excess BT loss based on a Lipschitz-continuity assumption, which is defined over action pairs, we need to relate $D-\widehat{L}$ robustness to $P(S_{K,\delta}|\pi_\phi)$.
For this purpose we need \ref{ass:action_smooth_proxy_reward} following two simple lemmas:

\begin{lemma}[Flatness and $\beta$-smoothness imply bounded gradient]
    \label{lem:flat_smooth_imply_boundgrad}
    Let $f: \mathbb{R}^d\to\mathbb{R}$ be differentiable and $\beta$-smooth on a ball $B(c,r)$, i.e.,
    $\|\nabla f(x) - \nabla f(y)\| \leq \beta \| x -y \| \quad\forall x,y\in B(c,r)$\,.
    Assume a "flat maximum" in $c$, such that $f(c)-f(c+u)\leq \widehat{L} \quad \forall u:\|u\|\leq r $.
    Then the gradient norm at c is bounded as
    \[
        \|\nabla f(c) \| \leq \frac{\widehat{L}}{r} + \frac{\beta}{2}r \,.
    \]
\end{lemma}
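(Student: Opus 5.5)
The plan is to use the standard descent-lemma argument: probe $f$ at a single point on the boundary of the ball, in the direction of the gradient, and compare the first-order gain with the flatness bound. If $\nabla f(c)=0$ there is nothing to prove, so assume $g:=\nabla f(c)\neq 0$. Set $u:=r\,g/\|g\|$. Then $\|u\|=r$ and $c+u\in B(c,r)$, so the flatness assumption gives $f(c)-f(c+u)\le \widehat{L}$, i.e.\ $f(c+u)-f(c)\ge -\widehat{L}$.

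Next we need an upper-type Taylor bound in the opposite direction. The most convenient choice is the probe $u:=-r\,g/\|g\|$, which points \emph{against} the gradient. Along this direction the function decreases at first order, so flatness supplies the useful inequality. Since $B(c,r)$ is convex, the whole segment from $c$ to $c+u$ lies in the ball. We can therefore write
\[
f(c+u)-f(c)=\int_0^1 \langle \nabla f(c+tu),u\rangle\,dt = \langle g,u\rangle + \int_0^1 \langle \nabla f(c+tu)-g,u\rangle\,dt .
\]
By $\beta$-smoothness on the ball and Cauchy--Schwarz, the integral is at most $\int_0^1 \beta t\|u\|^2\,dt=\frac{\beta}{2}r^2$. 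With $\langle g,u\rangle=-r\|g\|$ this yields
\[
f(c+u)-f(c)\le -r\|g\|+\frac{\beta}{2}r^2 .
\]

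Combining the two estimates gives $-\widehat{L}\le f(c+u)-f(c)\le -r\|g\|+\frac{\beta}{2}r^2$. Rearranging, $r\|g\|\le \widehat{L}+\frac{\beta}{2}r^2$, and dividing by $r>0$ gives the claim $\|\nabla f(c)\|\le \widehat{L}/r+\frac{\beta}{2}r$.

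The argument has no real obstacle. The only delicate point is choosing the probe direction: it must be the negative gradient, since that is the direction in which flatness gives the useful inequality. We also need to note that smoothness is only assumed on $B(c,r)$, so convexity of the ball is what guarantees that the integral form of the Taylor remainder is valid along the segment.
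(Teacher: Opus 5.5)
Your proof is correct and follows the paper's route. The paper also uses the quadratic upper bound $f(c+u)\le f(c)+\nabla f(c)^\top u+\frac{\beta}{2}\|u\|^2$, which you derive explicitly from the integral form along the segment using convexity of the ball, together with the probe $u=-r\nabla f(c)/\|\nabla f(c)\|$. One small point: your opening probe $u=+r\,g/\|g\|$ is unnecessary, and ``combining the two estimates'' is valid only because the flatness inequality holds for every $\|u\|\le r$, so you should apply it directly to $u=-r\,g/\|g\|$.
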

\begin{proof}
    Based on the standard quadratic upper bound based on $\beta$-smoothness, we know
    \[
        f(c+u) \leq f(c) + \nabla f(c)^Tu + \frac{\beta}{2}\|u\|^2 \quad \forall c+u\in B(c,r)\,.
    \]
    We can rearrange this to
    \[
        f(c)-f(c+u) \geq - \nabla f(c)^Tu - \frac{\beta}{2}\|u\|^2\,.
    \]
    By choosing the worst case $u=-r\frac{\nabla f(c)}{\|\nabla f(c)\|}$ (if $\|\nabla f(c)\|=0$ the bound is trivially true), we have $\|u\|=r$ and $-\nabla f(c)^T u= r \|\nabla f(c)\|$. Plugging this into the inequality and using $f(c)-f(c+u)\leq \widehat{L}$ yields
    \[
        \|\nabla f(c) \| \leq \frac{\widehat{L}}{r} + \frac{\beta}{2}r \,.
    \]
\end{proof}
Note that robustness with radius $r$ implies robustness with any $r'<r$ and we could further improve this bound by picking $r'=\min(\sqrt{2\beta \hat{L}})\coloneq r^*$, if $r^*\leq r$.
We do not do this here for ease of exposition.
We next need to connect the gradient bound of the expected $\E[\prew(s,a+Z)]$ to a bound on the gradient of the pointwise $\prew(s,a)$:
\begin{lemma}[Pointwise action-gradient is controlled by Gaussian-smoothing]
    \label{lem:pointwise_vs_smoothed_grad}
    Fix $s$ and let $f(a)=\prew(s,a)$. Under Assumption \ref{ass:action_smooth_proxy_reward} and for $Z\sim\mathcal{N}(0,\Sigma)$, define the Gaussian-smoothed reward
    \[
        \bar{f}(c) := \E[f(c+Z)]\,.
    \]
    Then $\nabla \bar f(c)=\E[\nabla f(c+Z)]$ and for all $c$,
    \[
        \|\nabla f(c)\| \le \|\nabla \bar{f}(c)\| + \beta\,\E\left[\|Z\|\right]\,.
    \]
\end{lemma}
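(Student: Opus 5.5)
The proof has two parts: first justify exchanging gradient and expectation, then use the triangle inequality together with $\beta$-smoothness of $f$.

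\textbf{Interchange of gradient and expectation.} By Assumption \ref{ass:action_smooth_proxy_reward}, $\nabla f$ is $\beta$-Lipschitz, so it grows at most linearly:
\[
\|\nabla f(c+z)\| \le \|\nabla f(c)\| + \beta\|z\|.
\]
Since $\E\|Z\|<\infty$ for a Gaussian, the family $\{\nabla f(c'+Z)\}$ is dominated by an integrable function uniformly for $c'$ in a neighbourhood of $c$. The mean value theorem then dominates the difference quotients $\bigl(f(c+tv+Z)-f(c+Z)\bigr)/t$ by the same integrable bound. Dominated convergence gives $\nabla\bar f(c)=\E[\nabla f(c+Z)]$.

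If one prefers to avoid this, there is an alternative route. Write $\bar f(c)=\int f(a)\,\mathcal{N}(a;c,\Sigma)\,da$, differentiate the Gaussian density instead of $f$, and then integrate by parts. I expect the dominated-convergence route to be cleaner. This step is the only real obstacle, and only a mild one: the remainder of the proof is mechanical.

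\textbf{Pointwise bound.} For every realisation of $Z$,
\[
\nabla f(c) = \nabla f(c+Z) + \bigl(\nabla f(c)-\nabla f(c+Z)\bigr),
\]
and the bracketed term has norm at most $\beta\|Z\|$ by $\beta$-smoothness. Taking expectations, the left side is deterministic, so
\[
\nabla f(c) = \E[\nabla f(c+Z)] + \E\bigl[\nabla f(c)-\nabla f(c+Z)\bigr] = \nabla\bar f(c) + \E\bigl[\nabla f(c)-\nabla f(c+Z)\bigr].
\]
Apply the triangle inequality, then Jensen's inequality for the norm, $\|\E[X]\|\le\E\|X\|$, to the second term. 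This yields
\[
\|\nabla f(c)\| \le \|\nabla\bar f(c)\| + \E\bigl[\|\nabla f(c)-\nabla f(c+Z)\|\bigr] \le \|\nabla\bar f(c)\| + \beta\,\E[\|Z\|],
\]
which is the claim.

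The bound holds for every $c$, because smoothness is assumed globally on $\mathcal{A}$. It will later be applied at $c=\mu_{\phi^*}(s)$, combined with the gradient bound on $\bar f$ from Lemma \ref{lem:flat_smooth_imply_boundgrad}.
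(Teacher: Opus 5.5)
Your proof is correct and follows essentially the paper's route: Jensen's inequality and $\beta$-smoothness give $\|\nabla f(c)-\nabla\bar f(c)\|\le\beta\,\E\|Z\|$, and the triangle inequality finishes. Your decomposition applies the triangle inequality only to deterministic quantities, which is cleaner than the paper's second display (that display mixes the realised $\|\nabla f(c+Z)\|$ with the averaged $\|\nabla\bar f(c)\|$). Your dominated-convergence justification of $\nabla\bar f(c)=\E[\nabla f(c+Z)]$ is also more careful than the paper's one-line assertion; for completeness, note that $\bar f$ is well defined because $\beta$-smoothness gives $f$ at most quadratic growth and $Z$ has finite second moments.
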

\begin{proof}
    Since $f$ is $\beta$-smooth, it is continuously differentiable with Lipschitz gradient, and we can interchange gradient and expectation to obtain $\nabla \bar f(c)=\E[\nabla f(c+Z)]$.
    Then, by Jensen's inequality and $\beta$-smoothness,
    \[
        \|\nabla f(c)-\nabla \bar f(c)\|
        =
        \Bigl\|\E[\nabla f(c)-\nabla f(c+Z)]\Bigr\|
        \leq
        \E\bigl[\|\nabla f(c)-\nabla f(c+Z)\|\bigr]
        \leq
        \beta\,\E\|Z\|\,,
    \]
    then, by the triangle inequality and the previous line,
    \begin{align}
        \|\nabla f(c)\|
         & =
        \|\nabla f(c) - \nabla f(c+Z) + \nabla f(c+Z)\|
        \leq \|\nabla f(c+Z)\| + \|\nabla f(c) - \nabla f(c+Z) \| \\
         & \leq
        \|\nabla f(c+Z) \| + \beta \E \left[ \| Z \| \right]
        =
        \|\nabla\bar{f}(c) \| + \beta \E \left[ \| Z \| \right]\,.
    \end{align}

\end{proof}
Then, we need to connect local smoothness and our knowledge of the gradient norm $\|\nabla f(c)\|$ at the center to violations of Lipschitz-continuity:
\begin{lemma}[Pairwise robustness is bounded by local smoothness]
    \label{lem:local_smooth_means_pairwise_robust}
    Fix a state $s$. Denote $f(a)=\prew(s,a)$. Assume that there exists a center $c\in A$ and radius $r>0$ such that $f$ is $\beta$-smooth on the ball $B(c,r):=\{a:\|a-c\|\leq r\}$.
    Let $a,a_1,a_2\stackrel{\mathrm{i.i.d.}}{\sim}\pi(a|s)$.

    Then for any $\delta>0$ and $K>0$, we have
    \[
        P(\|a_1-a_2\|\leq\delta,|f(a_1)-f(a_2)| > K) \leq 2P(\|a-c\|>r) \,,
    \]
    for $r= \frac{1}{\beta}\left( \frac{K}{\delta} - \|\nabla f(x)\| \right)$
\end{lemma}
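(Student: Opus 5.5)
The plan is to show that inside the ball $B(c,r)$ the reward $f$ is locally Lipschitz with constant at most $K/\delta$. Then no violating pair can have both actions in the ball, and a union bound over the complementary events finishes the argument. We read $\|\nabla f(x)\|$ in the definition of $r$ as the gradient at the center, $\|\nabla f(c)\|$, and assume $K/\delta > \|\nabla f(c)\|$ so that $r>0$.

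\textbf{Step 1 (gradient bound on the ball).} For any $a\in B(c,r)$, $\beta$-smoothness on the ball and the triangle inequality give
\[
\|\nabla f(a)\|\le \|\nabla f(c)\|+\beta\|a-c\|\le \|\nabla f(c)\|+\beta r = \frac{K}{\delta}.
\]
The last equality is exactly the choice $r=\frac{1}{\beta}\bigl(\frac{K}{\delta}-\|\nabla f(c)\|\bigr)$.

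\textbf{Step 2 (no violations inside the ball).} Take $a_1,a_2\in B(c,r)$ with $\|a_1-a_2\|\le\delta$. The ball is convex, so the segment between $a_1$ and $a_2$ lies in $B(c,r)$. By the mean value theorem (or by integrating $\nabla f$ along the segment),
\[
|f(a_1)-f(a_2)|\le \sup_{t\in[0,1]}\|\nabla f(a_2+t(a_1-a_2))\|\,\|a_1-a_2\|\le \frac{K}{\delta}\,\delta=K.
\]
So the strict violation $|f(a_1)-f(a_2)|>K$ cannot occur. The only care needed is that the sharpness set uses a strict inequality in $K$ and a non-strict one in $\delta$, which is consistent with this bound.

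\textbf{Step 3 (union bound).} By Step 2, the event $\{\|a_1-a_2\|\le\delta,\ |f(a_1)-f(a_2)|>K\}$ is contained in $\{\|a_1-c\|>r\}\cup\{\|a_2-c\|>r\}$. Since $a_1$ and $a_2$ are each distributed as $a\sim\pi(\cdot|s)$, the union bound gives
\[
P\bigl(\|a_1-a_2\|\le\delta,\ |f(a_1)-f(a_2)|>K\bigr)\le P(\|a_1-c\|>r)+P(\|a_2-c\|>r)=2P(\|a-c\|>r).
\]
Independence of $a_1$ and $a_2$ is not even needed here.

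The main obstacle is interpretive rather than technical. The statement writes $\|\nabla f(x)\|$ where it should mean the center gradient $\|\nabla f(c)\|$, and it implicitly needs $K/\delta>\|\nabla f(c)\|$ for $r$ to be positive. I would state both explicitly. With that fixed, Step 1 is the key inequality and the rest is routine.
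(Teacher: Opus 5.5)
Your proposal is correct and follows the paper's proof essentially step for step. It uses the gradient bound $\|\nabla f(x)\|\le\|\nabla f(c)\|+\beta r$ on the ball, then the line-integral (mean value) estimate $|f(a_1)-f(a_2)|\le(\|\nabla f(c)\|+\beta r)\delta$ for close pairs inside the ball, then a union bound over $\{\|a_1-c\|>r\}\cup\{\|a_2-c\|>r\}$. Your treatment of the boundary case is slightly cleaner than the paper's: the paper asks for $(\|\nabla f(c)\|+\beta r)\delta<K$ even though its chosen $r$ makes this an equality, whereas you correctly note that $\le K$ already rules out the strict violation $>K$.
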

\begin{proof}
    For any $x\in B(c,r)$, by $\beta$-smoothness,
    \[
        \|\nabla f(x)\|
        \leq
        \| \nabla f(c)\| + \| \nabla f(x) - \nabla f(c)\|
        \leq
        \|\nabla f(c)\| + \beta \|x-c \|
        \leq
        \|\nabla f(c)\| + \beta r \,.
    \]
    For any two points $x,y\in B(c,r)$,
    \[
        f(x)-f(y) = \int_0^1\nabla f(y+t(x-y))^T(x-y)dt\,.
    \]
    taking the norm, using the Cauchy-Schwarz inequality, and the bounded gradient, we get
    \[
        |f(x)-f(y)| \leq \int_0^1\|\nabla f(y+t(x-y))\|\|x-y\|dt\, \leq (\|\nabla f(c)\| + \beta r) \|x - y\|\,.
    \]
    Therefore, if $x,y\in B(c,r)$ and $\|x-y\|\leq \delta$, then
    \[
        |f(x)-f(y)| \leq (\|\nabla f(c)\| + \beta r)\delta \,.
    \]
    Thus, the event $\{\|a_1-a_2\|\leq \delta, |f(a_1)-f(a_2)|>K\}$ cannot occur when $a_1,a_2\in B(c,r)$ and $(\|\nabla f(c)\| + \beta r)\delta<K$.
    It thus can only occur if at least one of $a_1$ or $a_2$ is not in $B(c,r)$, which occurs with probability $P(\|a-c\|>r)$. By a union bound of the two events we get the result.
\end{proof}

Finally, we can put it all together:

\begin{proposition}[From $D$-$\widehat{L}$ robustness to pairwise sharpness control]
    \label{app:flat_to_pairwise}
    Assume a Gaussian policy with full row-rank Jacobian (Assumption \ref{ass:gaussian_policy_fixed_cov_full_rank}), a $\beta$-smooth proxy reward $\prew$ (\ref{ass:action_smooth_proxy_reward}), and fix a state $s$.
    If $\phi^*$ is $D$-$\widehat{L}$ action robust, then for any $\delta>0$ and $K>0$ such that $K/\delta>G$,
    we have gradient magnitude bound $G$ and non-violating radius $r$,
    \[
        G := \frac{\widehat{L}}{D}+\frac{\beta}{2}D+\beta\,\E\|Z\|\,,
        \quad
        r:=\frac{1}{\beta}\bigl(\frac{K}{\delta}-G\bigr)\,,
    \]

    such that
    \[
        P(S_{K,\delta}(\prew)\mid \pi_{\phi^*}) \le 2\, P(\|Z\|>r)\,.
    \]
\end{proposition}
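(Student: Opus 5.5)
The proof chains the three lemmas above, applied to the Gaussian-smoothed proxy reward at the mean action. Fix $s$ and write $f(a)=\prew(s,a)$ and $c=\mu_{\phi^*}(s)$. Under Assumption \ref{ass:gaussian_policy_fixed_cov_full_rank}, an action drawn from $\pi_{\phi^*}$ is $a=c+Z$ with $Z\sim\mathcal{N}(0,\Sigma)$. Define the smoothed reward $\bar f(x):=\E[f(x+Z)]$.

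\textbf{Step 1: flatness of $\bar f$.} $D$-$\widehat{L}$ action robustness (Definition \ref{def:d-l-robustpol}) says exactly that $\bar f(c)-\bar f(c+u)\le\widehat{L}$ for all $\|u\|\le D$. This reading treats the robustness condition per state (or for a state-independent shift), which matches fixing $s$ in the statement. Since $f$ is $\beta$-smooth, $\bar f$ is also $\beta$-smooth: $\nabla\bar f=\E[\nabla f(\cdot+Z)]$, and Jensen's inequality passes the Lipschitz bound through the expectation.

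\textbf{Step 2: gradient bound at the mean.} Lemma \ref{lem:flat_smooth_imply_boundgrad} with $r=D$ gives $\|\nabla\bar f(c)\|\le \widehat{L}/D+\tfrac{\beta}{2}D$. Lemma \ref{lem:pointwise_vs_smoothed_grad} then yields
\[
\|\nabla f(c)\|\le \frac{\widehat{L}}{D}+\frac{\beta}{2}D+\beta\,\E\|Z\| = G .
\]

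\textbf{Step 3: no violating pairs inside the ball.} Apply Lemma \ref{lem:local_smooth_means_pairwise_robust} with this center $c$, using global $\beta$-smoothness, which holds on every ball. Its radius, with the gradient evaluated at $c$, is $\frac{1}{\beta}(K/\delta-\|\nabla f(c)\|)$. This is at least our $r=\frac{1}{\beta}(K/\delta-G)$ because $\|\nabla f(c)\|\le G$, and $r>0$ because $K/\delta>G$.

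The main obstacle is the strictness of the inequality at this radius. For $x,y\in B(c,r)$ with $\|x-y\|\le\delta$, the proof of that lemma gives $|f(x)-f(y)|\le(\|\nabla f(c)\|+\beta r)\delta\le(G+\beta r)\delta=K$. So a violation, which requires $>K$, cannot occur; the lemma's condition ``$<K$'' is not needed, only ``$\le K$''. I would redo this short calculation directly with $G$ in place of $\|\nabla f(c)\|$ rather than citing the lemma verbatim.

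\textbf{Step 4: union bound.} A violating pair requires $a_1\notin B(c,r)$ or $a_2\notin B(c,r)$. Each event has probability $P(\|a-c\|>r)=P(\|Z\|>r)$, so the union bound gives $2P(\|Z\|>r)$ for fixed $s$. Since the bound does not depend on $s$, integrating over $s\sim P(s)$ gives $P(S_{K,\delta}(\prew)\mid\pi_{\phi^*})\le 2P(\|Z\|>r)$.
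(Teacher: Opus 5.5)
Your proof is correct and follows the paper's argument step for step. You apply Lemma~\ref{lem:flat_smooth_imply_boundgrad} to the Gaussian-smoothed reward $\bar f$ on $B(c,D)$, pass to $\|\nabla f(c)\|\le G$ via Lemma~\ref{lem:pointwise_vs_smoothed_grad}, exclude violating pairs inside $B(c,r)$ as in Lemma~\ref{lem:local_smooth_means_pairwise_robust}, and finish with a union bound; your extra checks (that $\bar f$ inherits $\beta$-smoothness, and that at this exact radius one only gets $|f(x)-f(y)|\le K$, which suffices because a violation needs $>K$) tidy up points the paper glosses over without changing the route.
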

\begin{proof}
    Fix $s$ and denote $c=\mu_{\phi^*}(s)$.
    Denote the PR as $f(a)=\prew(s,a)$, mean action as $c=\mu_{\phi^*}(s)$, and the smoothed proxy reward $\bar{f}(c)=\E_{Z\sim\mathcal{N}(0,\Sigma)}[f(c+Z)]$.
    By Assumption \ref{ass:gaussian_policy_fixed_cov_full_rank}, $a\sim\pi_{\phi^*}(a|s)$ can be written as $a=c+Z$ with $Z\sim\mathcal{N}(0,\Sigma)$.
    By Lemma \ref{lem:flat_smooth_imply_boundgrad} and, applied to $\bar{f}$, which we know fulfills Assumption \ref{ass:action_smooth_proxy_reward}, we obtain on $B(c,D)$ the gradient bound
    \[
        \|\nabla \bar f(c)\| \le \frac{\widehat{L}}{D} + \frac{\beta}{2}D\,.
    \]
    From Lemma \ref{lem:pointwise_vs_smoothed_grad}, we know that the gradient norm of $f$ can then be bounded as
    \[
        \|\nabla f(c)\|
        \leq
        \underbrace{\frac{\widehat{L}}{D} + \frac{\beta}{2}D+\beta\,\E[\|Z\|]}_{\coloneq G} \,.
    \]
    We then know from Lemma \ref{lem:local_smooth_means_pairwise_robust}, that in $B(c,r)$ with non-violating radius $r=\frac{1}{\beta}(\frac{K}{\delta}-G)$ there can be no action pairs $a_1,a_2$ with $\|a_1-a_2\|\leq\delta$ and $|\prew(s,a_1)-\prew(s,a_2)|>K$, thus such violations can only occur if at least one action is outside $B(c,r)$:
    \[
        P(\|a_1-a_2\|\le\delta,\ |f(a_1)-f(a_2)|>K) \le 2\,P(\|a_1-c\|>r)\,.
    \]
    Finally, $a_1-c\sim\mathcal{N}(0,\Sigma)$, so $P(\|a_1-c\|>r)=P(\|Z\|>r)$.
\end{proof}

\subsection{BT-Loss Lower Bound Based on Sharpness}
\label{sec:app:btsharpnessbound}
Now that we have connected $D-\widehat{L}$ robustness to the probability of actions pairs violating a flatness assumption $P(
    \sharpset|\pi_\phi)$, we now analyze the incurred excess BT loss $\mathcal{L}_\mathrm{BT}$ due to these violations.
For this purpose, we make the assumption of a Lipschitz continuous true reward (Assumption \ref{ass:lipschitz_true_reward})
\begin{proposition}
    For a prompt $s\sim P(s)$, a pair of actions $(a_1,a_2)$, $L$-Lipschitz true reward function $R^{*}$, reward model $\prew$, the excess-risk can be lower bounded as
    \begin{equation}
        \mathcal{L}_\mathrm{BT}(\prew) - \mathcal{L}_\mathrm{BT}(R^{*}) \geq 2 (\sigma(K) - \sigma(L\delta))^2 P(\sharpset) \,,
    \end{equation}
    where $S_{K,\delta}:=\left\{(s,a_1,a_2): \|a_1-a_2\|\leq \delta, |\prew(s,a_1) - \prew(s,a_2)| > K \right\}$ is the set of action pairs for which the RM $\prew$ is not $K$-Lipschitz continuous and $\Pr(S_{K,\delta})$ is the probability of an action pair sampled $(a_1,a_2)\sim \pi_\phi(\cdot|s)$ being in this set.
\end{proposition}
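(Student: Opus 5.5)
We plan to write the excess BT loss as an expectation, over $s\sim P(s)$ and $(a_1,a_2)\overset{\mathrm{i.i.d.}}{\sim}\pi_\phi(\cdot|s)$, of a pointwise KL divergence between Bernoulli distributions. Then we lower bound this pointwise term on $\sharpset$ using Pinsker's inequality, and bound it below by zero elsewhere.

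The first step is the decomposition. For a fixed triple $(s,a_1,a_2)$, let $p^*=\sigma(R^*(s,a_1)-R^*(s,a_2))$ be the true preference probability, by the BT assumption \eqref{eq:bradleyterry}, and let $\tilde p=\sigma(\prew(s,a_1)-\prew(s,a_2))$. The BT cross-entropy \eqref{eq:bradleyterryCrossentropyLoss} averages over which of $a_1,a_2$ wins, with probability $p^*$. Its conditional value for a reward $R$ is therefore the Bernoulli cross-entropy $H(p^*,p_R)$. Since $H(p^*,\tilde p)-H(p^*,p^*)=\mathrm{KL}(\mathrm{Ber}(p^*)\,\|\,\mathrm{Ber}(\tilde p))$, we get
\[
\mathcal{L}_\mathrm{BT}(\prew)-\mathcal{L}_\mathrm{BT}(R^*)=\expect\bigl[\mathrm{KL}(\mathrm{Ber}(p^*)\,\|\,\mathrm{Ber}(\tilde p))\bigr]\ge 0 \text{ pointwise}.
\]

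The second step is Pinsker's inequality, $\mathrm{KL}(\mathrm{Ber}(p)\|\mathrm{Ber}(q))\ge 2(p-q)^2$. This produces the factor $2$ in the statement, so the expectation is at least $2\,\expect[(p^*-\tilde p)^2\mathbf{1}_{\sharpset}]$.

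The third step is a pointwise bound on $\sharpset$. By the antisymmetry $\sigma(-x)=1-\sigma(x)$, we may relabel $a_1,a_2$ so that $\Delta:=\prew(s,a_1)-\prew(s,a_2)>K$. Relabeling flips both $p^*$ and $\tilde p$ to $1-p^*$ and $1-\tilde p$, which leaves $(p^*-\tilde p)^2$ unchanged. Assumption \ref{ass:lipschitz_true_reward} and $\|a_1-a_2\|\le\delta$ give $|R^*(s,a_1)-R^*(s,a_2)|\le L\delta$. Since $\sigma$ is monotone, $\tilde p>\sigma(K)$ and $p^*\le\sigma(L\delta)$. Because $K>L\delta$, this yields $\tilde p-p^*\ge\sigma(K)-\sigma(L\delta)>0$, hence $(p^*-\tilde p)^2\ge(\sigma(K)-\sigma(L\delta))^2$ on $\sharpset$. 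Taking the expectation gives the stated bound with $P(\sharpset|\pi_\phi)$.

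The main obstacle is the bookkeeping in the first step. We must check that the paper's loss, which is written over winner/loser triplets drawn from $P_{\pi_\phi}$, is indeed the expected Bernoulli cross-entropy over unordered i.i.d. pairs. We must also check that the sharpness set is symmetric in $(a_1,a_2)$, so that the labeling convention does not matter. Once this identification is established, the remaining steps are one-line inequalities.
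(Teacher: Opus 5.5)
Your proposal is correct and follows the paper's proof essentially step for step. Both arguments rewrite the excess BT loss as an expected Bernoulli KL using an explicit label $Y\sim\mathrm{Bernoulli}(p)$, apply Pinsker's inequality $D_\mathrm{KL}\ge 2|p-q|^2$, use the $L$-Lipschitz true reward together with $K>L\delta$ to get $|p-q|\ge\sigma(K)-\sigma(L\delta)$ on $\sharpset$, and then take expectations with the indicator. The only cosmetic difference is how the sign of $\Delta_\theta$ is handled: you swap $a_1,a_2$ without loss of generality, whereas the paper directly computes the distance between the intervals $[1-\sigma(L\delta),\sigma(L\delta)]$ and $[0,1-\sigma(K)]\cup[\sigma(K),1]$. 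Both routes rely on the same symmetry $\sigma(-x)=1-\sigma(x)$.
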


Let $s \sim P(s)$ denote prompts, $a\in \mathcal{A}$ denote actions.
Let $R^{*}(s,a)$ be the true reward.
Under the BT model, the preference probability is $p:=\Pr(Y=1\mid s,a_1,a_2)=\sigma(\Delta_*)$, $\Delta_*:=R^{*}(s,a_1)-R^{*}(s,a_2)$, with the logistic function $\sigma(x)=\frac{1}{1 + e^{-x}}$, where $Y=1$ means that $a_1$ is preferred over $a_2$.
A parametric reward model $\prew(s,a)$ induces the predicted pairwise probability $q:=P_{\prew}(Y=1|s,a_1,a_2)=\sigma(\Delta_\theta)$,$\Delta_\theta:=\prew(s,a_1)-\prew(s,a_2)$.
The BT loss in \eqref{eq:bradleyterryCrossentropyLoss} can be rewritten as
\begin{equation}
    \mathcal{L}_\mathrm{BT}^{\pi^1} (\theta) = \E_{(s,a_w,a_l)}\left[-\log\sigma\left(\prew(s,a_\mathrm{w})-\prew(s,a_\mathrm{l}\right)) \right] \,,
\end{equation}
but it can also be rewritten to explicitly consider a preference label $Y$, where we have actions $a_1,a_2$; then draw $Y\sim\mathrm{Bernoulli}(p)$ with $p=\Pr(Y=1|s,a_1,a_2)$.
Taking expectations first over $Y|s,a_1,a_2$ and then over $(s,a_1,a_2)$, we get,
\begin{equation}
    \mathcal{L}_\mathrm{BT}^{\pi^1} (\theta) = \E_{(s,a_1,a_2)}\E_{Y\sim\mathrm{Bernoulli}(p)}\left[\ell(q;Y)\right] \,,
\end{equation}
for $Y\in\{0,1\}$ and $\ell(q;Y)=-\left[Y\log q+(1-Y)\log(1-q)\right]$. We proceed with this version.

\paragraph{Lower Bound} Condition on $(s,a_1,a_2)$ so that $p$ is fixed. Then
\begin{equation}
    \E_{Y\sim\mathrm{Bernoulli}(p)}\left[\ell(q;Y)\right]=H(p,q), \quad \E_{Y\sim\mathrm{Bernoulli}(p)}\left[\ell(p;Y)\right]=H(p),
\end{equation}
where $H(p,q)$ is the cross-entropy and $H(p)$ is the entropy.
Averaging over $(s,a_1,a_2)$ and subtracting yields
\begin{equation}
    \mathcal{L}(\prew) - \mathcal{L}(R^{*})=\E_{(s,a_1,a_2)}\left[D_\mathrm{KL}(\mathrm{Bernoulli}(p);\mathrm{Bernoulli}(q))\right]\,.
    \label{eq:excess_loss_kl}
\end{equation}
For a fixed locality parameter $\delta>0$ and margin threshold $K>0$, we define the sharpness set
\begin{equation}
    \sharpset:=\left\{ (s,a_1,a_2): \|a_1-a_2\|\leq\delta, |\Delta_\theta|>K   \right\}\,.
\end{equation}
Assume the true reward is $L$-Lipschitz:
\begin{equation}
    |R^{*}(s,a_1)-R^{*}(s,a_2)| \leq L \|a_1-a_2\| \Rightarrow |\Delta_*|\leq L\delta \quad \text{whenever}\quad \|a_1-a_2\|\leq\delta\,.
\end{equation}

Furthermore, we study the interval separation on $\sharpset$: On $\sharpset$ we have $\Delta_*\leq L\delta$ and $\Delta_\theta\geq K$.
By monotonicity and symmetry of $\sigma$,
\begin{equation}
    \begin{aligned}
         & p=\sigma(\Delta_*) \in I_*:=[\sigma(-L\delta),\sigma(L\delta)] = [1-\sigma(L\delta),\sigma(L\delta)], \\
         & q=\sigma(\Delta_\theta) \in I_\theta:=[0,1-\sigma(K)] \cup [\sigma(K),1], \,.
    \end{aligned}
\end{equation}

If we assume $K>L\delta$, the minimum distance between these sets becomes:
\begin{equation}
    \inf_{p\in I_*,q\in I_\theta}|p-q|=
    \min\left\{
    \sigma(K)-\sigma(L\delta), (1-\sigma(L\delta))-(1-\sigma(K))
    \right\}
    =\sigma(K)-\sigma(L\delta)\,.
\end{equation}

Consequently, for every $(s,a_1,a_2)\in\sharpset$,
\begin{equation}
    |p-q| \geq \sigma(K) - \sigma(L\delta)\,.
    \label{eq:prob_diff}
\end{equation}
Since we can use the inequality  $D_\mathrm{KL}(\mathrm{Bernoulli}(p);\mathrm{Bernoulli}(q))\geq2|p-q|^2$ for $p,q\in(0,1)$, combining with \eqref{eq:prob_diff} yields:
\begin{equation}
    D_\mathrm{KL}(\mathrm{Bernoulli}(p);\mathrm{Bernoulli}(q))\geq2(\sigma(K) - \sigma(L\delta))^2 \quad\text{for every} \quad (s,a_1,a_2)\in\sharpset \,.
    \label{eq:bound_member}
\end{equation}

Let $\mathbf{1}_S:=\mathbf{1}\{Z\in \sharpset\}$ indicate membership of $\sharpset$.
We can make \eqref{eq:bound_member} valid for all $(s,a_1,a_2)$ by multiplying the RHS with the indicator:

\begin{equation}
    D_\mathrm{KL}(\mathrm{Bernoulli}(p);\mathrm{Bernoulli}(q))\geq2(\sigma(K) - \sigma(L\delta))^2 \mathbf{1}_S \quad\text{for all} \quad (s,a_1,a_2) \,.
    \label{eq:bound_all}
\end{equation}
By taking the expectation of \eqref{eq:bound_all} and using \eqref{eq:excess_loss_kl}, we obtain the desired lower bound
\begin{equation}
    \mathcal{L}_\mathrm{BT}(\prew) - \mathcal{L}_\mathrm{BT}(R^{*}) \geq 2 (\sigma(K) - \sigma(L\delta))^2 P(\sharpset) \,.
\end{equation}

\subsection{Connection of Gradient Regularization and Lipschitz Continuity}
\label{app:gradreg_lipschitzness}
For completeness, we reproduce the argument of \citet{zhao_penalizing_2022}, which explicitly connects gradient regularization to Lipschitzness in parameters $\theta$.

By the mean value theorem for differentiable $L$, we have
$L(\theta_1)- L(\theta_2) = \nabla L(\zeta)^T(\theta_1-\theta_2)\,,$
with $\zeta=c\theta_1 + (1-c)\theta_2$, with some $c\in[0,1]$ and the Cauchy-Schwarz inequality then yields
$\|L(\theta_1)- L(\theta_2)\| \leq \|\nabla L(\zeta)\|\|(\theta_1-\theta_2)\|\,.$
Here, $||\nabla L(\zeta)||$ takes the role of the Lipschitz constant and as $\theta_2\to \theta_1$, $||\nabla L(\zeta)||$ becomes $||\nabla L(\theta)||$.
Thus we can see that gradient regularization leads to local Lipschitzness in parameter space, i.e., a flat local minimum.

\section{Experiment Details}
\label{app:expdetails}
In this section we provide additional experimental details.

\subsection{Gradient regularization implementation}
\begin{figure}
    \centering
    \begin{minipage}{0.8\columnwidth}
        \begin{minted}{python}
phi = model.state_dict()
actions = model.generate(states)
rewards = reward_fn(states, actions)

grad1 = torch.zeros_like(phi)
for idx in range(batch_size / accumulation_steps):
    # mb = microbatch
    loss = grpo_loss(states_mb, actions_mb, rewards_mb)
    loss.backward()
    grad1 += model.grad
grad1 = norm_clip(grad1)

phi_2 = phi + varepsilon * grad1
grad2 = torch.zeros_like(phi)
model.set_state_dict(phi_2)
for idx in range(batch_size / accumulation_steps):
    loss = grpo_loss(states_mb, actions_mb, rewards_mb, phi_2)
    loss.backwards()
    grad2 += model.grad
grad2 = norm_clip(grad2)

comb_grad = grad1 + gamma * (grad2 - grad1) / varepsilon
model.set_state_dict(phi)
model.grad = comb_grad
optimizer.step()
\end{minted}
    \end{minipage}
    \caption{Implementation of finite-difference gradient regularization with GRPO in PyTorch}
\end{figure}

In our experiments, we use the GR method \citep{karakida_understanding_2023} based on the finite difference estimate
$\Delta_\phi\|\nabla_\phi \mathcal{L}(\phi)\|^2 =
    \frac
    {\nabla_\phi \mathcal{L}(\phi+\varepsilon\nabla_\phi \mathcal{L}(\phi)) - \nabla_\phi \mathcal{L}(\phi)}
    {\varepsilon}\,.$
We, thus, need to perturb the parameters $\phi$.
Empirically, we found it to be beneficial to perturb only the parameters of the transformer blocks, including attention matrices, MLP weights and layer norm parameters, but not perturb the embedding layer or final output layer.
In principle, to calculate $\|\nabla_\phi J(\phi + \nabla_\phi J(\phi))\|$ we would also need to sample new actions $a_i\sim \pi_{\phi + \nabla_\phi J(\phi)}(a|s)$ and estimate the gradient using these.
In practice, the computational overhead of this would be large, we reuse the same actions.
This introduces some bias which could be corrected using importance sampling, however, empirically we found this to be unnecessary.
We also need to choose a perturbation strength $\varepsilon$.
While \citet{karakida_understanding_2023} found a relatively large $\varepsilon\approx0.05$ to perform best, initial experiments showed $\varepsilon=10^{-3}$ to perform well in our setting. We thus used it through-out our experiments.
We further use gradient clipping, both for the disturbance $\nabla_\phi J(\phi)$ and the gradient $\|\nabla_\phi J(\phi + \nabla_\phi J(\phi))\|$, each to 10.
We found this to prevent gradient spikes from destabilizing training.
The final combined gradient is then again clipped to 1.0 within DeepSpeed, as is done for the non GR methods as well.
We train our models using DeepSpeed ZeRO 2 \cite{rajbhandari_zero_2020} and use gradient accumulation.

\subsection{RLHF Details}
\label{app:rlhf-details}
\begin{table}[]
    \caption{GRPO hyper-parameters in RLHF experiments. We tuned the learning rate for each method on Qwen 2.5-1.5B experiments from $1\times 10^{-6}, 3\times 10^{-6}, 5 \times 10^{-6}$.}
    \centering
    \begin{tabular}{l c c}
        \toprule
        Method              & GR                                          & KL, Resets, No Reg \\
        \midrule
        Optimizer           & \multicolumn{2}{c}{Adam \citep{Kingma2014}}                      \\
        LR                  & $5\times10^{-6}$                            & $3\times10^{-6}$   \\
        Adam $\beta_1$      & \multicolumn{2}{c}{0.9}                                          \\
        Adam $\beta_2$      & \multicolumn{2}{c}{0.999}                                        \\
        Batchsize           & \multicolumn{2}{c}{256}                                          \\
        Rollouts per Prompt & \multicolumn{2}{c}{8}                                            \\
        Temperature         & \multicolumn{2}{c}{0.7}                                          \\
        GR $\varepsilon$    & $1\times10^{-3}$                            & -                  \\
        Gradient Clipping   & \multicolumn{2}{c}{1.0}                                          \\
        Output Length       & \multicolumn{2}{c}{106}                                          \\
        \bottomrule
    \end{tabular}
    \label{tab:hyperparam_rlhf}
\end{table}

For our RLHF experiments, we first need to train an SFT model, sample a training dataset $D_\mathrm{RM}$, and train an RM.
For both, we use the code and hyper-parameters provided by \citet{huang_n_2024}, which use the AdamW optimizer \citep{loshchilov_decoupled_2019} with weight decay.
The SFT models are trained on the SFT dataset for one epoch.
For the summarization experiments, we use the SFT dataset provided by \citep{huang_n_2024}, for Alpaca experiments we use the Alpaca-Instructions dataset \citep{dubois_alpacafarm_2023}, but filter it by length following \citep{ackermann_off-policy_2025}.
This length filtering to a maximum length of 512 tokens significantly decreases the computational cost.
Further, while Alpaca-Instructions contains separate splits for SFT, RM and RL training, we combine them to a single dataset as used in the summarization setting.
To obtain the RM training dataset, we sample pairs of responses from the SFT model and label them with either the Gold RM \say{Skywork-RewardLlama-3.1-8B-v0.2} \citep{liu_skywork-reward_2024} for summarization or with GPT4.1-Nano for Alpaca experiments.
For the summarization task we use 278,496 preference pairs, for Alpaca Experiments we use 43,008 pairs for the 3B model and 86,016 pairs for the 0.5B and 1.5B models.
With the smaller data amount the 0.5B and 1.5B models did not produce sufficiently accurate RMs to use for subsequent GRPO updates.
We train the RMs, initialized from the SFT model, for one epoch on the dataset with the hyper-parameters as recommended by \citep{huang_n_2024}.
We then use the Dr.GRPO implementation provided by TRL \citep{vonwerra2022trl} with the hyper-parameters as listed in Table \ref{tab:hyperparam_rlhf}.
Additionally, we tuned the KL penalty strength $\beta$ and GR strength $\gamma$ as listed Tables \ref{tab:kl_optimization_tldr} and \ref{tab:kl_optimization_alpaca}.
We use the standard KL penalty estimate provided by TRL, including the KL penalty in the loss rather than in the reward.

\begin{table}[h]
    \centering
    \caption{KL penalty values considered in hyperparameter optimization for Reference Resets Tl;DR.}
    \begin{tabular}{l c}
        \toprule
        Experiment                       & KL values                                \\
        \midrule
        GRPO Pythia 1B TL;DR             & $\{0.03, 0.04, 0.05, 0.06, 0.07, 0.08\}$ \\
        GRPO Pythia 2.8B TL;DR           & $\{0.04, 0.06, 0.08, 0.10\}$             \\
        \midrule
        GRPO+Ref Reset Pythia 1B TL;DR   & $\{0.2, 0.25, 0.3, 0.35, 0.4\}$          \\
        GRPO+Ref Reset Pythia 2.8B TL;DR & $\{0.25, 0.3, 0.35, 0.4\}$               \\
        \bottomrule
    \end{tabular}
    \label{tab:kl_optimization_tldr}
\end{table}

\subsection{Gradient Regularization on Alpaca Farm}
We perform on the AlpacaFarm dataset \citep{dubois_alpacafarm_2023} with Qwen2.5 models \citep{qwen_qwen25_2025}.
Similar to the TL;DR experiments we first train an SFT model, sample pairs of responses from it, and obtain pairwise comparisons from GPT4.1 Nano.
We then train an RM based on these comparisons, initialized from the SFT model.

\begin{table}[h]
    \centering
    \caption{KL penalty values considered in hyperparameter optimization for Alpaca GPT4.1 Nano experiments.}
    \begin{tabular}{l c}
        \toprule
        Experiment             & Hyperparameters                                                            \\
        \midrule
        GR Qwen 2.5 0.5B       & $\gamma \in \{1 \times 10^{-1},\, 1 \times 10^{-2},\, 1 \times 10^{-3}\}$  \\
        GR Qwen 2.5 1.5B       & $\gamma \in \{1 \times 10^{-1},\, 1 \times 10^{-2},\,  1 \times 10^{-3}\}$ \\
        GR Qwen 2.5 3B         & $\gamma \in \{3 \times 10^{-3}\}$                                          \\
        \midrule
        KL Qwen 2.5 0.5B       & $\beta \in \{0.03, 0.05, 0.07, 0.1, 0.15\}$                                \\
        KL Qwen 2.5 1.5B       & $\beta \in \{0.03, 0.05, 0.07, 0.1, 0.15\}$                                \\
        KL Qwen 2.5 3B         & $\beta \in \{0.05, 0.1\}$                                                  \\
        \midrule
        KL+Reset Qwen 2.5 0.5B & $\beta \in \{0.2, 0.4, 0.5\}$                                              \\
        KL+Reset Qwen 2.5 1.5B & $\beta \in \{0.3, 0.4, 0.5\}$                                              \\
        KL+Reset Qwen 2.5 3B   & $\beta \in \{0.2, 0.3, 0.4\}$                                              \\
        \bottomrule
    \end{tabular}
    \label{tab:kl_optimization_alpaca}
\end{table}

\subsection{Reasoning Experiments}
\begin{table}[]
    \caption{Hyperparameters in reasoning experiments with GR}
    \centering
    \begin{tabular}{l c c}
        \toprule
        Dataset             & GSM8K                                & MATH \\
        \midrule
        Optimizer           & \multicolumn{2}{c}{Adam}                    \\
        LR                  & \multicolumn{2}{c}{$5\times10^{-6}$}        \\
        Adam $\beta_1$      & \multicolumn{2}{c}{0.9}                     \\
        Adam $\beta_2$      & \multicolumn{2}{c}{0.999}                   \\
        Batchsize           & 256                                  & 1024 \\
        Rollouts per Prompt & \multicolumn{2}{c}{8}                       \\
        Temperature         & \multicolumn{2}{c}{0.7}                     \\
        GR $\varepsilon$    & \multicolumn{2}{c}{$10^{-3}$}               \\
        GR $\gamma$         & \multicolumn{2}{c}{$10^{-3}$}               \\
        Gradient Clipping   & \multicolumn{2}{c}{1.0}                     \\
        Output Length       & 768                                  & 1024 \\
        \bottomrule
    \end{tabular}
    \label{tab:hyperparam_reason_gr}
\end{table}

\begin{table}[]
    \caption{Hyperparameters in reasoning experiments with KL penalty or no penalty}
    \centering
    \begin{tabular}{l c c}
        \toprule
        Dataset             & GSM8K                                & MATH \\
        \midrule
        Optimizer           & \multicolumn{2}{c}{Adam}                    \\
        LR                  & \multicolumn{2}{c}{$3\times10^{-6}$}        \\
        Adam $\beta_1$      & \multicolumn{2}{c}{0.9}                     \\
        Adam $\beta_2$      & \multicolumn{2}{c}{0.999}                   \\
        Batchsize           & 256                                  & 1024 \\
        Rollouts per Prompt & \multicolumn{2}{c}{8}                       \\
        Temperature         & \multicolumn{2}{c}{0.7}                     \\
        KL penalty $\beta$  & 0.05/ 0.0                            & 0.0  \\
        Gradient Clipping   & \multicolumn{2}{c}{1.0}                     \\
        Output Length       & 768                                  & 1024 \\
        \bottomrule
    \end{tabular}
    \label{tab:hyperparam_reason_kl}
\end{table}
We use the hyperparameters as stated in Table \ref{tab:hyperparam_reason_gr} and Table \ref{tab:hyperparam_reason_kl}.
We did not tune $\beta$, $\epsilon$ or $\gamma$ and simply used conservative values from the RLHF experiments.
For both GR and no regularization we tried learning rates $3\times 10^{-6}$ and $5\times 10^{-6}$.
For GR, the larger learning rate was beneficial.

We follow the setup of \citet{wei_redit_2025}, with GSM8K using one-shot prompting, math uses chain-of-thought prompting, both with the default Qwen2.5-Instruct system instructions.
We also use the same reward terms, which are based on the rewards used by Open R1 \citep{openr1}.
For GSM8K, the prompt is \verb+Respond in the following format: <reasoning>...</reasoning><answer>...</answer>+, followed by a one-shot example.
The reward consists of a rule-based correctness reward ($[0,2]$), a formatting term checking whether the answer is an integer formatting reward ($\{0, 0.5\}$), a formatting term checking whether the formatting is exactly followed including whitespace ($\{0, 1.0\}$), and excluding whitespace ($\{0, 1.0\}$), and a formatting reward for matching XML tags ($\{0, 0.5\}$).

On MATH the prompt is simply the problem statement followed by \verb+Let's think step by step and output the+ \verb+final answer within \\boxed{}.+.
The used reward consists of a correctness reward ($[0,2]$) and a formatting reward rewarding up to three of the following with $1/3$ each: \textit{Step \#} keywords, Numbered lists, bullet points, \textit{First, Second, Next, Finally} keywords.
This is done to encourage reasoning \citep{openr1}.

\subsection{LLM Judge Setup}
In the main text we use Qwen2.5 1.5B-Instruct \citep{qwen_qwen25_2025} as a judge with the following prompt:

\begin{minted}{text}
Judge the correctness of the answer and reasoning for the given problem.
The format is as follows:

<problem>
...
</problem>
<model_answer>
<reasoning>
...
</reasoning>
<answer>
...
</answer>
</model_answer>
<correct_solution>
...
</correct_solution>

You will reply with the following XML format:
<judgement>
...
</judgement>
<correctness_score>
...
</correctness_score>
<coherence_score>
...
</coherence_score>

The model_answer may contain mistakes in the reasoning, the final answer, and in the format.

Give a one sentence judgement on the model_answer, then you will give scores from 1 to 5 for correctness and for coherence of the reasoning trace.
\end{minted}

We additionally use 1-shot prompting with a correct example.
During training the agent is provided the correctness score scaled to $[0,2]$, along with the same formatting rewards used in the rule-based-reward setting.

\section{Additional Experiments}
\label{app:additional_experiments}

\subsection{Gradient Regularization $\gamma$, $\varepsilon$, and learning rate sweeps}
\label{app:hparam_robustness}

\begin{figure}
    \centering
    \begin{minipage}{.67\columnwidth}
        \centering
        \includegraphics[width=0.47\linewidth]{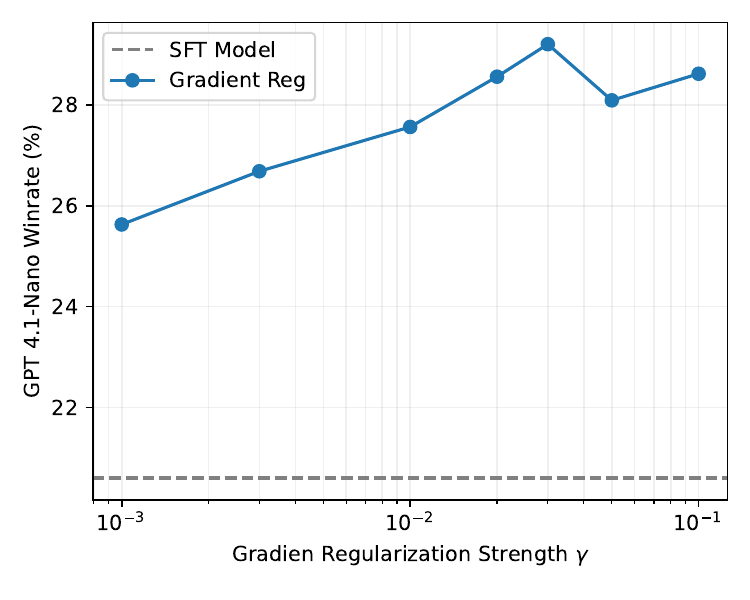}
        \includegraphics[width=0.47\linewidth]{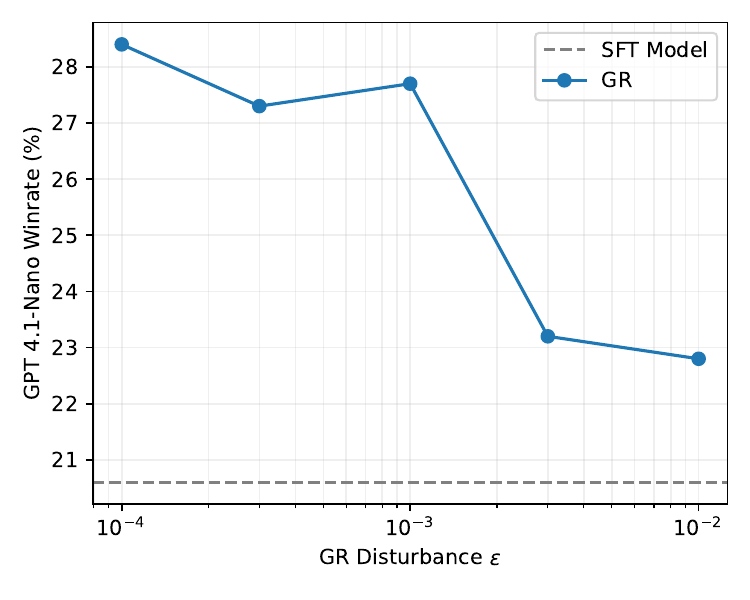}
        \captionof{figure}{\textbf{GR is rather predictable in choice of hyper-parameter.} Qwen2.5-1.5B on AlapcaFarm with different GR strengths $\gamma$ and fixed $\varepsilon=10^{-3}$ (left) with, and different disturbance strengths $\varepsilon$ and fixed $\gamma=3\times10^{-2}$(right) on the AlpacaFarm dataset, with early stopping.}
        \label{app:fig:gradregstrength_alpaca}
    \end{minipage}
    \hfill
    \begin{minipage}{.3\columnwidth}
        \centering
        \includegraphics[width=\linewidth]{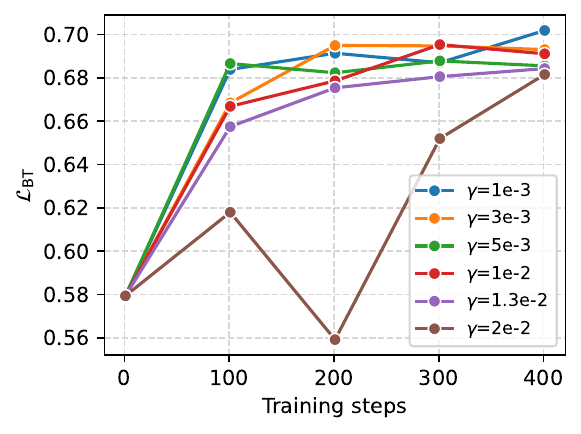}
        \captionof{figure}{\textbf{Strong GR can decrease BT loss below initial value.} BT loss $\mathcal{L}_\mathrm{BT}(\phi,\theta)$ during training of a Qwen 2.5 0.5B model on the TL;DR task with GR, using the Gold reward model.}
        \label{fig:qwentldrgradregstrength_btloss}
    \end{minipage}
\end{figure}
\begin{figure}
    \begin{minipage}{.45\columnwidth}
        \centering
        \includegraphics[width=0.7\columnwidth]{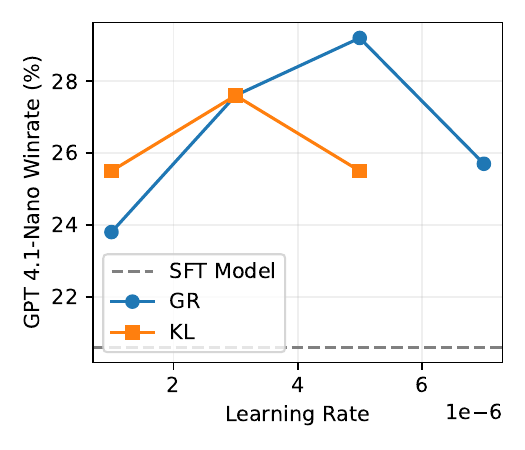}
        \caption{Learning rate sweep for KL penalty and GR when training a Qwen 2.5-1.5B model on the Alpacafarm dataset. KL $\beta=0.1$, GR $\gamma=3\times10^{-2}$}
        \label{fig:app:gpt_lr_sweep}
    \end{minipage}
    \hfill
    \begin{minipage}{.45\columnwidth}
        \centering
        \centering
        \includegraphics[width=0.7\linewidth]{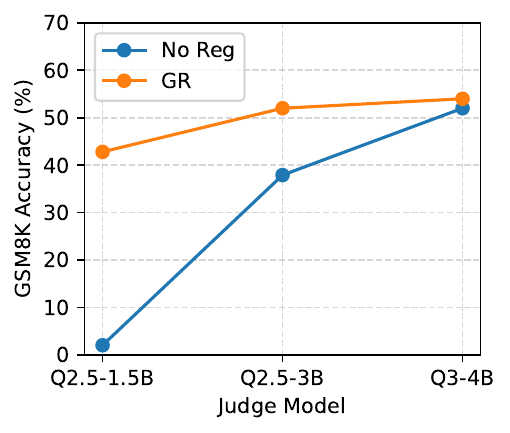}
        \captionof{figure}{\textbf{GR allows usage of cheaper judges to reach same performance.} Test accuracy when training a Qwen2.5-0.5B model on GSM8K with different judges.}
        \label{fig:app:llmjudge_ablation}
    \end{minipage}
\end{figure}

To evaluate the sensitivity of GR to the hyper-parameters controlling strength of the regularization $\gamma$ and the strength of the perturbation $\varepsilon$, we performed experiments on the Qwen2.5 1.5B model in the Alpaca GPT4.1 Nano setting.
We fix either $\gamma=3\times10^{-2}$ or $\varepsilon=10^{-3}$ and vary the respective other parameter.
We show the results in Figure \ref{app:fig:gradregstrength_alpaca}.
We find the performance to be rather predictable, facilitating hyper-parameter tuning.
We used the same disturbance strength $\varepsilon=10^{-3}$ for all experiments and did not find it necessary to tune it per task or model.
In the reasoning experiments we did not tune $\gamma$ and simply used $\gamma=10^{-3}$, however, in the RLHF experiments we found it necessary to tune $\gamma$ as described above, just like we found it necessary to tune $\beta$ for the KL penalty.

We also perform a learning rate sweep for the KL penalty and GR in the same Alpaca, Qwen2.5 1.5B setting. 
We use the best performing KL hyper-parameters from our main hyper-parameter optimization, i.e., $\beta=0.1$ for the KL penalty and $\gamma=0.03$ for GR.
The results are shown in \cref{fig:app:gpt_lr_sweep}. 

\subsection{Gradient regularization decreases BT loss}
\label{app:gradient_reg_bt_loss}

We also perform experiments in the synthetic TL;DR gold model setup, training a Qwen 2.5 0.5B model with different GR strengths $\gamma$.
The results in Figure \ref{fig:qwentldrgradregstrength_btloss} show that stronger regularization leads to a lower BT model loss $\mathcal{L}_\mathrm{BT}(\phi,\theta)$, evaluated on 4096 action pairs with labels from the gold reward model.
Interestingly, training with strong GR can result in a decreasing BT loss $\mathcal{L}_\mathrm{BT}$ beyond the initial BT loss, which we did not observe when utilizing either KL regularization or Reference Resets.
This illustrates the practical strength of the connection between gradient norm and PR accuracy.

\subsection{LLM judge ablation}
\label{app:llm_judge_ablation}
In the main text we are using Qwen2.5 1.5B-Instruct as judge. 
As we are using similarly sized policy models, we believe this could be a useful proxy for experiments in which LLMs are trained with equally large judge models.
However, to see whether GR is also useful with comparatively stronger judges, we additionally run experiments with Qwen2.5 3B-Instruct and Qwen3 4B Instruct-2507 \cite{yang_qwen3_2025} as judges.
We train a Qwen2.5 0.5B-Instruct model on GSM8K.
For Qwen3 we use \verb+Judgement:... Correctness_score:... Coherence_score:...+ as reply format instead of the xml tags, as we found Qwen3 to perform better with this format.
As shown in \cref{fig:app:llmjudge_ablation}, GR enables us to reach the same performance using a cheaper judge, potentially saving total computational cost.
In our setup with 8 GH200 GPUs, training with GR and the 1.5B judge took 84 minutes, while training without GR and the 4B judge took 88 minutes.
The additional cost of GR can thus be amortized by being able to use a cheaper judge.

\subsection{KL penalty schedule}
\begin{figure}
        \hspace{2cm}
        \includegraphics[height=4.5cm]{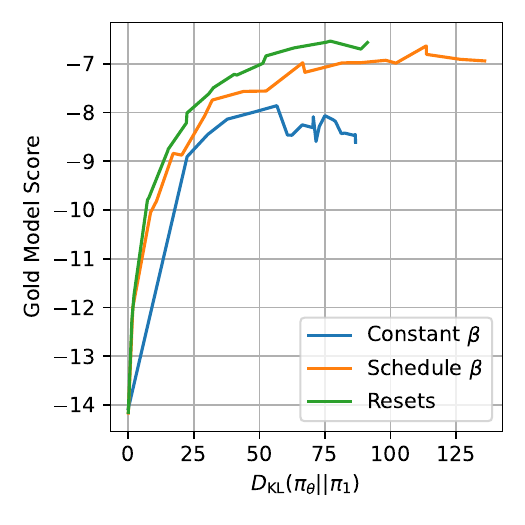}
        \hfill
        \includegraphics[height=4.5cm]{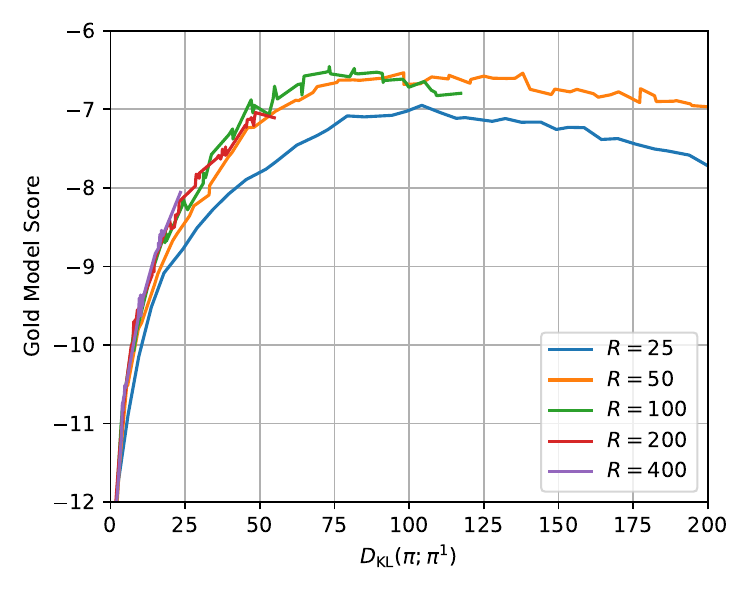}
        \hspace{2cm}
        \captionof{figure}{Pythia 1B on TL;DR task. Left: Reference Resets schedule ablation. A scheduled $\beta$ performs better than a constant value, however, it does not match the performance of full Reference Resets.
        Right: Steps per reset $R$ for GRPO + Reference Resets. A larger R is generally beneficial, but requires significantly more gradient steps.}
        \label{fig:app:reset_ablation_resetrate}
\end{figure}
While we have shown in the main text that Reference Resets perform better in RLHF than simply decreasing the strength of the KL penalty $\beta$ from the beginning, another hypothesis might be that decreasing the KL strength $\beta$ during training will match the results of Reference Resets.
As an additional baseline, we thus decrease the strength of the KL penalty to $\beta'=\beta/i$ in iteration $i$, while keeping the reference as $\pi^1$.
Thus in each iteration the optimal policy under Reference Resets and the scheduled $\beta$ is the same, assuming the previous iterations converged to their respective optimal policies.
As shown in Figure \ref{fig:app:reset_ablation_resetrate} (left), the schedule indeed yields a notable improvement over a fixed $\beta$, but also does not match the performance of Reference Resets.
We attribute this to the KL penalty explicitly keeping the policy close to the good region of the RM found in previous iteration, which a scheduled $\beta$ does not ensure.

\subsection{Steps per reset $R$}
Our theoretical derivation suggests that flatter minimum corresponds to a more accurate reward model.
Experiments show that the gradient norm keeps decreasing within each iteration even after the PR score is saturated.
Thus, we expect training for more steps $R$ per reset to improve performance at the cost of a higher computational expense.
We evaluate different values for $R$ when training a Pythia 1B model on the summarization task and show the results in Figure \ref{fig:app:reset_ablation_resetrate} (right), training each model for 1500 total steps.
Indeed, we find more steps $R$ to lead to a better KL-Gold-Reward tradeoff and perhaps a better asymptotic reward.
However, for high values such as $R=400$ the computational cost becomes prohibitively expensive, such that we use $R=200$ in experiments unless otherwise specified.

\subsection{Ablations}
\begin{table}[]
\caption{We evaluate different action reuse methods when training a Pythia 1B for the TL;DR task. In our main experiments we reuse actions without IS correction or resampling, which we find to perform best in experiments}
\label{tab:app:action_reuse}
\centering
\begin{tabular}{lcccc}
\toprule
GR Method       & $\gamma=3\times 10^{-4}$           & $\gamma=10^{-3}$           & $\gamma=3\times10^{-3}$           & $\gamma=10^{-2}$           \\
\midrule
Action Reuse    & 48.7\%          & \textbf{58.4\%} & \textbf{55.5\%} & \textbf{47.6\%} \\
Action Reuse IS & 52.2\%          & 49.4\%          & 54.6\%          & 42.9\%          \\
Action Resample & \textbf{53.0\%} & 49.3\%          & 48.5\%          & 39.3\% \\
\bottomrule
\end{tabular}
\end{table}

\begin{table}[]
\caption{In our main experiments we perturb only the transformer blocks, not the embedding and lm heads. In this experiment we train a Pythia 1B model for the TL;DR with either all parameters or only the transformer blocks being perturbed.}
\label{tab:app:allblock_perturb}
\centering
\begin{tabular}{lcccc}
\toprule
     & $\gamma=3\times 10^{-4}$ & $\gamma=10^{-3}$ & $\gamma=3\times10^{-3}$ & $\gamma=10^{-2}$      \\
     \midrule
Transformer blocks only & 48.7\%          & \textbf{58.4\%} & \textbf{55.5\%} & \textbf{47.6\%} \\
Perturb all parameters  & \textbf{48.9\%} & 49.1\%          & 53.6\%          & 47.0\% \\
\bottomrule
\end{tabular}
\end{table}

In our implementation of GR we made two key design decisions:
We reuse the actions generated by the unperturbed policy without resampling or an IS correction, and we only apply perturbations to the transformer blocks of the LLM.
We thus performed two additional experiments with Pythia 1B on the TL;DR task with the Gold model setup and one random seed each.

To investigate action reuse we evaluate two alternatives.
The first alternative is to sample new actions $a\sim \pi_{\phi + \varepsilon \nabla J(\phi,\theta)}(a|s)$.
The other option we evaluate is to correct for the difference between $\pi_{\phi + \varepsilon \nabla J(\phi,\theta)}(a|s)$ and $\pi_{\phi}(a|s)$ by using actions from the unperturbed policy with IS.
We use vanilla IS with density ratios clipped to $[0.5,1.5]$.
It is possible that other IS strategies could perform better.
The results are shown in \cref{tab:app:action_reuse}.

When using IS to correct for the distribution shift, the probability ratios are relatively well behaved.
For example for the perturbation strength $\epsilon=1e-3$, the importance weights have mean$\approx1.0$, std$\approx0.25$, min$\approx0.6$, max$\approx1.4$.
We believe this relative proximity of both distributions explains why skipping the importance sampling is reasonable. 
It also allows us to save two additional forward passes of the model necessary to calculate the probability ratio. 
While sampling new actions may seem like the best justified solution, in addition to almost doubling the runtime it also increases the variance of the gradient estimate, leading to a worse policy.

To evaluate whether perturbing only the transformer blocks has a large impact, we run an additional experiment in which we apply the perturbations to all parameters of the LLM.
The results are shown in \cref{tab:app:allblock_perturb}, showing a benefit to only perturbing the transformer layers.
We speculate that this is beneficial as perturbing the embedding layers may destabilize training by having an over-sized impact on the model.

\subsection{Runtime Comparison}
\begin{table}[]
\caption{Walltime and win-rate when early-stopping for Qwen 2.5 1.5B models trained on the AlpacaFarm dataset.}
\label{tab:app:runtime_comp}
\centering
\begin{tabular}{lccc}
\toprule
Method & Seed \#1                    & Seed \#2                   & Seed \#3             \\
\midrule
KL             & 29.9 mins / 25.1\%          & 41.1 mins / 27.6\%         & 19.8 mins / 38.5\%          \\
Reset          & 32.9 mins / 22.3\%          & 35.4 mins / 27.1\%         & \textbf{14.8 mins / 42.4\%} \\
GR             & \textbf{22.1 mins / 28.3\%} & \textbf{8.5 mins / 29.2\%} & 17.3 mins / 40.2\%   \\
\bottomrule
\end{tabular}
\end{table}
Using GR requires an additional forward and backward pass of the model, as well as time to save the unperturbed weights and reload them before applying the gradients.
Instead, a KL penalty only requires a forward pass of the reference model.
We thus compare the runtime before early stopping in our Qwen 2.5 1.5B AlpacaFarm RLHF experiments, trained on 8 GH200 GPUs.
The results in \cref{tab:app:runtime_comp} show that GR converges faster to a better solution than a KL penalty on all three seeds, and faster than Reference Resets on two out of three seeds.

\subsection{Reward hacking example}
An example output by a Qwen2.5 0.5B model trained without regularization hacking a Qwen2.5-1.5B-Instruct judge:
\begin{minted}{text}
<reasoning>
1. Given values:
   \\[
    \text{initial}} = 500, \text{spent_on_clothes}} = 20/100 \times 500 = 100, \text{remaining_after_clothes}} = 500 - 100 = 40}}, \text{spent_on_cds}} = 25/100 \times 40 = 100}}))))))))))))))))))))))))))))))>>
To find the final amount, we perform the following steps:
\[
    \text{final_left}} = 500 - 100 + 100 = 119}}
\))
</code>>
</p>>
</reasoning>
<answer>
119
</answer> 
\end{minted}

\section{Reference Resets in RLHF}
\label{sec:asymptotic_analysis}
\begin{figure}
    \centering
    \includegraphics[width=\linewidth]{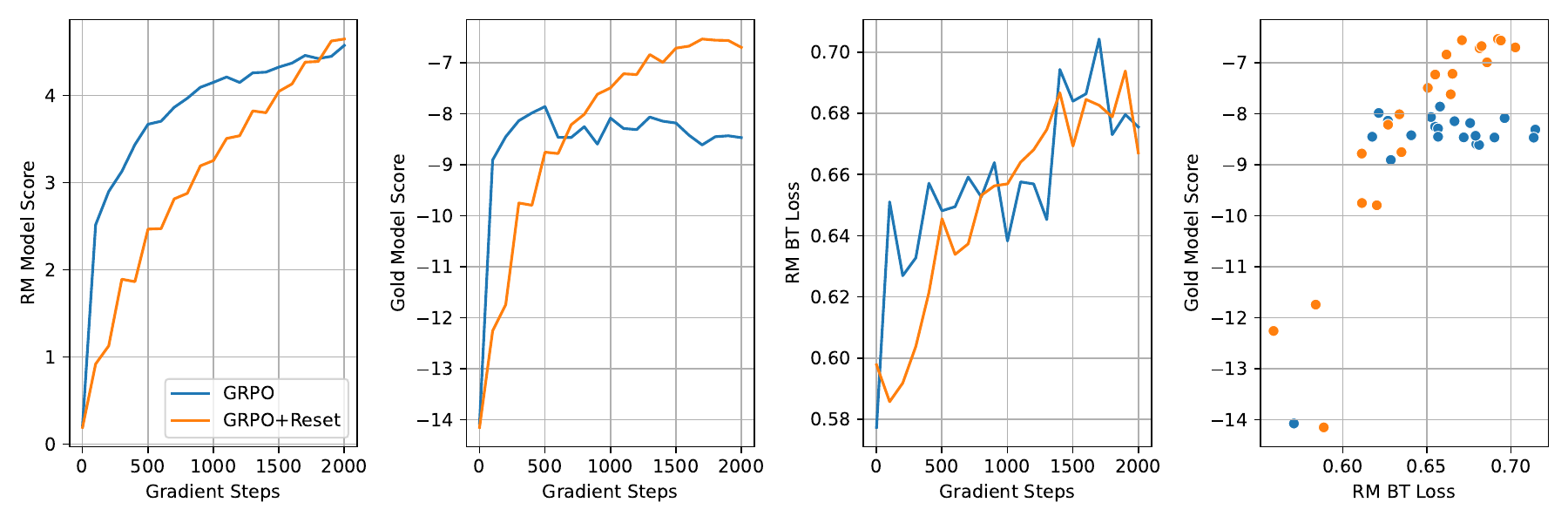}
    \caption{Proxy and gold reward for no resets (blue, $\beta=0.06$) and resets (orange, $\beta=0.3$). The proxy reward should thus match after 5 resets which here with $R=200$ corresponds to 1200 training steps. The achieved proxy reward is relatively similar at training step $t=1200$, however, the gold reward with resets is significantly higher.}
    \label{fig:app:refresetklstrength}
\end{figure}
To investigate the impact of reference resets in RLHF we provide an asymptotic argument, showing an equivalence between using reference resets and using a lower $\beta$.
While we show that this equivalence indeed shows up in experiments for the RM reward $R_\theta$, the achieved true reward $R^*$ is significantly higher when doing reference resets than when using a lower $\beta$.
This cannot be explained by the asymptotic argument, we thus continue by providing an analysis from the point of view of optimization dynamics.

\subsection{Asymptotic analysis}

It is well known (e.g. \citep[Appendix A.1]{rafailov_direct_2023}) that the optimal solution of the KL-regularized optimization problem  $\arg\max_\pi \prew(s,a) - D_\mathrm{KL}(\pi,\pi^1)$ is
\begin{equation}
    \pi(s,a) \propto \pi^1(s,a)\exp\left(\beta^{-1}\prew(s,a)\right) \,.
\end{equation}
With reference resets, we are solving this problem repeatedly, thus for iteration $k$
\begin{equation}
    \pi^k(s,a) \propto \pi^{k-1}(s,a)\exp\left(\beta^{-1}\prew(s,a)\right) \,.
\end{equation}

If we insert $\pi^{k-1}$ into this, we obtain
\begin{equation}
    \pi^k(s,a) \propto \pi^{1}(s,a)\exp\left((\beta/k)^{-1}\prew(s,a)\right) \,.
\end{equation}
Therefore, the optimal policy for Reference Resets with $k$ iterations and KL-penalty strength $\beta$ should be the same as the solution without resets with a weaker KL-penalty weight $\beta'=\beta/k$.
In experiments, we can indeed see a similar behavior when only looking at the RM reward, as shown in Figure \ref{fig:app:refresetklstrength} (left).
There, after 1200 steps the RM reward $\prew$ achieved with reset is roughly in the range of reward values without reference resets.
As this argument makes no statements about the true reward $R^*$, we might expect it to be similar for both methods as well.
Surprisingly, we instead find that the true reward achieved by Reference Resets is significantly higher than the true reward achieved in a single stage.
We believe this effect cannot be explained by an asymptotic analysis, thus motivating our optimization dynamics argument.
In Figure \ref{fig:app:refresetklstrength} (right), we also show that, by using Reference Resets, higher gold reward regions can be obtained for the same RM BT loss $\mathcal{L}_\mathrm{BT}$.

\end{document}

%% file: math_commands.tex
\usepackage{amsmath,amsfonts,bm}

\def\eqref#1{equation~\ref{#1}}

\def\1{\bm{1}}

\DeclareMathAlphabet{\mathsfit}{\encodingdefault}{\sfdefault}{m}{sl}
\SetMathAlphabet{\mathsfit}{bold}{\encodingdefault}{\sfdefault}{bx}{n}

\newcommand{\E}{\mathbb{E}}

\newcommand{\R}{\mathbb{R}}

%% file: figures/llm_judge/llmjudge_threepanel_bothmethods.tex
\begin{tikzpicture}

  \definecolor{darkgrey176}{RGB}{176,176,176}
  \definecolor{darkorange25512714}{RGB}{255,127,14}
  \definecolor{lightgrey204}{RGB}{204,204,204}
  \definecolor{steelblue31119180}{RGB}{31,119,180}

  \begin{groupplot}[
      group style={
          group size=3 by 1,
          horizontal sep=1.2cm,
        },
      width=\figonewidth,
      height=\figoneheight,
      ylabel style={yshift=-4pt},
      xlabel style={yshift=5pt},
      legend cell align={left},
      legend style={
          fill opacity=0.8,
          draw opacity=1,
          text opacity=1,
          at={(0.2,0.97)},
          anchor=north west,
          draw=lightgrey204,
          font=\footnotesize,
          inner sep=1.5pt,
          nodes={scale=0.75, transform shape},
        },
    ]
    \nextgroupplot[
      tick align=outside,
      tick pos=left,
      x grid style={darkgrey176},
      xlabel={Gradient Step},
      xmajorgrids,
      xmin=-48.95, xmax=1049.95,
      xminorgrids,
      xtick style={color=black},
      y grid style={darkgrey176},
      ylabel={Proxy Reward},
      ymin=0.69779052734375, ymax=0.95479736328125,
      ytick style={color=black}
    ]
    \addplot [semithick, noregcolor, dashed, mark=*, mark size=2, mark options={solid}]
    table {%
        1 0.717041015625
        101 0.769775390625
        201 0.737548828125
        301 0.77197265625
        401 0.82373046875
        501 0.911376953125
        601 0.91357421875
        701 0.88232421875
        801 0.83984375
        901 0.927978515625
        1000 0.943115234375
      };
    \addplot [semithick, gradregcolor, dashed, mark=*, mark size=2, mark options={solid}]
    table {%
        1 0.70947265625
        101 0.76611328125
        201 0.73828125
        301 0.740234375
        401 0.78955078125
        501 0.779052734375
        601 0.7998046875
        701 0.77880859375
        801 0.8076171875
        901 0.823486328125
        1000 0.814208984375
      };

    \nextgroupplot[
      tick align=outside,
      tick pos=left,
      x grid style={darkgrey176},
      xlabel={Gradient Step},
      xmajorgrids,
      xmin=-48.95, xmax=1049.95,
      xminorgrids,
      xtick style={color=black},
      y grid style={darkgrey176},
      ylabel={True Reward},
      ymin=-0.0052734375, ymax=0.4759765625,
      ytick style={color=black}
    ]
    \addplot [semithick, noregcolor, dashed, mark=*, mark size=2, mark options={solid}]
    table {%
        1 0.0341796875
        101 0.30078125
        201 0.10546875
        301 0.1103515625
        401 0.08984375
        501 0.0439453125
        601 0.0380859375
        701 0.0205078125
        801 0.025390625
        901 0.021484375
        1000 0.0166015625
      };
    \addplot [semithick, gradregcolor, dashed, mark=*, mark size=2, mark options={solid}]
    table {%
        1 0.0341796875
        101 0.3359375
        201 0.3916015625
        301 0.4033203125
        401 0.4033203125
        501 0.439453125
        601 0.4384765625
        701 0.435546875
        801 0.4541015625
        901 0.4375
        1000 0.4326171875
      };

    \nextgroupplot[
      log basis y={10},
      tick align=outside,
      tick pos=left,
      x grid style={darkgrey176},
      xlabel={Gradient Step},
      xmajorgrids,
      xmin=-48.9, xmax=1048.9,
      xminorgrids,
      xtick style={color=black},
      y grid style={darkgrey176},
      ylabel={Gradient Norm},
      ylabel style={yshift=-8pt},
      ymin=0.145382157117038, ymax=3.07718862456282,
      ymode=log,
      ytick style={color=black},
      ytick={0.1,1,10},
      yticklabels={,1.0,}, %
      extra y ticks={0.2,0.3,0.4,0.5,0.6,0.7,0.8,0.9,2,3},
      extra y tick labels={},
      extra y tick style={
          tick align=outside,
          major tick length=1.5pt, %
        },
    ]
    \addplot [semithick, noregcolor]
    table {%
        1 0.928218185901642
        2 2.67853680253029
        8 0.928218185901642
        9 0.652799993753433
        10 0.381799608469009
        12 0.402515649795532
        15 0.381799608469009
        17 0.379590705037117
        18 0.381799608469009
        20 0.379590705037117
        28 0.377381801605225
        29 0.359576985239983
        31 0.341772168874741
        32 0.333487614989281
        34 0.325203061103821
        42 0.318102926015854
        43 0.325203061103821
        44 0.318684130907059
        45 0.312165200710297
        46 0.311583995819092
        47 0.311002790927887
        48 0.31086365878582
        49 0.311002790927887
        53 0.31086365878582
        55 0.311002790927887
        56 0.31086365878582
        59 0.310724526643753
        61 0.308617100119591
        62 0.310724526643753
        65 0.310669049620628
        67 0.308561623096466
        68 0.306939721107483
        70 0.30546110868454
        74 0.300097063183784
        75 0.295605555176735
        77 0.294746026396751
        78 0.295605555176735
        79 0.294746026396751
        82 0.294746026396751
        84 0.294746026396751
        85 0.29814587533474
        86 0.294746026396751
        87 0.29814587533474
        89 0.302637383341789
        90 0.303436607122421
        91 0.303436607122421
        92 0.303436607122421
        95 0.303436607122421
        96 0.306260332465172
        100 0.306939721107483
        103 0.308991670608521
        105 0.313944339752197
        107 0.308991670608521
        109 0.313944339752197
        110 0.313944339752197
        113 0.313944339752197
        119 0.319817841053009
        120 0.321944400668144
        121 0.319401666522026
        123 0.319401666522026
        125 0.321944400668144
        128 0.321944400668144
        134 0.326775461435318
        137 0.334837004542351
        138 0.339060440659523
        141 0.344096332788467
        143 0.344096332788467
        144 0.348093137145042
        146 0.348093137145042
        148 0.348093137145042
        150 0.348093137145042
        151 0.358484700322151
        154 0.368109047412872
        156 0.368612408638
        158 0.368612408638
        160 0.368612408638
        162 0.3762276917696
        163 0.3762276917696
        167 0.3762276917696
        168 0.3762276917696
        170 0.368612408638
        171 0.368612408638
        173 0.377707451581955
        175 0.387590557336807
        177 0.387590557336807
        178 0.387590557336807
        181 0.387590557336807
        183 0.392350926995277
        186 0.397043004631996
        189 0.403429061174393
        191 0.403429061174393
        194 0.403429061174393
        195 0.40413199365139
        196 0.40413199365139
        199 0.39866416156292
        202 0.397043004631996
        203 0.393269151449203
        204 0.393269151449203
        206 0.393269151449203
        207 0.393269151449203
        209 0.380887597799301
        210 0.372306853532791
        214 0.372306853532791
        215 0.369609743356705
        216 0.369609743356705
        218 0.366826578974724
        219 0.362447902560234
        220 0.357669323682785
        221 0.355813980102539
        223 0.355813980102539
        226 0.355813980102539
        229 0.355813980102539
        231 0.355813980102539
        232 0.355813980102539
        233 0.35428549349308
        234 0.352013349533081
        235 0.350196406245232
        237 0.346930474042892
        239 0.342875376343727
        241 0.330784291028976
        242 0.330784291028976
        243 0.317751616239548
        245 0.315269812941551
        248 0.313772633671761
        249 0.313772633671761
        250 0.313772633671761
        252 0.311076372861862
        254 0.311076372861862
        259 0.311076372861862
        260 0.311076372861862
        261 0.311076372861862
        262 0.313385188579559
        266 0.313385188579559
        270 0.311076372861862
        271 0.311076372861862
        273 0.311076372861862
        274 0.311076372861862
        275 0.313385188579559
        277 0.313385188579559
        280 0.311076372861862
        283 0.311076372861862
        284 0.309086978435516
        286 0.306640043854713
        292 0.304615050554276
        293 0.304615050554276
        294 0.306640043854713
        301 0.304615050554276
        303 0.304615050554276
        304 0.304615050554276
        305 0.306640043854713
        312 0.304615050554276
        314 0.306640043854713
        317 0.309086978435516
        318 0.306640043854713
        319 0.306640043854713
        321 0.304615050554276
        322 0.304615050554276
        323 0.300956100225449
        324 0.300956100225449
        325 0.300956100225449
        326 0.300956100225449
        327 0.296099573373795
        328 0.294805407524109
        331 0.294805407524109
        333 0.287935376167297
        334 0.286915168166161
        338 0.291080966591835
        341 0.286915168166161
        342 0.291080966591835
        345 0.291080966591835
        347 0.286915168166161
        353 0.291080966591835
        354 0.294805407524109
        355 0.297810986638069
        358 0.294805407524109
        359 0.294805407524109
        361 0.297810986638069
        362 0.294805407524109
        363 0.294805407524109
        364 0.294805407524109
        365 0.291080966591835
        367 0.2898288667202
        370 0.287579864263535
        371 0.287579864263535
        373 0.287579864263535
        376 0.287579864263535
        380 0.284034103155136
        382 0.284034103155136
        383 0.284034103155136
        384 0.284034103155136
        386 0.281655132770538
        390 0.281655132770538
        391 0.281569510698318
        394 0.281569510698318
        397 0.281569510698318
        399 0.277459889650345
        401 0.271338567137718
        402 0.267924547195435
        405 0.267236083745956
        406 0.257240325212479
        407 0.246993124485016
        408 0.244543753564358
        411 0.242004543542862
        413 0.237349942326546
        414 0.228416778147221
        415 0.223364003002644
        416 0.223364003002644
        419 0.223364003002644
        422 0.223364003002644
        423 0.222844451665878
        424 0.222844451665878
        426 0.222844451665878
        428 0.222336702048779
        430 0.218080967664719
        431 0.218080967664719
        432 0.213748171925545
        442 0.218080967664719
        444 0.217477641999722
        445 0.217201538383961
        446 0.208441495895386
        448 0.208441495895386
        451 0.208441495895386
        452 0.202712595462799
        454 0.201759435236454
        455 0.207743167877197
        456 0.201759435236454
        457 0.2006895840168
        458 0.2006895840168
        459 0.2006895840168
        460 0.2016476765275
        462 0.2016476765275
        463 0.2006895840168
        469 0.2006895840168
        472 0.2006895840168
        473 0.19855223596096
        480 0.2006895840168
        481 0.19855223596096
        482 0.195414632558823
        486 0.196736961603165
        487 0.2006895840168
        489 0.196736961603165
        491 0.196736961603165
        496 0.197551980614662
        498 0.19338034838438
        499 0.19338034838438
        504 0.19338034838438
        505 0.19338034838438
        506 0.19173613935709
        507 0.19338034838438
        509 0.19173613935709
        510 0.19173613935709
        513 0.186834760010242
        516 0.186834760010242
        523 0.19173613935709
        526 0.19173613935709
        527 0.19338034838438
        530 0.19338034838438
        531 0.19173613935709
        532 0.19173613935709
        535 0.19173613935709
        536 0.189974211156368
        539 0.186498031020164
        540 0.180769763886929
        541 0.180769763886929
        542 0.189974211156368
        543 0.181890234351158
        544 0.18033367395401
        547 0.18033367395401
        548 0.18033367395401
        549 0.18033367395401
        551 0.18033367395401
        552 0.18033367395401
        553 0.18033367395401
        554 0.183140836656094
        555 0.17695014923811
        556 0.183140836656094
        559 0.183140836656094
        560 0.183140836656094
        562 0.17695014923811
        565 0.17695014923811
        568 0.183140836656094
        569 0.186947539448738
        571 0.186947539448738
        574 0.183140836656094
        578 0.17695014923811
        579 0.172862373292446
        581 0.17695014923811
        584 0.17695014923811
        587 0.183140836656094
        588 0.17695014923811
        590 0.172862373292446
        594 0.172862373292446
        596 0.172862373292446
        598 0.17695014923811
        600 0.17695014923811
        601 0.174402467906475
        602 0.168893121182919
        603 0.167019665241241
        606 0.167019665241241
        607 0.167019665241241
        613 0.168893121182919
        614 0.168893121182919
        615 0.167019665241241
        616 0.167019665241241
        618 0.167019665241241
        621 0.167019665241241
        624 0.167019665241241
        628 0.168893121182919
        629 0.169881939888
        630 0.181865803897381
        631 0.169881939888
        633 0.168893121182919
        634 0.168893121182919
        636 0.169881939888
        637 0.181865803897381
        640 0.181865803897381
        641 0.169881939888
        643 0.169881939888
        644 0.169036671519279
        645 0.169881939888
        648 0.175034008920193
        651 0.175034008920193
        652 0.182445712387562
        655 0.178148716688156
        657 0.178148716688156
        658 0.182445712387562
        659 0.18927750736475
        660 0.195449963212013
        662 0.18927750736475
        663 0.182445712387562
        665 0.182445712387562
        668 0.182445712387562
        670 0.18927750736475
        672 0.191712118685246
        675 0.198190703988075
        677 0.198190703988075
        678 0.199188105762005
        681 0.199188105762005
        682 0.199188105762005
        684 0.204179465770721
        685 0.211797393858433
        686 0.215286396443844
        690 0.224569782614708
        693 0.224569782614708
        699 0.240583665668964
        700 0.255572907626629
        701 0.265240594744682
        703 0.265240594744682
        704 0.272396832704544
        705 0.272723481059074
        707 0.272723481059074
        708 0.294787749648094
        709 0.317704930901527
        710 0.317704930901527
        711 0.32430262863636
        714 0.32430262863636
        717 0.317704930901527
        720 0.317704930901527
        722 0.317704930901527
        724 0.32430262863636
        726 0.32430262863636
        728 0.32430262863636
        730 0.32430262863636
        731 0.32430262863636
        733 0.32430262863636
        734 0.317704930901527
        735 0.315295428037643
        738 0.303888469934464
        740 0.298801675438881
        747 0.290768772363663
        750 0.280500784516335
        751 0.280500784516335
        752 0.280500784516335
        755 0.280500784516335
        756 0.279326930642128
        758 0.279326930642128
        760 0.279326930642128
        761 0.283753752708435
        763 0.283753752708435
        766 0.28733691573143
        767 0.28733691573143
        768 0.28733691573143
        770 0.28733691573143
        772 0.290508255362511
        774 0.290508255362511
        775 0.293940111994743
        776 0.293940111994743
        777 0.290508255362511
        778 0.290508255362511
        781 0.290508255362511
        789 0.290508255362511
        790 0.290508255362511
        792 0.290508255362511
        793 0.299108117818832
        794 0.307019084692001
        798 0.314629957079887
        801 0.322904586791992
        803 0.324364468455315
        804 0.328847274184227
        806 0.338375136256218
        807 0.358120605349541
        808 0.373506158590317
        810 0.377190604805946
        811 0.412795349955559
        813 0.451346680521965
        816 0.451346680521965
        818 0.452248573303223
        819 0.452248573303223
        820 0.452248573303223
        824 0.471561163663864
        827 0.490378454327583
        829 0.557786464691162
        830 0.557786464691162
        831 0.557786464691162
        833 0.557786464691162
        840 0.492621541023254
        841 0.492621541023254
        842 0.557786464691162
        846 0.557786464691162
        847 0.557786464691162
        848 0.557786464691162
        852 0.557786464691162
        854 0.556203842163086
        859 0.556203842163086
        862 0.556203842163086
        864 0.469483241438866
        865 0.430302515625954
        866 0.407437786459923
        868 0.392893418669701
        869 0.371772795915604
        872 0.343661606311798
        873 0.325528502464294
        877 0.318673372268677
        880 0.297056958079338
        881 0.279571652412415
        883 0.279571652412415
        888 0.279571652412415
        893 0.279571652412415
        894 0.273412510752678
        895 0.259415090084076
        896 0.251797959208488
        897 0.246586427092552
        898 0.24062217772007
        901 0.232101321220398
        902 0.232101321220398
        903 0.232101321220398
        905 0.232101321220398
        909 0.232101321220398
        910 0.225261859595776
        914 0.225261859595776
        917 0.225261859595776
        918 0.225261859595776
        919 0.232101321220398
        921 0.24062217772007
        922 0.232101321220398
        923 0.232101321220398
        927 0.232101321220398
        928 0.24062217772007
        929 0.24472588300705
        931 0.249146088957787
        932 0.251253962516785
        933 0.251253962516785
        935 0.252113953232765
        937 0.252113953232765
        939 0.253449559211731
        940 0.253449559211731
        941 0.252113953232765
        942 0.252113953232765
        943 0.252113953232765
        945 0.252113953232765
        947 0.251253962516785
        949 0.251253962516785
        950 0.251253962516785
        957 0.251253962516785
        960 0.249077424407005
        963 0.249077424407005
        965 0.249077424407005
        966 0.249077424407005
        967 0.249077424407005
        969 0.249077424407005
        975 0.249077424407005
        976 0.239605791866779
        977 0.231764361262321
        978 0.226183138787746
        979 0.231764361262321
        980 0.231764361262321
        981 0.231764361262321
        982 0.231764361262321
        984 0.226183138787746
        985 0.220311746001244
        986 0.220311746001244
        987 0.226183138787746
        990 0.220311746001244
        991 0.225415296852589
        993 0.217697106301785
        995 0.217697106301785
        996 0.225415296852589
        997 0.225415296852589
        998 0.231854952871799
        999 0.242711305618286
      };
    \addlegendentry{No GR}
    \addplot[semithick, gradregcolor]
    table {%
        1 1.09477019309998
        2 0.766486048698425
        8 0.438201904296875
        9 0.431751281023026
        10 0.425300657749176
        12 0.414871215820312
        15 0.404441773891449
        17 0.364808365702629
        18 0.358981877565384
        20 0.342078417539597
        28 0.325174957513809
        29 0.342078417539597
        31 0.325174957513809
        32 0.311090216040611
        34 0.325174957513809
        42 0.311090216040611
        43 0.325174957513809
        44 0.331352427601814
        45 0.337529897689819
        46 0.331352427601814
        47 0.325174957513809
        48 0.330839216709137
        49 0.32824632525444
        53 0.332374900579453
        55 0.336503475904465
        56 0.337016686797142
        59 0.336503475904465
        61 0.332374900579453
        62 0.32824632525444
        65 0.326710641384125
        67 0.317458659410477
        68 0.309766218066216
        70 0.317482516169548
        74 0.312028303742409
        75 0.312028303742409
        77 0.309766218066216
        78 0.309766218066216
        79 0.309766218066216
        82 0.30958317220211
        84 0.309766218066216
        85 0.312028303742409
        86 0.309766218066216
        87 0.312028303742409
        89 0.312028303742409
        90 0.309766218066216
        91 0.309766218066216
        92 0.30958317220211
        95 0.308840081095695
        96 0.308840081095695
        100 0.308840081095695
        103 0.308840081095695
        105 0.30441614985466
        107 0.300333797931671
        109 0.300333797931671
        110 0.300333797931671
        113 0.29650616645813
        119 0.293546050786972
        120 0.29243566095829
        121 0.293546050786972
        123 0.293546050786972
        125 0.293546050786972
        128 0.29243566095829
        134 0.29243566095829
        137 0.29243566095829
        138 0.291422545909882
        141 0.291422545909882
        143 0.291422545909882
        144 0.290416166186333
        146 0.288963973522186
        148 0.288963973522186
        150 0.285635992884636
        151 0.288963973522186
        154 0.285635992884636
        156 0.288963973522186
        158 0.288963973522186
        160 0.290416166186333
        162 0.288963973522186
        163 0.290416166186333
        167 0.290416166186333
        168 0.292532935738564
        170 0.296097993850708
        171 0.296097993850708
        173 0.296097993850708
        175 0.292532935738564
        177 0.28944243490696
        178 0.28944243490696
        181 0.283392697572708
        183 0.28944243490696
        186 0.283392697572708
        189 0.28944243490696
        191 0.283582463860512
        194 0.283582463860512
        195 0.283582463860512
        196 0.278984919190407
        199 0.283582463860512
        202 0.293007493019104
        203 0.283582463860512
        204 0.278867676854134
        206 0.284327283501625
        207 0.278867676854134
        209 0.278867676854134
        210 0.276067033410072
        214 0.276067033410072
        215 0.276067033410072
        216 0.276067033410072
        218 0.276067033410072
        219 0.276067033410072
        220 0.272699236869812
        221 0.270172461867332
        223 0.269909903407097
        226 0.269909903407097
        229 0.269909903407097
        231 0.271562516689301
        232 0.272699236869812
        233 0.276067033410072
        234 0.277485251426697
        235 0.277485251426697
        237 0.274117454886436
        239 0.271562516689301
        241 0.271562516689301
        242 0.271562516689301
        243 0.271562516689301
        245 0.267159387469292
        248 0.271562516689301
        249 0.271562516689301
        250 0.267159387469292
        252 0.271562516689301
        254 0.273876741528511
        259 0.271562516689301
        260 0.273876741528511
        261 0.273876741528511
        262 0.276169121265411
        266 0.276169121265411
        270 0.273876741528511
        271 0.276169121265411
        273 0.273876741528511
        274 0.273614183068275
        275 0.273614183068275
        277 0.276169121265411
        280 0.276169121265411
        283 0.27341128885746
        284 0.27341128885746
        286 0.268880277872086
        292 0.268831565976143
        293 0.268831565976143
        294 0.268831565976143
        301 0.270845457911491
        303 0.27341128885746
        304 0.278241619467735
        305 0.282174587249756
        312 0.282174587249756
        314 0.282174587249756
        317 0.282174587249756
        318 0.282652869820595
        319 0.285202100872993
        321 0.285202100872993
        322 0.285202100872993
        323 0.285202100872993
        324 0.287661030888557
        325 0.285202100872993
        326 0.287661030888557
        327 0.285202100872993
        328 0.285202100872993
        331 0.287661030888557
        333 0.285202100872993
        334 0.285202100872993
        338 0.285202100872993
        341 0.285202100872993
        342 0.281412273645401
        345 0.287209838628769
        347 0.287209838628769
        353 0.287209838628769
        354 0.287209838628769
        355 0.287209838628769
        358 0.28216677904129
        359 0.278828144073486
        361 0.278828144073486
        362 0.276087701320648
        363 0.276087701320648
        364 0.270641505718231
        365 0.270641505718231
        367 0.270641505718231
        370 0.271914079785347
        371 0.271914079785347
        373 0.268453821539879
        376 0.271914079785347
        380 0.271914079785347
        382 0.271914079785347
        383 0.268453821539879
        384 0.271914079785347
        386 0.271914079785347
        390 0.276087701320648
        391 0.278828144073486
        394 0.278828144073486
        397 0.274654522538185
        399 0.279921486973763
        401 0.279921486973763
        402 0.281850576400757
        405 0.281850576400757
        406 0.279921486973763
        407 0.279921486973763
        408 0.279921486973763
        411 0.279524177312851
        413 0.279524177312851
        414 0.281850576400757
        415 0.283605337142944
        416 0.281850576400757
        419 0.281850576400757
        422 0.281850576400757
        423 0.281850576400757
        424 0.281850576400757
        426 0.283605337142944
        428 0.283605337142944
        430 0.281114429235458
        431 0.281114429235458
        432 0.278585776686668
        442 0.277627065777779
        444 0.273025348782539
        445 0.277627065777779
        446 0.277627065777779
        448 0.273025348782539
        451 0.268211856484413
        452 0.268211856484413
        454 0.268211856484413
        455 0.273025348782539
        456 0.273025348782539
        457 0.273025348782539
        458 0.268211856484413
        459 0.273025348782539
        460 0.277627065777779
        462 0.273025348782539
        463 0.281737983226776
        469 0.281737983226776
        472 0.281737983226776
        473 0.273025348782539
        480 0.273025348782539
        481 0.281737983226776
        482 0.281737983226776
        486 0.273025348782539
        487 0.281003192067146
        489 0.273025348782539
        491 0.264201939105988
        496 0.273025348782539
        498 0.273025348782539
        499 0.281003192067146
        504 0.28004153072834
        505 0.28004153072834
        506 0.280753865838051
        507 0.275659620761871
        509 0.274919286370277
        510 0.274919286370277
        513 0.275659620761871
        516 0.275659620761871
        523 0.275659620761871
        526 0.274919286370277
        527 0.274919286370277
        530 0.274919286370277
        531 0.275659620761871
        532 0.274919286370277
        535 0.274919286370277
        536 0.274919286370277
        539 0.274336978793144
        540 0.274919286370277
        541 0.274919286370277
        542 0.275659620761871
        543 0.275659620761871
        544 0.274919286370277
        547 0.274919286370277
        548 0.274919286370277
        549 0.274919286370277
        551 0.275659620761871
        552 0.275659620761871
        553 0.274919286370277
        554 0.274919286370277
        555 0.274919286370277
        556 0.275631621479988
        559 0.275077313184738
        560 0.270438224077225
        562 0.276950553059578
        565 0.277851670980453
        568 0.277851670980453
        569 0.277851670980453
        571 0.277851670980453
        574 0.277851670980453
        578 0.280217066407204
        579 0.280217066407204
        581 0.280217066407204
        584 0.280217066407204
        587 0.287549495697021
        588 0.280217066407204
        590 0.28472363948822
        594 0.28472363948822
        596 0.284625008702278
        598 0.291957437992096
        600 0.291957437992096
        601 0.291957437992096
        602 0.296436339616776
        603 0.300729274749756
        606 0.303880542516708
        607 0.303880542516708
        613 0.303880542516708
        614 0.305007770657539
        615 0.305007770657539
        616 0.307585209608078
        618 0.307585209608078
        621 0.307585209608078
        624 0.314882293343544
        628 0.321308583021164
        629 0.314882293343544
        630 0.321308583021164
        631 0.321308583021164
        633 0.314882293343544
        634 0.321308583021164
        636 0.322430625557899
        637 0.322430625557899
        640 0.322430625557899
        641 0.315973088145256
        643 0.309491366147995
        644 0.307529777288437
        645 0.307529777288437
        648 0.307529777288437
        651 0.309491366147995
        652 0.309491366147995
        655 0.309491366147995
        657 0.315471842885017
        658 0.309491366147995
        659 0.309491366147995
        660 0.300521522760391
        662 0.289846435189247
        663 0.289846435189247
        665 0.294852718710899
        668 0.294852718710899
        670 0.294852718710899
        672 0.294852718710899
        675 0.29941663146019
        677 0.29941663146019
        678 0.294852718710899
        681 0.289846435189247
        682 0.289846435189247
        684 0.289846435189247
        685 0.289846435189247
        686 0.289005219936371
        690 0.289005219936371
        693 0.284849792718887
        699 0.280276998877525
        700 0.280276998877525
        701 0.280276998877525
        703 0.280276998877525
        704 0.281830534338951
        705 0.282984048128128
        707 0.282984048128128
        708 0.278411254286766
        709 0.282984048128128
        710 0.282984048128128
        711 0.282984048128128
        714 0.282984048128128
        717 0.288263246417046
        720 0.294110745191574
        722 0.294110745191574
        724 0.288263246417046
        726 0.283011749386787
        728 0.283011749386787
        730 0.28487166762352
        731 0.279592469334602
        733 0.278235033154488
        734 0.275933474302292
        735 0.278235033154488
        738 0.279592469334602
        740 0.279592469334602
        747 0.279592469334602
        750 0.28487166762352
        751 0.28487166762352
        752 0.279592469334602
        755 0.28487166762352
        756 0.28487166762352
        758 0.28487166762352
        760 0.286263704299927
        761 0.281012207269669
        763 0.279592469334602
        766 0.281012207269669
        767 0.284820899367332
        768 0.284820899367332
        770 0.281124591827393
        772 0.279732555150986
        774 0.279732555150986
        775 0.281124591827393
        776 0.280542895197868
        777 0.280542895197868
        778 0.280542895197868
        781 0.280542895197868
        789 0.280542895197868
        790 0.280542895197868
        792 0.280542895197868
        793 0.281822547316551
        794 0.28343240916729
        798 0.285849064588547
        801 0.285849064588547
        803 0.28343240916729
        804 0.28343240916729
        806 0.28343240916729
        807 0.28343240916729
        808 0.28343240916729
        810 0.285849064588547
        811 0.285849064588547
        813 0.285849064588547
        816 0.284446120262146
        818 0.285849064588547
        819 0.285849064588547
        820 0.29014253616333
        824 0.288754045963287
        827 0.288754045963287
        829 0.293675631284714
        830 0.296364650130272
        831 0.296364650130272
        833 0.293675631284714
        840 0.293040707707405
        841 0.293040707707405
        842 0.293040707707405
        846 0.293675631284714
        847 0.293675631284714
        848 0.293675631284714
        852 0.293675631284714
        854 0.293040707707405
        859 0.288747236132622
        862 0.288732782006264
        864 0.282826855778694
        865 0.280955016613007
        866 0.275048702955246
        868 0.282826855778694
        869 0.282826855778694
        872 0.282826855778694
        873 0.288732782006264
        877 0.280577331781387
        880 0.272799178957939
        881 0.280577331781387
        883 0.284878447651863
        888 0.272799178957939
        893 0.284878447651863
        894 0.284878447651863
        895 0.294069841504097
        896 0.288866400718689
        897 0.280710950493813
        898 0.272799178957939
        901 0.264860346913338
        902 0.272799178957939
        903 0.272799178957939
        905 0.272799178957939
        909 0.272799178957939
        910 0.272799178957939
        914 0.276908859610558
        917 0.280349686741829
        918 0.276908859610558
        919 0.280349686741829
        921 0.284151777625084
        922 0.286319434642792
        923 0.286319434642792
        927 0.2915228754282
        928 0.295407101511955
        929 0.2915228754282
        931 0.286319434642792
        932 0.284151777625084
        933 0.284151777625084
        935 0.284151777625084
        937 0.284151777625084
        939 0.284151777625084
        940 0.284151777625084
        941 0.284151777625084
        942 0.280349686741829
        943 0.280349686741829
        945 0.271569669246674
        947 0.271569669246674
        949 0.278775334358215
        950 0.278775334358215
        957 0.278775334358215
        960 0.278775334358215
        963 0.284197956323624
        965 0.292071938514709
        966 0.292071938514709
        967 0.293742224574089
        969 0.284197956323624
        975 0.275342613458633
        976 0.275342613458633
        977 0.270205661654472
        978 0.268113330006599
        979 0.268113330006599
        980 0.268113330006599
        981 0.265230849385262
        982 0.265230849385262
        984 0.265230849385262
        985 0.265230849385262
        986 0.265230849385262
        987 0.265230849385262
        990 0.265230849385262
        991 0.264385864138603
        993 0.264385864138603
        995 0.261205047369003
        996 0.264385864138603
        997 0.264385864138603
        998 0.261205047369003
        999 0.264385864138603
      };
    \addlegendentry{GR}
  \end{groupplot}

\end{tikzpicture}

%% file: intro_gradreg_rlhfvr.tex
Reinforcement Learning (RL) has become a key part of the post-training of language models (LMs) \citep{stiennon_learning_2020,deepseek_grpo_2024}.
In the case of RL from Human Feedback (RLHF) \citep{christiano_deep_2017}, we use RL to align the behavior of LMs with human preferences, which we cannot easily represent with a rule-based reward.
In the case of RL from Verifiable Feedback (RLVR) \citep{havrilla_teaching_2024, lambert_tulu_2024}, RL is used to improve the performance on tasks with verifiable rewards, such as mathematical reasoning or agentic tasks.
In RLHF, we use pairwise comparison data to train a reward model (RM), which then provides the reward estimates for policy updates.
In RLVR, we use a verifier such as a rule-based reward or another Large Language Model (LLM) to check if the model output matches the expected answer.
In both cases there is a desired behavior, corresponding to a desired true reward, which we try to approximate with the trained RM, rule-based reward, or LLM-as-a-judge reward.
We collectively refer to them as proxy rewards (PRs).
A key challenge of RL post-training is:
How can we ensure that when updating our policy with the PR, we actually improve the true reward, i.e., how can we ensure that our PR stays accurate as the policy changes throughout training?
One solution is to iteratively update the PR during training with new data from the current policy \citep{christiano_deep_2017}.
As this can be costly, another option is to use a Kullback-Leibler (KL) penalty to ensure the policy stays close to the initial model \citep{stiennon_learning_2020}.
In practice, the KL penalty slows down training and may not even improve performance \citep{gao_scaling_2023}, leading to recent papers abandoning it for tasks with rule-based rewards \citep{team_glm-45_2025,olmoteam_olmo_2025}.
However, this risks reward hacking with reward models and LLM-as-a-judge.

We thus aim to modify the policy update such that the policy not only maximizes the PR, but also maximizes the PR accuracy, without constraining it to stay close to the initial policy.
We argue that reward hacking often corresponds to sharp optima of the PR and propose gradient regularization (GR) as a solution. We illustrate an overview in \autoref{fig:figure2}.

In \autoref{sec:gradreg_theory}, we formalize this notion and show a theoretical connection between the flatness of an optimum and the PR accuracy at this optimum, as measured by the Bradley-Terry \citep[BT;][]{bradley_rank_1952} loss.
In \autoref{sec:ref_resets}, we show that this theoretical connection can be utilized to improve RLHF in practice. We leverage a recent method, Reference Resets \citep{liu_prorl_2025}, and demonstrate that it implicitly regularizes the gradient, providing a novel interpretation of this method. We empirically show this enables better RLHF performance on the TL;DR task.
In \autoref{sec:explicit_gradreg}, we propose a novel application of explicit gradient regularization to RL post-training. 
We demonstrate that it outperforms baselines in RLHF tasks with reward models as well as rule-based and LLM-as-a-judge RLVR math tasks. 
By using GR, we are able to train proficient LLM-as-a-Judge and RLHF models \textit{completely replacing the standard KL penalty}.

%% file: figures/theory_tikz_accurate.tex
\begin{subfigure}[t]{0.32\linewidth}
    \centering
    \begin{tikzpicture}[x=1cm,y=1.35cm]
        \draw[->,thick] (-2.3,0) -- (2.3,0) node[right] {$\phi$};
        \draw[->,thick] (-2.2,0) -- (-2.2,1.55) node[above] {$J(\phi,\theta)$};

        \draw[thick,noregcolor!70] plot[domain=-2.0:-0.4,samples=80] (\x,{1.25-1.5*(\x+1.2)^2});
        \draw[thick,gradregcolor!70] plot[domain=0.2:2.2,samples=80] (\x,{1.25-0.3*(\x-1.2)^2}); 

        \pgfmathsetmacro{\xsharp}{-1.55}
        \pgfmathsetmacro{\ysharp}{1.25 - 1.5*(\xsharp+1.2)^2}
        \pgfmathsetmacro{\msharp}{-3*(\xsharp+1.2)} 
        \pgfmathsetmacro{\xflat}{0.85}
        \pgfmathsetmacro{\yflat}{1.25 - 0.3*(\xflat-1.2)^2}
        \pgfmathsetmacro{\mflat}{-0.6*(\xflat-1.2)}

        \draw[thick,dashed,black] 
            ({\xsharp - 0.5}, {\ysharp - 0.5*\msharp}) -- 
            ({\xsharp + 0.3}, {\ysharp + 0.3*\msharp});
        
        \draw[thick,dashed,black] 
            ({\xflat - 0.6}, {\yflat - 0.6*\mflat}) -- 
            ({\xflat + 0.8}, {\yflat + 0.8*\mflat});

        \fill[noregcolor!70] (-1.2,1.25) circle (1.2pt);
        \fill[gradregcolor!70] (1.2,1.25) circle (1.2pt);
        \node[noregcolor!70,above] at (-1.2,1.27) {\scriptsize sharp};
        \node[gradregcolor!70,above] at (1.2,1.27) {\scriptsize flat};

    \end{tikzpicture}
\end{subfigure}
\begin{subfigure}[t]{0.65\linewidth}
    \centering
    \begin{tikzpicture}[x=1.25cm,y=1.35cm]
        \tikzset{
            actionpt/.style={circle,fill=black,inner sep=0.75pt},
            goodpair/.style={green!70!black,very thick,decorate,decoration={snake,amplitude=0.9pt,segment length=4pt}},
            badpair/.style={red!70,very thick,decorate,decoration={snake,amplitude=0.9pt,segment length=4pt}},
        }
        \pgfmathdeclarefunction{gauss}{3}{\pgfmathparse{exp(-((#1-#2)^2)/(2*(#3*#3)))}}
            \pgfmathdeclarefunction{R}{1}{\pgfmathparse{
                0.35
                + 0.90*gauss(#1,-1.50,0.22) %
                + 0.85*gauss(#1, 1.20,0.55) %
            + 0.14*gauss(#1,-1.92,0.050)
            - 0.12*gauss(#1,-1.85,0.045)
            + 0.16*gauss(#1,-1.78,0.040)
            - 0.14*gauss(#1,-1.72,0.038)
            + 0.12*gauss(#1,-1.66,0.036)
            + 0.12*gauss(#1,-1.64,0.045)
            - 0.1*gauss(#1,-1.58,0.040)
            + 0.12*gauss(#1,-1.52,0.038)
            - 0.10*gauss(#1,-1.46,0.040)
            + 0.1*gauss(#1,-1.40,0.045)
            + 0.18*gauss(#1, 1.62,0.060)
            - 0.20*gauss(#1, 1.69,0.060)
            + 0.015*max(0,1 - 0.9*gauss(#1,-1.50,0.45) - 0.9*gauss(#1,1.20,0.90))*sin(720*#1)
            }}

            \pgfmathsetmacro{\muSh}{-1.50}
            \pgfmathsetmacro{\muSm}{1.20}
            \pgfmathsetmacro{\sigSh}{0.30}
            \pgfmathsetmacro{\sigSm}{0.30}
            \pgfmathsetmacro{\densamp}{0.22}
            \pgfmathsetmacro{\densbase}{0.0}
            \begin{scope}
                \clip (-2.6,0) rectangle (2.6,0.34);
                \path[fill=noregcolor!60,opacity=0.22]
                    plot[domain=-2.6:2.6,samples=160] (\x,{\densbase + \densamp*gauss(\x,\muSh,\sigSh)})
                    -- (2.6,\densbase) -- (-2.6,\densbase) -- cycle;
                \path[fill=gradregcolor!60,opacity=0.18]
                    plot[domain=-2.6:2.6,samples=160] (\x,{\densbase + \densamp*gauss(\x,\muSm,\sigSm)})
                    -- (2.6,\densbase) -- (-2.6,\densbase) -- cycle;
                \draw[noregcolor!80!black,thin] plot[domain=-2.6:2.6,samples=160] (\x,{\densbase + \densamp*gauss(\x,\muSh,\sigSh)});
                \draw[gradregcolor!70,thin] plot[domain=-2.6:2.6,samples=160] (\x,{\densbase + \densamp*gauss(\x,\muSm,\sigSm)});
            \end{scope}
            \node[noregcolor!80!black,anchor=south] at (-1.5,0.15) {\scriptsize $\pi_{\phi^\mathrm{sharp}}$};
            \node[gradregcolor!70,anchor=south] at (1.2,0.15) {\scriptsize $\pi_{\phi^\mathrm{flat}}$};

            \draw[->,thick] (-2.9,0) -- (2.9,0) node[right] {$a$};
            \draw[->,thick] (-2.8,0) -- (-2.8,1.55) node[above] {$\prew(s,a)$};

            \pgfmathsetmacro{\DeltaSh}{0.2}
            \pgfmathsetmacro{\DeltaSm}{0.3}
            \pgfmathsetmacro{\ySh}{R(\muSh)}
            \pgfmathsetmacro{\ySm}{R(\muSm)}
            \draw[noregcolor!80!black,thick,dashed,rounded corners=2pt,fill=noregcolor!30,fill opacity=0.10]
                ({\muSh-\DeltaSh},{\ySh-0.18}) rectangle ({\muSh+\DeltaSh},{\ySh+0.05});
            \draw[gradregcolor!70,thick,dashed,rounded corners=2pt,fill=gradregcolor!25,fill opacity=0.10]
                ({\muSm-\DeltaSm},{\ySm-0.1}) rectangle ({\muSm+\DeltaSm},{\ySm+0.05});
            \node[gradregcolor!70,anchor=south west,inner sep=1pt] at ({\muSm-\DeltaSm},{\ySm+0.1}) {\scriptsize $L$-$D$ region};

            \draw[|<->|,noregcolor!80!black] ({\muSh-\DeltaSh},{\ySh+0.25}) -- ({\muSh+\DeltaSh},{\ySh+0.25}) node[midway,above] {\scriptsize $D$};
            \draw[|<->|,noregcolor!80!black] ({\muSh-\DeltaSh - 0.2},{\ySh-0.18}) -- ({\muSh-\DeltaSh -0.2},{\ySh+0.05}) node[midway,left] {\scriptsize $L$};

            \draw[thick] plot[domain=-2.6:2.6,samples=260] (\x,{R(\x)});
        
            \pgfmathsetmacro{\aone}{-1.6}
            \pgfmathsetmacro{\atwo}{-1.43}
            \pgfmathsetmacro{\athree}{-1.74}
            \pgfmathsetmacro{\afour}{-1.69}
            \pgfmathsetmacro{\afive}{1.0}
            \pgfmathsetmacro{\asix}{1.16}
            \pgfmathsetmacro{\aseven}{1.57}
            \pgfmathsetmacro{\aeight}{1.7}

        \pgfmathsetmacro{\yone}{R(\aone)}
        \pgfmathsetmacro{\ytwo}{R(\atwo)}
        \pgfmathsetmacro{\ythree}{R(\athree)}
        \pgfmathsetmacro{\yfour}{R(\afour)}
        \pgfmathsetmacro{\yfive}{R(\afive)}
        \pgfmathsetmacro{\ysix}{R(\asix)}
        \pgfmathsetmacro{\yseven}{R(\aseven)}
        \pgfmathsetmacro{\yeight}{R(\aeight)}

            \node[actionpt] at (\aone,\yone) {};
            \node[actionpt] at (\atwo,\ytwo) {};
            \node[actionpt] at (\athree,\ythree) {};
            \node[actionpt] at (\afour,\yfour) {};
            \node[actionpt] at (\afive,\yfive) {};
            \node[actionpt] at (\asix,\ysix) {};
            \node[actionpt] at (\aseven,\yseven) {};
            \node[actionpt] at (\aeight,\yeight) {};

        \draw[goodpair] (\aone,\yone) -- (\atwo,\ytwo);
        \draw[badpair] (\athree,\ythree) -- (\afour,\yfour);
        \draw[goodpair] (\afive,\yfive) -- (\asix,\ysix);
        \draw[badpair] (\aseven,\yseven) -- (\aeight,\yeight);

        \draw[goodpair] (-1.155,1.45) -- (-0.65,1.45);
        \node[green!70!black,anchor=west] at (-0.60,1.45) {\scriptsize flat};
        \draw[badpair] (-1.155,1.15) -- (-0.65,1.15);
        \node[red!70,anchor=west] at (-0.60,1.15) {\scriptsize too sharp};
    \end{tikzpicture}
\end{subfigure}

%% file: figures/reset_rlhf/klsweep_kl_div_gold.tex
\begin{tikzpicture}

  \definecolor{darkgrey176}{RGB}{176,176,176}
  \definecolor{darkslateblue9656104}{RGB}{96,56,104}
  \definecolor{darkslategrey443061}{RGB}{44,30,61}
  \definecolor{forestgreen4416044}{RGB}{44,160,44}
  \definecolor{grey14786133}{RGB}{147,86,133}
  \definecolor{lightgrey204}{RGB}{204,204,204}
  \definecolor{pink237209203}{RGB}{237,209,203}
  \definecolor{rosybrown188121151}{RGB}{188,121,151}
  \definecolor{tan218163172}{RGB}{218,163,172}

  \begin{axis}[
      legend cell align={left},
      legend style={
          fill opacity=0.8,
          draw opacity=1,
          text opacity=1,
          at={(0.99,0.01)},
          anchor=south east,
          draw=lightgrey204,
          nodes={scale=0.65, transform shape}
        },
      width=\columnwidth,
      height=5.0cm,
      tick align=outside,
      tick pos=left,
      x grid style={darkgrey176},
      xlabel={\(\displaystyle D_\mathrm{KL}(\pi_{\theta} || \pi_{1})\)},
      xmajorgrids,
      xmin=-5.42988667404279, xmax=113.836098119151,
      xtick style={color=black},
      y grid style={darkgrey176},
      ylabel={Gold Model Score},
      ymajorgrids,
      ymin=-14.8196455910802, ymax=-6.14080437272787,
      ytick style={color=black}
    ]

    \addplot [semithick, pink237209203]
    table {%
        -0.00161840242799371 -14.362435489893
        35.5872421264648 -8.8114646114409
        63.4236602783203 -8.42875792086124
        78.7981719970703 -8.15466377139091
        99.0851440429688 -8.2072289600037
        108.414916992188 -8.52969408035278
      };
    \addlegendentry{$\beta=0.03$}
    \addplot [semithick, tan218163172]
    table {%
        -0.00734988879412413 -14.3797677755356
        28.068567276001 -9.11655086278915
        48.0410690307617 -8.40381586924195
        59.6861000061035 -8.29989156872034
        73.3696823120117 -8.39712211489677
        79.7378082275391 -8.47857791185379
      };
    \addlegendentry{$\beta=0.04$}
    \addplot [semithick, rosybrown188121151]
    table {%
        -2.61887907981873e-06 -14.3824835121632
        21.7496337890625 -8.99304013699293
        36.7013282775879 -8.89611932635307
        42.1103744506836 -8.25632589124143
        58.3193740844727 -7.98104000091553
        59.7852172851562 -7.88074668869376
      };
    \addlegendentry{$\beta=0.05$}
    \addplot [semithick, grey14786133]
    table {%
        -0.00870554707944393 -14.3759696781635
        19.825439453125 -9.23144855350256
        25.001932144165 -8.69121189415455
        28.3607521057129 -8.33975937962532
        38.6519241333008 -8.47191405668855
        41.8406219482422 -8.16078261472285
      };
    \addlegendentry{$\beta=0.06$}
    \addplot [semithick, darkslateblue9656104]
    table {%
        0.00167872197926044 -14.4251528084278
        17.0443496704102 -9.3810452260077
        22.5011367797852 -8.93253830075264
        32.0126419067383 -8.34801438450813
        37.4442329406738 -8.21349490433931
        43.0324172973633 -7.92603488266468
      };
    \addlegendentry{$\beta=0.07$}
    \addplot [semithick, darkslategrey443061]
    table {%
        0.00242686434648931 -14.3578944504261
        12.6470203399658 -9.75085232709534
        19.2361087799072 -8.95913768664468
        20.0812950134277 -8.77931890264153
        28.1100082397461 -8.31398103944957
        34.7513847351074 -8.3995955735445
      };
    \addlegendentry{$\beta=0.08$}
    \addplot [thick, resetcolor]
    table {%
        0.00751598179340363 -14.1496953964233
        1.37341713905334 -12.2589888572693
        2.1926851272583 -11.7420902252197
        7.82723712921143 -9.75044345855713
        7.31887483596802 -9.79428386688232
        15.2289543151855 -8.75334358215332
        15.1029930114746 -8.78081321716309
        22.1497650146484 -8.21521472930908
        22.3944721221924 -8.01090812683105
        30.6734886169434 -7.61853229999542
        32.45751953125 -7.49325180053711
        40.410400390625 -7.21699297428131
        41.43896484375 -7.23198413848877
        52.5460205078125 -6.8395631313324
        51.309326171875 -6.99287444353104
        60.66064453125 -6.71399962902069
        63.275146484375 -6.67446804046631
        77.072265625 -6.53529715538025
        75.58935546875 -6.55781841278076
        91.26904296875 -6.56396356225014
        88.728515625 -6.69962310791016
      };
    \addlegendentry{Resets}
  \end{axis}

\end{tikzpicture}

%% file: figures/expl_gradreg/rmAccByModelSizeNoRegVSsftProperAx2.tex
\begin{tikzpicture}

  \definecolor{steelblue31119180}{RGB}{31,119,180}
  \definecolor{darkorange25512714}{RGB}{255,127,14}
  \definecolor{forestgreen4416044}{RGB}{44,160,44}
  \definecolor{crimson2143940}{RGB}{214,39,40}
  \definecolor{mediumpurple148103189}{RGB}{148,103,189}
  \definecolor{darkgrey176}{RGB}{176,176,176}
  \definecolor{lightgrey204}{RGB}{204,204,204}

  \begin{axis}[
      legend cell align={left},
      legend style={
          fill opacity=0.8,
          draw opacity=1,
          text opacity=1,
          at={(1.1,0.99)},
          anchor=north west,
          draw=lightgrey204,
          mark options={mark size=2},
          nodes={scale=0.8, transform shape}
        },
      width=5cm,
      height=4.5cm,
      tick align=outside,
      tick pos=left,
      x grid style={darkgrey176},
      xlabel={RM Accuracy},
      xmajorgrids,
      xmin=0.573095, xmax=0.709605,
      xtick style={color=black},
      y grid style={darkgrey176},
      ylabel={Winrate (\%)},
      ymajorgrids,
      ymin=19.525, ymax=43.175,
      ytick style={color=black},
      every axis plot/.append style={
        mark options={line width=1.5pt}
      },
    ]
    \addplot [draw=sftcolor, fill=sftcolor, mark size=3pt, mark=triangle, only marks]
    table{%
        x  y
        0.5793 20.8
        0.5948 20.6
        0.7034 21.7
      };
    \addlegendentry{SFT}
    \addplot [draw=noregcolor, fill=noregcolor, mark size=3pt, mark=square, only marks]
    table{%
        x  y
        0.5793 21.2
        0.5948 21.7
        0.7034 38.6
      };
    \addlegendentry{No Reg}
    \addplot [draw=klcolor, fill=klcolor, mark size=3pt, mark=+, only marks]
    table{%
        x  y
        0.5793 25.1
        0.5948 27.6
        0.7034 38.5
      };
    \addlegendentry{KL Reg}
    \addplot [draw=resetcolor, fill=resetcolor, mark size=3pt, mark=x, only marks]
    table{%
        x  y
        0.5793 22.3
        0.5948 27.1
        0.7034 42.1
      };
    \addlegendentry{Reference Reset}
    \addplot [
      draw=gradregcolor,
      fill=gradregcolor,
      mark size=3pt,
      mark=o,
      only marks
    ]
    table{%
        x  y
        0.5793 28.3
        0.5948 29.2
        0.7034 40.8
      };
    \addlegendentry{GR}
  \end{axis}
\end{tikzpicture}

%% file: figures/reasoning/gradreg_reason_accuracy.tex
\begin{tikzpicture}

  \definecolor{darkgrey176}{RGB}{176,176,176}
  \definecolor{darkorange25512714}{RGB}{255,127,14}
  \definecolor{lightgrey204}{RGB}{204,204,204}
  \definecolor{steelblue31119180}{RGB}{31,119,180}

  \begin{axis}[
      legend cell align={left},
      legend style={
          fill opacity=0.8,
          draw opacity=1,
          text opacity=1,
          at={(0.97,0.03)},
          anchor=south east,
          draw=lightgrey204,
          nodes={scale=0.8, transform shape}
        },
      width=\reasonwidth,
      height=\reasonheight,
      tick align=outside,
      tick pos=left,
      x grid style={darkgrey176},
      xlabel={Gradient Step},
      xmajorgrids,
      xmin=-48.95, xmax=1049.95,
      xminorgrids,
      xtick style={color=black},
      y grid style={darkgrey176},
      ylabel={Accuracy},
      ymajorgrids,
      ymin=0.00751953125, ymax=0.59404296875,
      yminorgrids,
      ytick style={color=black}
    ]

    \addplot [semithick, noregcolor]
    table {%
        1 0.0341796875
        101 0.513671875
        201 0.4951171875
        301 0.515625
        401 0.4775390625
        501 0.4453125
        601 0.451171875
        701 0.466796875
        801 0.470703125
        901 0.451171875
        1000 0.4775390625
      };
    \addlegendentry{No Reg}
    \addplot [semithick, gradregcolor]
    table {%
        1 0.0341796875
        101 0.4609375
        201 0.5048828125
        301 0.5185546875
        401 0.5234375
        501 0.5517578125
        601 0.5546875
        701 0.541015625
        801 0.5673828125
        901 0.5517578125
        1000 0.5595703125
      };
    \addlegendentry{GR}
  \end{axis}
\end{tikzpicture}

%% file: figures/reasoning/gradreg_reason_formatrew.tex
\begin{tikzpicture}

  \definecolor{darkgrey176}{RGB}{176,176,176}
  \definecolor{darkorange25512714}{RGB}{255,127,14}
  \definecolor{grey}{RGB}{128,128,128}
  \definecolor{lightgrey204}{RGB}{204,204,204}
  \definecolor{steelblue31119180}{RGB}{31,119,180}

  \begin{axis}[
      legend cell align={left},
      legend style={
          fill opacity=0.8,
          draw opacity=1,
          text opacity=1,
          at={(0.97,0.03)},
          anchor=south east,
          draw=lightgrey204,
        },
      width=\reasonwidth,
      height=\reasonheight,
      tick align=outside,
      tick pos=left,
      x grid style={darkgrey176},
      xlabel={Gradient Step},
      xmajorgrids,
      xmin=-48.95, xmax=1049.95,
      xminorgrids,
      xtick style={color=black},
      y grid style={darkgrey176},
      ylabel={Format Reward},
      ymajorgrids,
      ymin=2.9, ymax=3.01,
      yminorgrids,
      ytick style={color=black}
    ]

    \addplot [semithick, noregcolor, forget plot]
    table {%
        1 0.544433592018322
        101 2.983154296875
        201 2.993408203125
        301 2.9901123046875
        401 2.9967041015625
        501 2.98011815128848
        601 2.98069532529917
        701 2.9968027472496
        801 2.99277540347248
        901 2.99619337223703
        1000 2.995215833158
      };
    \addplot [semithick, gradregcolor, forget plot]
    table {%
        1 0.544433592018322
        101 2.94249903605669
        201 2.95348829386057
        301 2.95994727837387
        401 2.95994141904521
        501 2.97825001284946
        601 2.98349805973703
        701 2.97389942687005
        801 2.98582130187424
        901 2.9965586066246
        1000 2.99192188784946
      };
    \addplot [grey, opacity=0.7, dashed, forget plot]
    table {%
        -48.95 3
        1049.95 3
      };
  \end{axis}

\end{tikzpicture}

%% file: figures/llm_judge/gradreg_llmjudge_acc_sidebyside3methods.tex
\begin{tikzpicture}

  \definecolor{darkgrey176}{RGB}{176,176,176}
  \definecolor{darkorange25512714}{RGB}{255,127,14}
  \definecolor{forestgreen4416044}{RGB}{44,160,44}
  \definecolor{lightgrey204}{RGB}{204,204,204}
  \definecolor{steelblue31119180}{RGB}{31,119,180}

  \begin{groupplot}[
      group style={
          group size=2 by 1,
          horizontal sep=1.3cm
        },
      width=0.42\columnwidth,
      height=4cm,
      ylabel style={yshift=-5pt},
    ]
    \nextgroupplot[
      tick align=outside,
      tick pos=left,
      x grid style={darkgrey176},
      xlabel={Gradient Steps},
      xmajorgrids,
      xmin=-48.95, xmax=1049.95,
      xminorgrids,
      xtick style={color=black},
      y grid style={darkgrey176},
      ylabel={LLM Judge Score},
      ymin=0.69779052734375, ymax=0.95479736328125,
      ytick style={color=black}
    ]
    \addplot [semithick, noregcolor, mark=*, mark size=2, mark options={solid}]
    table {%
        1 0.717041015625
        101 0.769775390625
        201 0.737548828125
        301 0.77197265625
        401 0.82373046875
        501 0.911376953125
        601 0.91357421875
        701 0.88232421875
        801 0.83984375
        901 0.927978515625
        1000 0.943115234375
      };
    \addplot [semithick, gradregcolor, mark=*, mark size=2, mark options={solid}]
    table {%
        1 0.70947265625
        101 0.76611328125
        201 0.73828125
        301 0.740234375
        401 0.78955078125
        501 0.779052734375
        601 0.7998046875
        701 0.77880859375
        801 0.8076171875
        901 0.823486328125
        1000 0.814208984375
      };
    \addplot [semithick, klcolor, mark=*, mark size=2, mark options={solid}]
    table {%
        1 0.726806640625
        101 0.7451171875
        201 0.74365234375
        301 0.80029296875
        401 0.804443359375
        501 0.799072265625
        601 0.821533203125
        701 0.825927734375
        801 0.820068359375
        901 0.84765625
        1000 0.840576171875
      };

    \nextgroupplot[
      legend cell align={left},
      legend style={
          fill opacity=0.8,
          draw opacity=1,
          text opacity=1,
          at={(1.1,0.5)},
          anchor=west,
          draw=lightgrey204,
          nodes={scale=0.8, transform shape}
        },
      tick align=outside,
      tick pos=left,
      x grid style={darkgrey176},
      xlabel={Gradient Steps},
      xmajorgrids,
      xmin=-48.95, xmax=1049.95,
      xminorgrids,
      xtick style={color=black},
      y grid style={darkgrey176},
      ylabel={True Accuracy},
      ymin=-0.0052734375, ymax=0.4759765625,
      ytick style={color=black}
    ]
    \addplot [semithick, noregcolor, mark=*, mark size=2, mark options={solid}]
    table {%
        1 0.0341796875
        101 0.30078125
        201 0.10546875
        301 0.1103515625
        401 0.08984375
        501 0.0439453125
        601 0.0380859375
        701 0.0205078125
        801 0.025390625
        901 0.021484375
        1000 0.0166015625
      };
    \addlegendentry{No Reg}
    \addplot [semithick, gradregcolor, mark=*, mark size=2, mark options={solid}]
    table {%
        1 0.0341796875
        101 0.3359375
        201 0.3916015625
        301 0.4033203125
        401 0.4033203125
        501 0.439453125
        601 0.4384765625
        701 0.435546875
        801 0.4541015625
        901 0.4375
        1000 0.4326171875
      };
    \addlegendentry{GR}
    \addplot [semithick, klcolor, mark=*, mark size=2, mark options={solid}]
    table {%
        1 0.0341796875
        101 0.3076171875
        201 0.212890625
        301 0.2939453125
        401 0.267578125
        501 0.2783203125
        601 0.228515625
        701 0.279296875
        801 0.2578125
        901 0.1875
        1000 0.271484375
      };
    \addlegendentry{KL}
  \end{groupplot}

\end{tikzpicture}

%% file: figures/llm_judge/llmjudge_gradnormcomp_withrulebasedrew_sidebyside.tex
\begin{tikzpicture}

  \definecolor{darkgrey176}{RGB}{176,176,176}
  \definecolor{lightgrey204}{RGB}{204,204,204}

  \newcommand{\judgegradwidth}{0.48\columnwidth}
  \newcommand{\judgegradheight}{4cm}
  \newcommand{\judgegradsep}{1cm}

  \begin{groupplot}[
      group style={
          group size=2 by 1,
          horizontal sep=\judgegradsep,
        },
      width=\judgegradwidth,
      height=\judgegradheight,
      title style={yshift=-5pt}
    ]
    \nextgroupplot[
      tick align=outside,
      tick pos=left,
      title={No Regularization},
      x grid style={darkgrey176},
      xlabel={Gradient Step},
      xmajorgrids,
      xmin=-48.95, xmax=1049.95,
      xminorgrids,
      xtick style={color=black},
      y grid style={darkgrey176},
      ylabel={Accuracy},
      ymin=-0.0052734375, ymax=0.4759765625,
      ytick style={color=black},
      ytick={0.0,0.1,0.2,0.3,0.4},
      yticklabels={0.0,0.1,0.2,0.3,0.4},
    ]
    \addplot [semithick, noregcolor, mark=*, mark size=2, mark options={solid}]
    table {%
        1 0.0341796875
        101 0.30078125
        201 0.10546875
        301 0.1103515625
        401 0.08984375
        501 0.0439453125
        601 0.0380859375
        701 0.0205078125
        801 0.025390625
        901 0.021484375
        1000 0.0166015625
      };
    \nextgroupplot[
      tick align=outside,
      tick pos=left,
      title={GR},
      x grid style={darkgrey176},
      xlabel={Gradient Step},
      xmajorgrids,
      xmin=-48.95, xmax=1049.95,
      xminorgrids,
      xtick style={color=black},
      y grid style={darkgrey176},
      ylabel={},
      ymin=-0.0052734375, ymax=0.4759765625,
      ytick style={color=black},
      ytick={0.0,0.1,0.2,0.3,0.4},
      yticklabels=\empty
    ]
    \addplot [semithick, noregcolor, mark=*, mark size=2, mark options={solid}]
    table {%
        1 0.0341796875
        101 0.3359375
        201 0.3916015625
        301 0.4033203125
        401 0.4033203125
        501 0.439453125
        601 0.4384765625
        701 0.435546875
        801 0.4541015625
        901 0.4375
        1000 0.4326171875
      };
  \end{groupplot}

  \begin{axis}[
      width=\judgegradwidth,
      height=\judgegradheight,
      at={(group c1r1.south west)},
      anchor=south west,
      axis y line=right,
      axis x line=none,
      axis line style={-},
      log basis y={10},
      tick align=outside,
      xmin=-48.95, xmax=1049.95,
      xtick pos=left,
      xtick style={color=black},
      y grid style={darkgrey176},
      ymin=0.145382157117038, ymax=3.07718862456282,
      ymode=log,
      ytick pos=right,
      ytick style={color=black},
      yticklabel style={anchor=west},
      ytick={0.1,1,10},
      yticklabels=\empty,
      extra y ticks={0.2,0.3,0.4,0.5,0.6,0.7,0.8,0.9,2,3},
      extra y tick labels={},
      extra y tick style={
          tick align=outside,
          major tick length=1.5pt, %
        },
    ]
    \addplot [semithick, gradregcolor]
    table {%
        1 0.928218185901642
        2 2.67853680253029
        8 0.928218185901642
        9 0.652799993753433
        10 0.381799608469009
        12 0.402515649795532
        15 0.381799608469009
        17 0.379590705037117
        18 0.381799608469009
        20 0.379590705037117
        28 0.377381801605225
        29 0.359576985239983
        31 0.341772168874741
        32 0.333487614989281
        34 0.325203061103821
        42 0.318102926015854
        43 0.325203061103821
        44 0.318684130907059
        45 0.312165200710297
        46 0.311583995819092
        47 0.311002790927887
        48 0.31086365878582
        49 0.311002790927887
        53 0.31086365878582
        55 0.311002790927887
        56 0.31086365878582
        59 0.310724526643753
        61 0.308617100119591
        62 0.310724526643753
        65 0.310669049620628
        67 0.308561623096466
        68 0.306939721107483
        70 0.30546110868454
        74 0.300097063183784
        75 0.295605555176735
        77 0.294746026396751
        78 0.295605555176735
        79 0.294746026396751
        82 0.294746026396751
        84 0.294746026396751
        85 0.29814587533474
        86 0.294746026396751
        87 0.29814587533474
        89 0.302637383341789
        90 0.303436607122421
        91 0.303436607122421
        92 0.303436607122421
        95 0.303436607122421
        96 0.306260332465172
        100 0.306939721107483
        103 0.308991670608521
        105 0.313944339752197
        107 0.308991670608521
        109 0.313944339752197
        110 0.313944339752197
        113 0.313944339752197
        119 0.319817841053009
        120 0.321944400668144
        121 0.319401666522026
        123 0.319401666522026
        125 0.321944400668144
        128 0.321944400668144
        134 0.326775461435318
        137 0.334837004542351
        138 0.339060440659523
        141 0.344096332788467
        143 0.344096332788467
        144 0.348093137145042
        146 0.348093137145042
        148 0.348093137145042
        150 0.348093137145042
        151 0.358484700322151
        154 0.368109047412872
        156 0.368612408638
        158 0.368612408638
        160 0.368612408638
        162 0.3762276917696
        163 0.3762276917696
        167 0.3762276917696
        168 0.3762276917696
        170 0.368612408638
        171 0.368612408638
        173 0.377707451581955
        175 0.387590557336807
        177 0.387590557336807
        178 0.387590557336807
        181 0.387590557336807
        183 0.392350926995277
        186 0.397043004631996
        189 0.403429061174393
        191 0.403429061174393
        194 0.403429061174393
        195 0.40413199365139
        196 0.40413199365139
        199 0.39866416156292
        202 0.397043004631996
        203 0.393269151449203
        204 0.393269151449203
        206 0.393269151449203
        207 0.393269151449203
        209 0.380887597799301
        210 0.372306853532791
        214 0.372306853532791
        215 0.369609743356705
        216 0.369609743356705
        218 0.366826578974724
        219 0.362447902560234
        220 0.357669323682785
        221 0.355813980102539
        223 0.355813980102539
        226 0.355813980102539
        229 0.355813980102539
        231 0.355813980102539
        232 0.355813980102539
        233 0.35428549349308
        234 0.352013349533081
        235 0.350196406245232
        237 0.346930474042892
        239 0.342875376343727
        241 0.330784291028976
        242 0.330784291028976
        243 0.317751616239548
        245 0.315269812941551
        248 0.313772633671761
        249 0.313772633671761
        250 0.313772633671761
        252 0.311076372861862
        254 0.311076372861862
        259 0.311076372861862
        260 0.311076372861862
        261 0.311076372861862
        262 0.313385188579559
        266 0.313385188579559
        270 0.311076372861862
        271 0.311076372861862
        273 0.311076372861862
        274 0.311076372861862
        275 0.313385188579559
        277 0.313385188579559
        280 0.311076372861862
        283 0.311076372861862
        284 0.309086978435516
        286 0.306640043854713
        292 0.304615050554276
        293 0.304615050554276
        294 0.306640043854713
        301 0.304615050554276
        303 0.304615050554276
        304 0.304615050554276
        305 0.306640043854713
        312 0.304615050554276
        314 0.306640043854713
        317 0.309086978435516
        318 0.306640043854713
        319 0.306640043854713
        321 0.304615050554276
        322 0.304615050554276
        323 0.300956100225449
        324 0.300956100225449
        325 0.300956100225449
        326 0.300956100225449
        327 0.296099573373795
        328 0.294805407524109
        331 0.294805407524109
        333 0.287935376167297
        334 0.286915168166161
        338 0.291080966591835
        341 0.286915168166161
        342 0.291080966591835
        345 0.291080966591835
        347 0.286915168166161
        353 0.291080966591835
        354 0.294805407524109
        355 0.297810986638069
        358 0.294805407524109
        359 0.294805407524109
        361 0.297810986638069
        362 0.294805407524109
        363 0.294805407524109
        364 0.294805407524109
        365 0.291080966591835
        367 0.2898288667202
        370 0.287579864263535
        371 0.287579864263535
        373 0.287579864263535
        376 0.287579864263535
        380 0.284034103155136
        382 0.284034103155136
        383 0.284034103155136
        384 0.284034103155136
        386 0.281655132770538
        390 0.281655132770538
        391 0.281569510698318
        394 0.281569510698318
        397 0.281569510698318
        399 0.277459889650345
        401 0.271338567137718
        402 0.267924547195435
        405 0.267236083745956
        406 0.257240325212479
        407 0.246993124485016
        408 0.244543753564358
        411 0.242004543542862
        413 0.237349942326546
        414 0.228416778147221
        415 0.223364003002644
        416 0.223364003002644
        419 0.223364003002644
        422 0.223364003002644
        423 0.222844451665878
        424 0.222844451665878
        426 0.222844451665878
        428 0.222336702048779
        430 0.218080967664719
        431 0.218080967664719
        432 0.213748171925545
        442 0.218080967664719
        444 0.217477641999722
        445 0.217201538383961
        446 0.208441495895386
        448 0.208441495895386
        451 0.208441495895386
        452 0.202712595462799
        454 0.201759435236454
        455 0.207743167877197
        456 0.201759435236454
        457 0.2006895840168
        458 0.2006895840168
        459 0.2006895840168
        460 0.2016476765275
        462 0.2016476765275
        463 0.2006895840168
        469 0.2006895840168
        472 0.2006895840168
        473 0.19855223596096
        480 0.2006895840168
        481 0.19855223596096
        482 0.195414632558823
        486 0.196736961603165
        487 0.2006895840168
        489 0.196736961603165
        491 0.196736961603165
        496 0.197551980614662
        498 0.19338034838438
        499 0.19338034838438
        504 0.19338034838438
        505 0.19338034838438
        506 0.19173613935709
        507 0.19338034838438
        509 0.19173613935709
        510 0.19173613935709
        513 0.186834760010242
        516 0.186834760010242
        523 0.19173613935709
        526 0.19173613935709
        527 0.19338034838438
        530 0.19338034838438
        531 0.19173613935709
        532 0.19173613935709
        535 0.19173613935709
        536 0.189974211156368
        539 0.186498031020164
        540 0.180769763886929
        541 0.180769763886929
        542 0.189974211156368
        543 0.181890234351158
        544 0.18033367395401
        547 0.18033367395401
        548 0.18033367395401
        549 0.18033367395401
        551 0.18033367395401
        552 0.18033367395401
        553 0.18033367395401
        554 0.183140836656094
        555 0.17695014923811
        556 0.183140836656094
        559 0.183140836656094
        560 0.183140836656094
        562 0.17695014923811
        565 0.17695014923811
        568 0.183140836656094
        569 0.186947539448738
        571 0.186947539448738
        574 0.183140836656094
        578 0.17695014923811
        579 0.172862373292446
        581 0.17695014923811
        584 0.17695014923811
        587 0.183140836656094
        588 0.17695014923811
        590 0.172862373292446
        594 0.172862373292446
        596 0.172862373292446
        598 0.17695014923811
        600 0.17695014923811
        601 0.174402467906475
        602 0.168893121182919
        603 0.167019665241241
        606 0.167019665241241
        607 0.167019665241241
        613 0.168893121182919
        614 0.168893121182919
        615 0.167019665241241
        616 0.167019665241241
        618 0.167019665241241
        621 0.167019665241241
        624 0.167019665241241
        628 0.168893121182919
        629 0.169881939888
        630 0.181865803897381
        631 0.169881939888
        633 0.168893121182919
        634 0.168893121182919
        636 0.169881939888
        637 0.181865803897381
        640 0.181865803897381
        641 0.169881939888
        643 0.169881939888
        644 0.169036671519279
        645 0.169881939888
        648 0.175034008920193
        651 0.175034008920193
        652 0.182445712387562
        655 0.178148716688156
        657 0.178148716688156
        658 0.182445712387562
        659 0.18927750736475
        660 0.195449963212013
        662 0.18927750736475
        663 0.182445712387562
        665 0.182445712387562
        668 0.182445712387562
        670 0.18927750736475
        672 0.191712118685246
        675 0.198190703988075
        677 0.198190703988075
        678 0.199188105762005
        681 0.199188105762005
        682 0.199188105762005
        684 0.204179465770721
        685 0.211797393858433
        686 0.215286396443844
        690 0.224569782614708
        693 0.224569782614708
        699 0.240583665668964
        700 0.255572907626629
        701 0.265240594744682
        703 0.265240594744682
        704 0.272396832704544
        705 0.272723481059074
        707 0.272723481059074
        708 0.294787749648094
        709 0.317704930901527
        710 0.317704930901527
        711 0.32430262863636
        714 0.32430262863636
        717 0.317704930901527
        720 0.317704930901527
        722 0.317704930901527
        724 0.32430262863636
        726 0.32430262863636
        728 0.32430262863636
        730 0.32430262863636
        731 0.32430262863636
        733 0.32430262863636
        734 0.317704930901527
        735 0.315295428037643
        737 0.315295428037643
        738 0.315295428037643
        740 0.319406121969223
        741 0.319406121969223
        743 0.315295428037643
        744 0.311259865760803
        745 0.306410521268845
        746 0.304882377386093
        749 0.304882377386093
        750 0.304882377386093
        752 0.302758187055588
        754 0.302758187055588
        759 0.302758187055588
        760 0.302758187055588
        761 0.302758187055588
        762 0.304303675889969
        766 0.304303675889969
        770 0.302758187055588
        771 0.302758187055588
        773 0.302758187055588
        774 0.302758187055588
        775 0.304303675889969
        777 0.304303675889969
        780 0.302758187055588
        783 0.302758187055588
        784 0.300862222909927
        786 0.300862222909927
        792 0.295781582593918
        793 0.295781582593918
        794 0.295781582593918
        801 0.295781582593918
        803 0.295781582593918
        804 0.295781582593918
        805 0.300862222909927
        812 0.295781582593918
        814 0.300862222909927
        817 0.302758187055588
        818 0.300862222909927
        819 0.300862222909927
        821 0.295781582593918
        822 0.295781582593918
        823 0.293749302625656
        824 0.293749302625656
        825 0.293749302625656
        826 0.293749302625656
        827 0.28955951333046
        828 0.28666079044342
        831 0.28666079044342
        833 0.280349686741829
        834 0.278615444898605
        838 0.28324431180954
        841 0.278615444898605
        842 0.28324431180954
        845 0.28324431180954
        847 0.278615444898605
        853 0.28324431180954
        854 0.28666079044342
        855 0.289681702852249
        858 0.28666079044342
        859 0.28666079044342
        861 0.289681702852249
        862 0.28666079044342
        863 0.28666079044342
        864 0.28666079044342
        865 0.28324431180954
        867 0.282174587249756
        870 0.280349686741829
        871 0.280349686741829
        873 0.280349686741829
        876 0.280349686741829
        880 0.276884824037552
        882 0.276884824037552
        883 0.276884824037552
        884 0.276884824037552
        886 0.274605274200439
        890 0.274605274200439
        891 0.274460047483444
        894 0.274460047483444
        897 0.274460047483444
        899 0.270268380641937
        901 0.263554751873016
        902 0.258155196905136
        905 0.255526304245949
        906 0.246352329850197
        907 0.238411873579025
        908 0.237084776163101
        911 0.236467957496643
        913 0.233676880598068
        914 0.226673200130463
        915 0.223788067698479
        916 0.223788067698479
        919 0.223788067698479
        922 0.223788067698479
        923 0.223788067698479
        924 0.223788067698479
        926 0.222980991005898
        928 0.222980991005898
        930 0.222164869308472
        932 0.217645019888878
        933 0.217645019888878
        934 0.213325217366219
        942 0.217645019888878
        944 0.218111649155617
        945 0.217645019888878
        946 0.209584355354309
        948 0.209584355354309
        951 0.209584355354309
        952 0.205153062343597
        954 0.203111171722412
        955 0.209584355354309
        956 0.203111171722412
        957 0.201870456337929
        958 0.201870456337929
        959 0.201870456337929
        960 0.203421384096146
        962 0.203421384096146
        963 0.201870456337929
        969 0.201870456337929
        972 0.201870456337929
        973 0.200243249535561
        980 0.201870456337929
        981 0.200243249535561
        982 0.197384834885597
        986 0.199125379323959
        987 0.201870456337929
        989 0.199125379323959
        991 0.199125379323959
        996 0.199104964137077
        998 0.19536292552948
        999 0.19536292552948
      };
  \end{axis}

  \begin{axis}[
      width=\judgegradwidth,
      height=\judgegradheight,
      at={(group c2r1.south west)},
      anchor=south west,
      axis y line=right,
      axis x line=none,
      axis line style={-},
      log basis y={10},
      tick align=outside,
      xmin=-48.95, xmax=1049.95,
      xtick pos=left,
      xtick style={color=black},
      y grid style={darkgrey176},
      ylabel={\textcolor{gradregcolor}{Gradient Norm}},
      ylabel style={yshift=5pt},
      ymin=0.145382157117038, ymax=3.07718862456282,
      ymode=log,
      ytick pos=right,
      ytick style={color=black},
      yticklabel style={anchor=west},
      ytick={0.1,1,10},
      yticklabels={,1.0,}, %
      extra y ticks={0.2,0.3,0.4,0.5,0.6,0.7,0.8,0.9,2,3},
      extra y tick labels={},
      extra y tick style={
          tick align=outside,
          major tick length=1.5pt, %
        },
    ]
    \addplot [semithick, gradregcolor]
    table {%
        1 1.09477019309998
        2 0.766486048698425
        8 0.438201904296875
        9 0.431751281023026
        10 0.425300657749176
        12 0.414871215820312
        15 0.404441773891449
        17 0.364808365702629
        18 0.358981877565384
        20 0.342078417539597
        28 0.325174957513809
        29 0.342078417539597
        31 0.325174957513809
        32 0.311090216040611
        34 0.325174957513809
        42 0.311090216040611
        43 0.325174957513809
        44 0.331352427601814
        45 0.337529897689819
        46 0.331352427601814
        47 0.325174957513809
        48 0.330839216709137
        49 0.32824632525444
        53 0.332374900579453
        55 0.336503475904465
        56 0.337016686797142
        59 0.336503475904465
        61 0.332374900579453
        62 0.32824632525444
        65 0.326710641384125
        67 0.317458659410477
        68 0.309766218066216
        70 0.317482516169548
        74 0.312028303742409
        75 0.312028303742409
        77 0.309766218066216
        78 0.309766218066216
        79 0.309766218066216
        82 0.30958317220211
        84 0.309766218066216
        85 0.312028303742409
        86 0.309766218066216
        87 0.312028303742409
        89 0.312028303742409
        90 0.309766218066216
        91 0.309766218066216
        92 0.30958317220211
        95 0.308840081095695
        96 0.308840081095695
        100 0.308840081095695
        103 0.308840081095695
        105 0.30441614985466
        107 0.300333797931671
        109 0.300333797931671
        110 0.300333797931671
        113 0.29650616645813
        119 0.293546050786972
        120 0.29243566095829
        121 0.293546050786972
        123 0.293546050786972
        125 0.293546050786972
        128 0.29243566095829
        134 0.29243566095829
        137 0.29243566095829
        138 0.291422545909882
        141 0.291422545909882
        143 0.291422545909882
        144 0.290416166186333
        146 0.288963973522186
        148 0.288963973522186
        150 0.285635992884636
        151 0.288963973522186
        154 0.285635992884636
        156 0.288963973522186
        158 0.288963973522186
        160 0.290416166186333
        162 0.288963973522186
        163 0.290416166186333
        167 0.290416166186333
        168 0.292532935738564
        170 0.296097993850708
        171 0.296097993850708
        173 0.296097993850708
        175 0.292532935738564
        177 0.28944243490696
        178 0.28944243490696
        181 0.283392697572708
        183 0.28944243490696
        186 0.283392697572708
        189 0.28944243490696
        191 0.283582463860512
        194 0.283582463860512
        195 0.283582463860512
        196 0.278984919190407
        199 0.283582463860512
        202 0.293007493019104
        203 0.283582463860512
        204 0.278867676854134
        206 0.284327283501625
        207 0.278867676854134
        209 0.278867676854134
        210 0.276067033410072
        214 0.276067033410072
        215 0.276067033410072
        216 0.276067033410072
        218 0.276067033410072
        219 0.276067033410072
        220 0.272699236869812
        221 0.270172461867332
        223 0.269909903407097
        226 0.269909903407097
        229 0.269909903407097
        231 0.271562516689301
        232 0.272699236869812
        233 0.276067033410072
        234 0.277485251426697
        235 0.277485251426697
        237 0.274117454886436
        239 0.271562516689301
        241 0.271562516689301
        242 0.271562516689301
        243 0.271562516689301
        245 0.267159387469292
        248 0.271562516689301
        249 0.271562516689301
        250 0.267159387469292
        252 0.271562516689301
        254 0.273876741528511
        259 0.271562516689301
        260 0.273876741528511
        261 0.273876741528511
        262 0.276169121265411
        266 0.276169121265411
        270 0.273876741528511
        271 0.276169121265411
        273 0.273876741528511
        274 0.273614183068275
        275 0.273614183068275
        277 0.276169121265411
        280 0.276169121265411
        283 0.27341128885746
        284 0.27341128885746
        286 0.268880277872086
        292 0.268831565976143
        293 0.268831565976143
        294 0.268831565976143
        301 0.270845457911491
        303 0.27341128885746
        304 0.278241619467735
        305 0.282174587249756
        312 0.282174587249756
        314 0.282174587249756
        317 0.282174587249756
        318 0.282652869820595
        319 0.285202100872993
        321 0.285202100872993
        322 0.285202100872993
        323 0.285202100872993
        324 0.287661030888557
        325 0.285202100872993
        326 0.287661030888557
        327 0.285202100872993
        328 0.285202100872993
        331 0.287661030888557
        333 0.285202100872993
        334 0.285202100872993
        338 0.285202100872993
        341 0.285202100872993
        342 0.281412273645401
        345 0.287209838628769
        347 0.287209838628769
        353 0.287209838628769
        354 0.287209838628769
        355 0.287209838628769
        358 0.28216677904129
        359 0.278828144073486
        361 0.278828144073486
        362 0.276087701320648
        363 0.276087701320648
        364 0.270641505718231
        365 0.270641505718231
        367 0.270641505718231
        370 0.271914079785347
        371 0.271914079785347
        373 0.268453821539879
        376 0.271914079785347
        380 0.271914079785347
        382 0.271914079785347
        383 0.268453821539879
        384 0.271914079785347
        386 0.271914079785347
        390 0.276087701320648
        391 0.278828144073486
        394 0.278828144073486
        397 0.274654522538185
        399 0.279921486973763
        401 0.279921486973763
        402 0.281850576400757
        405 0.281850576400757
        406 0.279921486973763
        407 0.279921486973763
        408 0.279921486973763
        411 0.279524177312851
        413 0.279524177312851
        414 0.281850576400757
        415 0.283605337142944
        416 0.281850576400757
        419 0.281850576400757
        422 0.281850576400757
        423 0.281850576400757
        424 0.281850576400757
        426 0.283605337142944
        428 0.283605337142944
        430 0.281114429235458
        431 0.281114429235458
        432 0.278585776686668
        442 0.277627065777779
        444 0.273025348782539
        445 0.277627065777779
        446 0.277627065777779
        448 0.273025348782539
        451 0.268211856484413
        452 0.268211856484413
        454 0.268211856484413
        455 0.273025348782539
        456 0.273025348782539
        457 0.273025348782539
        458 0.268211856484413
        459 0.273025348782539
        460 0.277627065777779
        462 0.273025348782539
        463 0.281737983226776
        469 0.281737983226776
        472 0.281737983226776
        473 0.273025348782539
        480 0.273025348782539
        481 0.281737983226776
        482 0.281737983226776
        486 0.273025348782539
        487 0.281003192067146
        489 0.273025348782539
        491 0.264201939105988
        496 0.273025348782539
        498 0.273025348782539
        499 0.281003192067146
        504 0.28004153072834
        505 0.28004153072834
        506 0.280753865838051
        507 0.275659620761871
        509 0.274919286370277
        510 0.274919286370277
        513 0.275659620761871
        516 0.275659620761871
        523 0.275659620761871
        526 0.274919286370277
        527 0.274919286370277
        530 0.274919286370277
        531 0.275659620761871
        532 0.274919286370277
        535 0.274919286370277
        536 0.274919286370277
        539 0.274336978793144
        540 0.274919286370277
        541 0.274919286370277
        542 0.275659620761871
        543 0.275659620761871
        544 0.274919286370277
        547 0.274919286370277
        548 0.274919286370277
        549 0.274919286370277
        551 0.275659620761871
        552 0.275659620761871
        553 0.274919286370277
        554 0.274919286370277
        555 0.274919286370277
        556 0.275631621479988
        559 0.275077313184738
        560 0.270438224077225
        562 0.276950553059578
        565 0.277851670980453
        568 0.277851670980453
        569 0.277851670980453
        571 0.277851670980453
        574 0.277851670980453
        578 0.280217066407204
        579 0.280217066407204
        581 0.280217066407204
        584 0.280217066407204
        587 0.287549495697021
        588 0.280217066407204
        590 0.28472363948822
        594 0.28472363948822
        596 0.284625008702278
        598 0.291957437992096
        600 0.291957437992096
        601 0.291957437992096
        602 0.296436339616776
        603 0.300729274749756
        606 0.303880542516708
        607 0.303880542516708
        613 0.303880542516708
        614 0.305007770657539
        615 0.305007770657539
        616 0.307585209608078
        618 0.307585209608078
        621 0.307585209608078
        624 0.314882293343544
        628 0.321308583021164
        629 0.314882293343544
        630 0.321308583021164
        631 0.321308583021164
        633 0.314882293343544
        634 0.321308583021164
        636 0.322430625557899
        637 0.322430625557899
        640 0.322430625557899
        641 0.315973088145256
        643 0.309491366147995
        644 0.307529777288437
        645 0.307529777288437
        648 0.307529777288437
        651 0.309491366147995
        652 0.309491366147995
        655 0.309491366147995
        657 0.315471842885017
        658 0.309491366147995
        659 0.309491366147995
        660 0.300521522760391
        662 0.289846435189247
        663 0.289846435189247
        665 0.294852718710899
        668 0.294852718710899
        670 0.294852718710899
        672 0.294852718710899
        675 0.29941663146019
        677 0.29941663146019
        678 0.294852718710899
        681 0.289846435189247
        682 0.289846435189247
        684 0.289846435189247
        685 0.289846435189247
        686 0.289005219936371
        690 0.289005219936371
        693 0.284849792718887
        699 0.280276998877525
        700 0.280276998877525
        701 0.280276998877525
        703 0.280276998877525
        704 0.281830534338951
        705 0.282984048128128
        707 0.282984048128128
        708 0.278411254286766
        709 0.282984048128128
        710 0.282984048128128
        711 0.282984048128128
        714 0.282984048128128
        717 0.288263246417046
        720 0.294110745191574
        722 0.294110745191574
        724 0.288263246417046
        726 0.283011749386787
        728 0.283011749386787
        730 0.28487166762352
        731 0.279592469334602
        733 0.278235033154488
        734 0.275933474302292
        735 0.278235033154488
        738 0.279592469334602
        740 0.279592469334602
        747 0.279592469334602
        750 0.28487166762352
        751 0.28487166762352
        752 0.279592469334602
        755 0.28487166762352
        756 0.28487166762352
        758 0.28487166762352
        760 0.286263704299927
        761 0.281012207269669
        763 0.279592469334602
        766 0.281012207269669
        767 0.284820899367332
        768 0.284820899367332
        770 0.281124591827393
        772 0.279732555150986
        774 0.279732555150986
        775 0.281124591827393
        776 0.280542895197868
        777 0.280542895197868
        778 0.280542895197868
        781 0.280542895197868
        789 0.280542895197868
        790 0.280542895197868
        792 0.280542895197868
        793 0.281822547316551
        794 0.28343240916729
        798 0.285849064588547
        801 0.285849064588547
        803 0.28343240916729
        804 0.28343240916729
        806 0.28343240916729
        807 0.28343240916729
        808 0.28343240916729
        810 0.285849064588547
        811 0.285849064588547
        813 0.285849064588547
        816 0.284446120262146
        818 0.285849064588547
        819 0.285849064588547
        820 0.29014253616333
        824 0.288754045963287
        827 0.288754045963287
        829 0.293675631284714
        830 0.296364650130272
        831 0.296364650130272
        833 0.293675631284714
        840 0.293040707707405
        841 0.293040707707405
        842 0.293040707707405
        846 0.293675631284714
        847 0.293675631284714
        848 0.293675631284714
        852 0.293675631284714
        854 0.293040707707405
        859 0.288747236132622
        862 0.288732782006264
        864 0.282826855778694
        865 0.280955016613007
        866 0.275048702955246
        868 0.282826855778694
        869 0.282826855778694
        872 0.282826855778694
        873 0.288732782006264
        877 0.280577331781387
        880 0.272799178957939
        881 0.280577331781387
        883 0.284878447651863
        888 0.272799178957939
        893 0.284878447651863
        894 0.284878447651863
        895 0.294069841504097
        896 0.288866400718689
        897 0.280710950493813
        898 0.272799178957939
        901 0.264860346913338
        902 0.272799178957939
        903 0.272799178957939
        905 0.272799178957939
        909 0.272799178957939
        910 0.272799178957939
        914 0.276908859610558
        917 0.280349686741829
        918 0.276908859610558
        919 0.280349686741829
        921 0.284151777625084
        922 0.286319434642792
        923 0.286319434642792
        927 0.2915228754282
        928 0.295407101511955
        929 0.2915228754282
        931 0.286319434642792
        932 0.284151777625084
        933 0.284151777625084
        935 0.284151777625084
        937 0.284151777625084
        939 0.284151777625084
        940 0.284151777625084
        941 0.284151777625084
        942 0.280349686741829
        943 0.280349686741829
        945 0.271569669246674
        947 0.271569669246674
        949 0.278775334358215
        950 0.278775334358215
        957 0.278775334358215
        960 0.278775334358215
        963 0.284197956323624
        965 0.292071938514709
        966 0.292071938514709
        967 0.293742224574089
        969 0.284197956323624
        975 0.275342613458633
        976 0.275342613458633
        977 0.270205661654472
        978 0.268113330006599
        979 0.268113330006599
        980 0.268113330006599
        981 0.265230849385262
        982 0.265230849385262
        984 0.265230849385262
        985 0.265230849385262
        986 0.265230849385262
        987 0.265230849385262
        990 0.265230849385262
        991 0.264385864138603
        993 0.264385864138603
        995 0.261205047369003
        996 0.264385864138603
        997 0.264385864138603
        998 0.261205047369003
        999 0.264385864138603
      };
  \end{axis}

\end{tikzpicture}

%% file: figures/llm_judge/judge_ablation.tex
\begin{tikzpicture}

\definecolor{darkgrey176}{RGB}{176,176,176}
\definecolor{darkorange25512714}{RGB}{255,127,14}
\definecolor{lightgrey204}{RGB}{204,204,204}
\definecolor{steelblue31119180}{RGB}{31,119,180}

\begin{axis}[
legend cell align={left},
legend style={
  fill opacity=0.8,
  draw opacity=1,
  text opacity=1,
  at={(0.98,0.02)},
  anchor=south east,
  draw=lightgrey204,
  nodes={scale=0.8, transform shape}
},
height=4.5cm,
width=0.7\columnwidth,
tick align=outside,
tick pos=left,
x grid style={darkgrey176},
xlabel={Judge Model},
xmajorgrids,
xmin=0.9, xmax=3.1,
xtick style={color=black},
xtick={1,2,3},
xticklabels={Qwen 2.5-1.5B,Qwen 2.5-3B,Qwen3-4B},
y grid style={darkgrey176},
ylabel={Accuracy},
ymajorgrids,
ymin=0, ymax=0.70,
ytick style={color=black}
]
\addplot [semithick, noregcolor, mark=*, mark size=2, mark options={solid}]
table {%
1 0.02
2 0.379
3 0.52
};
\addlegendentry{No Reg}
\addplot [semithick, gradregcolor, mark=*, mark size=2, mark options={solid}]
table {%
1 0.428
2 0.52
3 0.54
};
\addlegendentry{GR}
\end{axis}

\end{tikzpicture}